\documentclass{scrartcl}

\usepackage{hyperref}
\usepackage{amsmath,amssymb}
\usepackage{amsthm}
\usepackage[utf8]{inputenc}
\usepackage{cleveref}
\usepackage{bbm}
\usepackage{color}
\usepackage[ruled]{algorithm2e}
\usepackage{graphicx}
\usepackage{caption}
\usepackage{subcaption}
\usepackage[marginal]{footmisc}

\newtheorem{theorem}{Theorem}
\newtheorem{lemma}{Lemma}
\newtheorem{remark}{Remark}

\newtheorem{proposition}{Proposition}

\newcommand{\trace}{\text{Tr}\,}
\newcommand{\R}{\mathbb{R}}

\newcommand{\Id}{\text{Id}}
\newcommand{\E}{\mathbb{E}}

\newcommand{\Aop}[1]{\mathcal{A} \left( #1 \right)}
\newcommand{\Aopaux}[1]{\tilde{\mathcal{A}} \left( #1 \right)}
\newcommand{\innerproduct}[1]{ \langle  #1 \rangle }
\newcommand{\froinnerproduct}[1]{ \langle  #1 \rangle_F }
\newcommand{\fronorm}[1]{ \Vert  #1 \Vert_F }

\newcommand{\ut}{u_t}
\newcommand{\vt}{v_t}
\newcommand{\utplushalf}{u_{t+1/2}}
\newcommand{\vtplushalf}{v_{t+1/2}}
\newcommand{\utplus}{u_{t+1}}
\newcommand{\vtplus}{v_{t+1}}
\newcommand{\utaux}{\tilde{u_t}}
\newcommand{\vtaux}{\tilde{v_t}}
\newcommand{\utplushalfaux}{\tilde{u}_{t+1/2}}
\newcommand{\vtplushalfaux}{\tilde{v}_{t+1/2}}
\newcommand{\utplusaux}{\tilde{u}_{t+1}}
\newcommand{\vtplusaux}{\tilde{v}_{t+1}}
\newcommand{\mutplushalf}{\mu_{t+1/2}}

\newcommand{\vthat}{v_{ \hat{t} }}
\newcommand{\utplushalfhat}{u_{\hat{t}+1/2}}
\newcommand{\utplushat}{u_{\hat{t}+1}}

\newcommand{\eeta}{\eta}

\newcommand{\ct}{c_{2t}}

\newcommand{\bracing}[2]{\underset{{#1}}{\underbrace{#2}}  }
\newcommand{\twonorm}[1]{\Vert  #1 \Vert}
\newcommand{\overleq}[1]{\overset{#1}{\le}}
\newcommand{\overeq}[1]{\overset{#1}{=}}

\newcommand{\ustar}{u_\star}
\newcommand{\vstar}{v_\star}
\newcommand{\Xstar}{X_\star}

\begin{document}

\title{Randomly Initialized Alternating Least Squares: Fast Convergence for Matrix Sensing}

\author{Kiryung Lee and Dominik St\"oger\thanks{Corresponding author (email: Dominik.Stoeger@ku.de).} \thanks{KL is with the Department of Electrical and Computer Engineering at the Ohio State University. DS is with the Department of Mathematics and the Mathematical Institute for Machine Learning and Data Science (MIDS) at KU Eichst\"att-Ingolstadt. KL was supported in part by NSF CAREER Award CCF 19-43201.}}

\maketitle

\begin{abstract}
We consider the problem of reconstructing rank-one matrices from random linear measurements, a task that appears in a variety of problems in signal processing, statistics, and machine learning.
In this paper, we focus on the Alternating Least Squares (ALS) method. 
While this algorithm has been studied in a number of previous works, most of them only show convergence from an initialization close to the true solution and thus require a carefully designed initialization scheme. 
However, random initialization has often been preferred by practitioners as it is model-agnostic.
In this paper, we show that ALS with random initialization converges to the true solution with $\varepsilon$-accuracy in $O(\log n + \log (1/\varepsilon)) $ iterations using only a near-optimal amount of samples, where we assume the measurement matrices to be i.i.d. Gaussian and where by $n$ we denote the ambient dimension.
Key to our proof is the observation that the trajectory of the ALS iterates only depends very mildly on certain entries of the random measurement matrices.
Numerical experiments corroborate our theoretical predictions.
\end{abstract}

\section{Introduction}

\subsection{Alternating minimization and low-rank matrix recovery problems}

Suppose we are given observations of the form
\begin{equation}\label{equ:linearsystem}
y_i = \froinnerproduct{A_{i}, \Xstar} := \trace \left( A_i^\top \Xstar \right), \quad  1\le i \le m 
\end{equation}
with known measurement matrices $\{A_i\}_{i=1}^m \subset \mathbb{R}^{n_1\times n_2}$ and our goal is to estimate an unknown low-rank matrix $\Xstar \in \mathbb{R}^{n_1\times n_2}$, i.e., $\text{rank} \left( \Xstar \right) =r \ll \min \left\{ n_1; n_2 \right\} $.
This problem is ubiquitous in many applications such as matrix completion, blind deconvolution, and phase retrieval. We refer to \cite{davenport2016overview} for a comprehensive overview.
Different approaches to this problem have been established in the literature ranging from convex methods such as nuclear norm minimization to non-convex methods based on matrix factorization such as gradient descent and alternating minimization.

The method we want to consider in this paper is the Alternating Least Squares (ALS) method. That is, we consider the non-convex loss function
\begin{equation}\label{equ:minimization_problem}
    f(U,V):= \frac{1}{2m} \sum_{i=1}^m \left( y_i - \innerproduct{A_i, UV^\top}_F \right)^2,
\end{equation}
where $ U \in \mathbb{R}^{n_1 \times r} $ and  $ V \in \mathbb{R}^{n_2 \times r}$, and we alternate between updating $U$ and $V$, i.e.,
\begin{equation}\label{intro_altmin}
\begin{split}
    U_{t+1} &= \mathop{\mathrm{argmin}}_{U \in \mathbb{R}^{n_1 \times r}} \ f\left(U,V_t\right),\\
    V_{t+1} &= \mathop{\mathrm{argmin}}_{V \in \mathbb{R}^{n_2 \times r}} \ f\left(U_{t+1},V\right).
\end{split}
\end{equation}
In each step, one needs to solve a linear least-squares problem, which can be achieved efficiently via the conjugate gradient method, see, e.g., \cite{trefethen}.

For low-rank matrix recovery, the ALS method has first been proposed in \cite{haldar2009rank}. Later, it was shown that given an initialization close to the ground truth, the ALS method converges linearly to the ground truth solution using a near-optimal amount of samples for the Matrix Sensing and Matrix Completion problem \cite{jain2013low}. Moreover, it was shown that such an initialization can be constructed via a so-called spectral initialization.

However, while the ALS method is popular among practitioners, they often use a random initialization for the ALS method instead of a spectral initialization, see, e.g., \cite{hastie_matrixcompletion}. One advantage is that random initialization is model-agnostic in contrast to spectral methods. However, despite its importance in practice, the convergence of ALS from random initialization remains poorly understood. Existing theory either shows convergence starting from spectral initialization \cite{jain2013low,sun_matrixcompletion} or with resampling, i.e., that for each iteration fresh samples are used, see, e.g., \cite{hardt_matrixcompletion1,hardt_matrixcompletion2}.

\subsection{Our contribution}

In this paper, we show that, if the $A_i$'s are i.i.d. Gaussian measurement matrices and if $\Xstar \in \mathbb{R}^{n_1 \times n_2}$ is a rank-one matrix, then ALS with random initialization converges to the ground truth in $O (\frac{\log n_2 + \log (1/\varepsilon)}{\log \log n_2}) $ iterations to $\varepsilon$-accuracy using only a near-optimal amount of measurements.
Note that the scenario that the ground truth matrix $\Xstar$ is a rank-one matrix indeed appears in many applications such as Blind Deconvolution and Phase Retrieval.
To the best of our knowledge, this is the first result in the literature that shows that the ALS iterates for low-rank matrix recovery converge to the true solution starting from random initialization (without resampling at each iteration).

In our analysis, we establish that the convergence of ALS can be separated into two distinct phases.
In the first phase, we show that, starting from an initialization that is near-orthogonal to the ground truth, the angle between the true solution and the ALS-iterates is decreasing. More precisely, we show that the cosine of this angle is growing at a geometric rate.
As soon as our signal is aligned closely enough with the ground truth signal, we enter the second phase. In this phase, our iterates converge linearly to the ground truth.
All of this is corroborated by numerical experiments, see Figure \ref{fig:iter}, which indeed confirm that there is a sharp phase transition between those two phases. 

We note that linear convergence in the second phase can essentially be deduced from the aforementioned previous work \cite{jain2013low}.
Hence, the key difficulty in proving convergence of ALS from random initialization lies in rigorously establishing the fact that the alignment of the iterates with the true signal is increasing in the first phase.
One major obstacle is that there exist many saddle points in minimization of the quadratic loss in \eqref{equ:minimization_problem}, see \cite{bhojanapalli2016global}.
In particular, it is not clear whether the iterates of ALS can avoid such saddle points.

Our analysis establishes that, with high probability, ALS does not get stuck in saddle points in the first phase. For that, we will show that, in the first phase, the ALS iterates are nearly independent of certain entries in the measurement matrices $A_i$. This allows us to make much stronger statements than what would be possible by, for example, solely relying on the loss landscape of $f$.
To establish the ``near-independence'' of the iterates to certain entries of the measurement matrices, we will construct an appropriate (virtual) auxiliary sequence. 
Our construction is inspired by the use of auxiliary sequences in \cite{chen2019gradient} to show convergence of gradient descent from a random initialization in the phase retrieval problem.
However, since the ALS method behaves quite differently than gradient descent, the resulting proofs are also quite different.

We believe that the insights and proof techniques developed in this paper will also pave the way for understanding the convergence of ALS starting from random initialization in scenarios where the rank of the underlying signal is larger than one or where more structured measurement matrices are used, for example, in the problem of Blind Deconvolution.
\begin{figure}
    \centering
    \begin{subfigure}[b]{0.49\textwidth}
        \centering
        \includegraphics[width=\textwidth]{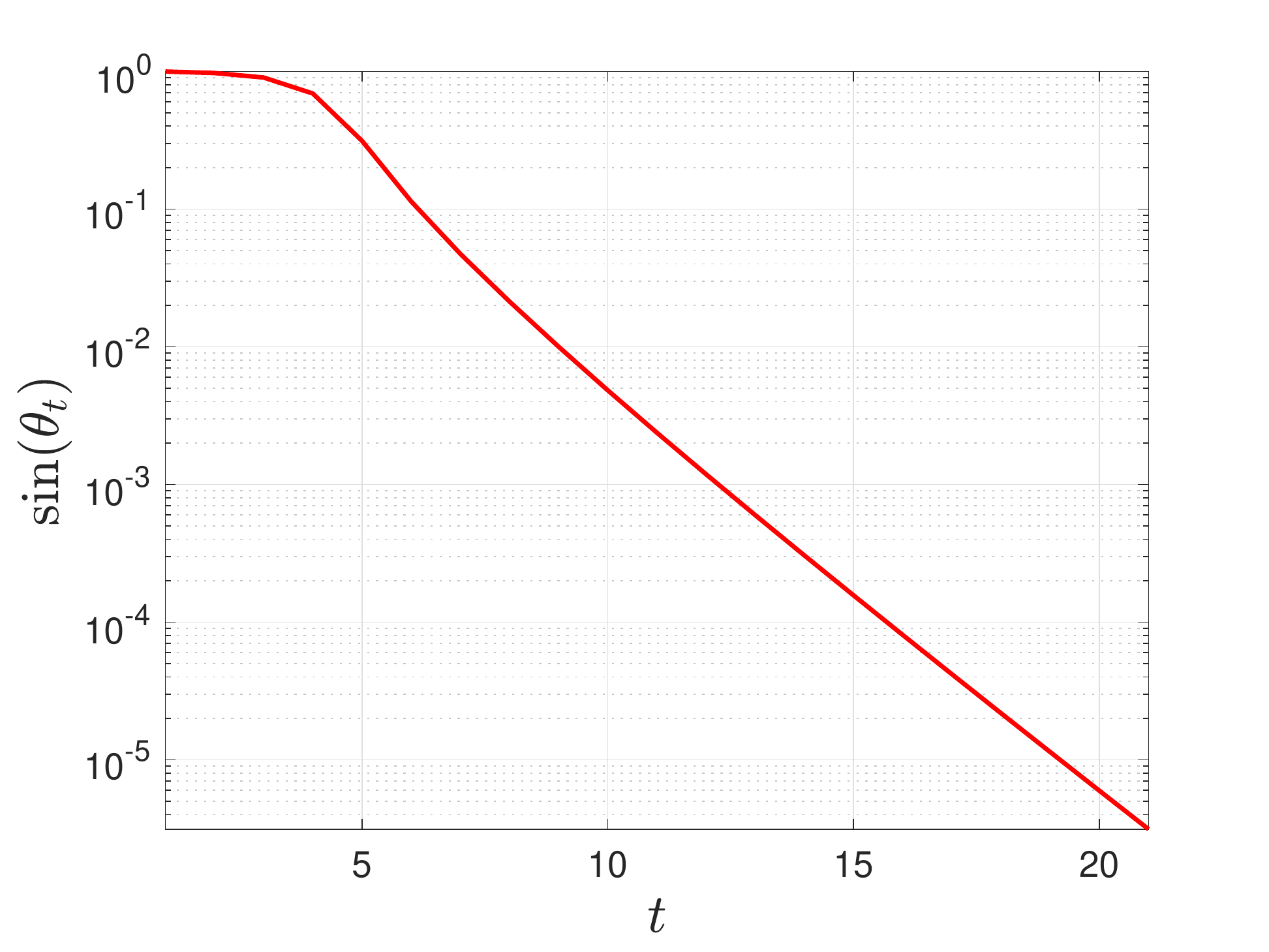}
        \caption{}
    \end{subfigure}
    \hfill
    \begin{subfigure}[b]{0.49\textwidth}
        \centering
        \includegraphics[width=\textwidth]{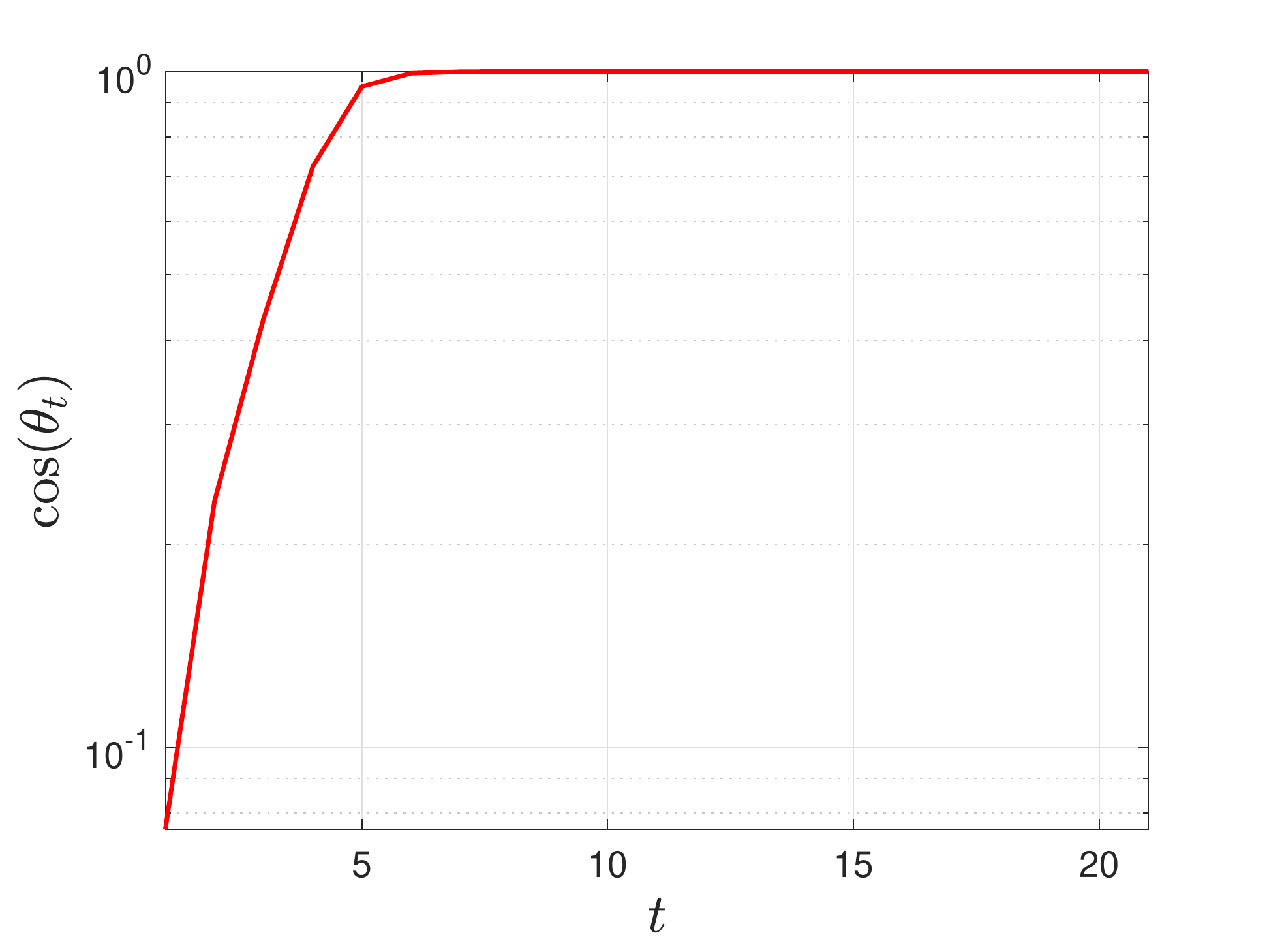}
        \caption{}
    \end{subfigure} \\
    \begin{subfigure}[b]{0.49\textwidth}
        \centering
        \includegraphics[width=\textwidth]{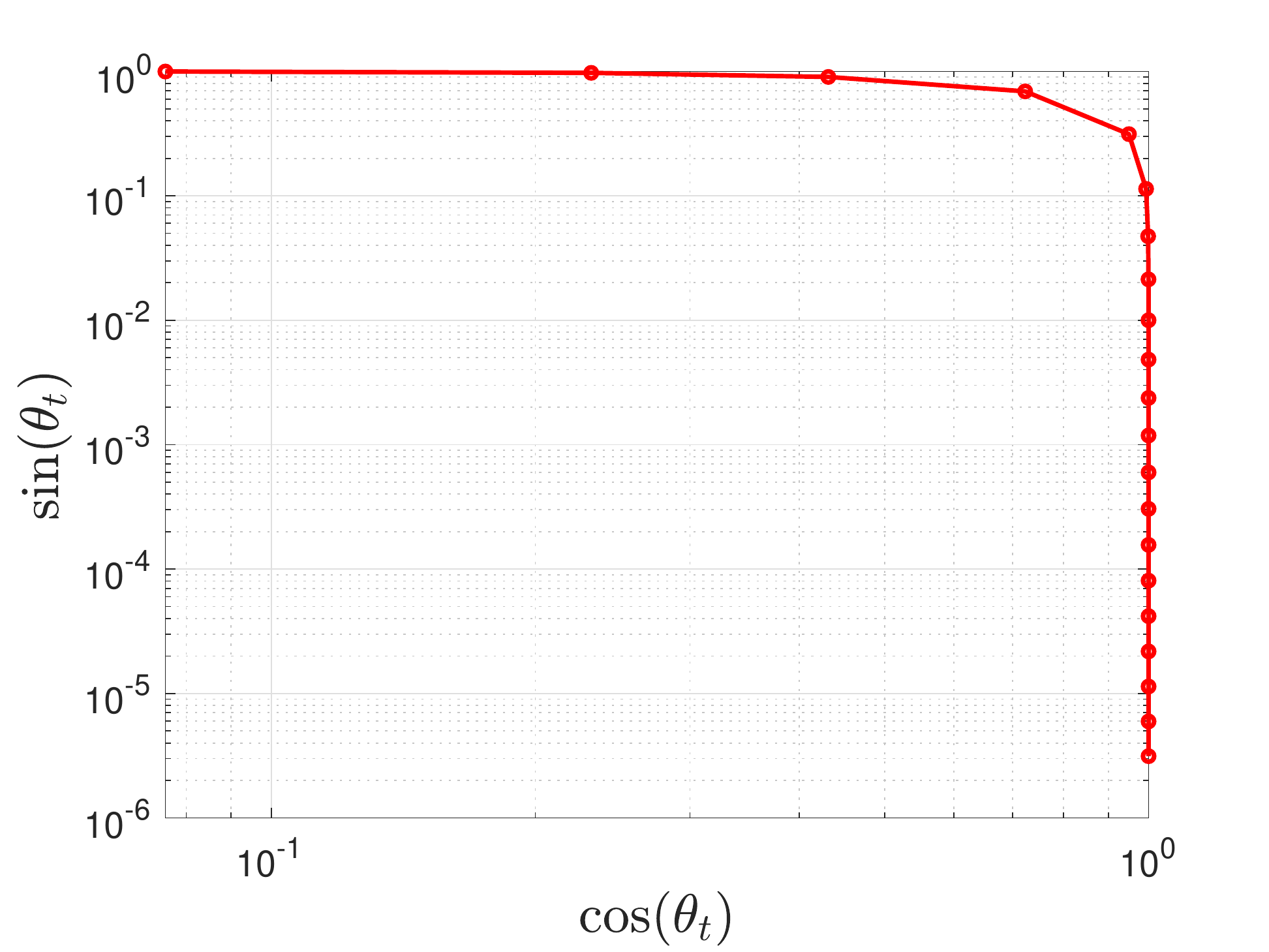}
        \caption{}
    \end{subfigure}
    \caption{Evolution of the iterates by randomly initialized ALS: The size of the ground truth matrix $X_0=\ustar \vstar^\top$ is given by $n_1 = n_2 = 256$. The number of measurements is $m = 3 (n_1+n_2)$. By $t$ we count the iterates. The estimation error is measured by the angle $\theta_t$ between $\vt$ and $\vstar$. (a) $\sin(\theta_t)$ vs number of iterations $t$; (b) $\cos(\theta_t)$ vs number of iterations $t$; (c) $\sin(\theta_t)$ vs $\cos(\theta_t)$.}
    \label{fig:iter}
\end{figure}

\section{Problem formulation} 
We consider the problem of estimating a rank-one matrix $\Xstar \in \mathbb{R}^{n_1 \times n_2}$ from $m$ random linear measurements given by equation \eqref{equ:linearsystem}.
In the following, we are going to assume that the measurement matrices $A_i \in \mathbb{R}^{n_1 \times n_2}$ are independent copies of a random matrix with i.i.d. entries following the standard normal distribution $\mathcal{N}(0,1)$. 

We  define the linear measurement operator $ \mathcal{A}: \R^{n_1 \times n_2} \rightarrow \mathbb{R}^m $ by
\begin{equation}
\label{def:calA}
\Aop{X} := \left( \frac{1}{\sqrt{m}} \froinnerproduct{A_{i}, X} \right)_{i \in [m]},
\end{equation}
where $\froinnerproduct{A_i,X} = \trace(A_i^\top X)$ denotes the Frobenius inner product between $A_i \in \mathbb{R}^{n_1 \times n_2}$ and $X \in \mathbb{R}^{n_1 \times n_2}$ and $[m] := \{1,2,\dots,m\}$.
Since $\Xstar$ is a rank-one matrix, we can assume without loss of generality that $\Xstar = \ustar \vstar^\top$ for some $ \ustar \in \mathbb{R}^{n_1}  $ and $ \vstar \in \mathbb{R}^{n_2} $.
This implies that the equation  \eqref{equ:linearsystem} can be equivalently written as
\begin{equation*}
    y= \mathcal{A} \left( \ustar \vstar^\top \right).
\end{equation*}
Moreover, note that using this notation equation \eqref{equ:minimization_problem} can be written equivalently as
\begin{equation}
\label{eq:ncp}
    \mathop{\mathrm{minimize}}_{u,v} f(u,v) := \frac{1}{2} \, \Vert y - \Aop{uv^\top} \Vert^2.
\end{equation}
We will consider a solution to \eqref{eq:ncp} by an Alternating Least Squares (ALS) method given in Algorithm~\ref{alg:als}. 

\begin{algorithm}[H]
\SetAlgoLined
\caption{Alternating Least Squares}
\label{alg:als}
\DontPrintSemicolon
\KwIn{linear measurement operator $\mathcal{A}: \mathbb{R}^{n_1 \times n_2} \rightarrow \mathbb{R}^m$, observation vector $y \in \mathbb{R}^{m}$, random initialization $v_0 \in \mathbb{R}^{n_2}$}
 \For{$t = 1,2, \ldots$}{
	$\utplushalf = \mathop{\mathrm{argmin}}_u \Vert y - \Aop{u \vt^\top}  \Vert^2$\;
	$\utplus = \utplushalf/ \Vert \utplushalf \Vert$\;
	$\vtplushalf = \mathop{\mathrm{argmin}}_v \Vert y - \Aop{ \utplus v^\top}  \Vert^2$\;
	$ \vtplus = \vtplushalf/ \Vert \vtplushalf \Vert$\;
 }
\end{algorithm}
Note that compared to \eqref{intro_altmin} there is an additional normalization step in Algorithm \ref{alg:als}. However, we have added it only for the sake of convergence analysis and this normalization step is not required for the reconstruction of $\Xstar$.

\section{Main result}
Our main result states that if the initialization vector $v_0 \in \mathbb{R}^{n_2}$ is chosen at random from the sphere with uniform distribution, then ALS converges to the true solution with high probability.
\begin{theorem}[Convergence of ALS]
\label{thm:main}
Let $\ustar \in \mathbb{R}^{n_1} \setminus \left\{ 0 \right\}$ and $ \vstar  \in \mathbb{R}^{n_2} \setminus \left\{ 0 \right\}$.  Let $\mathcal{A}: \mathbb{R}^{n_1 \times n_2} \rightarrow \mathbb{R}^m$ be the measurement operator as defined in \eqref{def:calA}, where $A_1,\dots,A_m \in \mathbb{R}^{n_1 \times n_2}$ are independent copies of a random matrix whose entries are i.i.d. following $\mathcal{N}(0,1)$. 
Let the observations in $y\in \mathbb{R}^m$ be given by $y= \mathcal{A} \left( \ustar  \vstar^\top  \right)$. 
Let $v_0 \in \mathbb{R}^{n_2}$ be a random initialization vector sampled from the unit sphere with the uniform distribution. 
Then there exists an absolute constant $C>0$ such that if the number of measurements $m$ satisfies
\begin{equation}
\label{eq:samp_complexity}
m \ge C \max(n_1,n_2) \log^{4} n_2, 
\end{equation}
then with probability at least $1 - O(\min(n_1,n_2)^{-1})$ the following holds. 
For every $\varepsilon>0$, after 
\begin{equation}\label{ineq:number_iterations}
t \ge C \left( \frac{ \log n_2}{\log \log n_2} + \frac{\log(1/\varepsilon)}{\log \log n_2} \right)
\end{equation}
iterations, the estimates $v_t$ and $u_t$ from Algorithm~\ref{alg:als} satisfy
\[
\max \left\{ \sin\left(  \angle(u_t,\ustar )\right); \sin\left(  \angle(v_t,\vstar ) \right)  \right\}\leq \varepsilon.
\]
\end{theorem}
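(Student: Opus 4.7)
Following the roadmap sketched in the introduction, I split the analysis into two phases. Phase~2 (starting once $\sin\angle(\vt,\vstar)\le 1/2$, say) essentially reduces to the local analysis of \cite{jain2013low}: under the sample complexity~\eqref{eq:samp_complexity}, $\mathcal{A}$ satisfies an $r=2$ restricted isometry property with constant $\delta\lesssim (\log n_2)^{-2}$, which translates into a per-step contraction of the form $\sin\angle(\vtplus,\vstar)\le \rho\,\sin\angle(\vt,\vstar)$ with $\rho \lesssim (\log n_2)^{-2}$. Iterating this contraction explains the $\log(1/\varepsilon)/\log\log n_2$ summand of \eqref{ineq:number_iterations}, and the real difficulty of the theorem therefore lies in Phase~1.

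For Phase~1, I would start from the closed-form ALS update via the normal equations,
\[
\utplushalf = M_t^{-1} b_t,\quad M_t := \frac{1}{m}\sum_{i=1}^{m} A_i \vt \vt^\top A_i^\top,\quad b_t := \frac{1}{m}\sum_{i=1}^{m} (\ustar^\top A_i \vstar)\, A_i \vt,
\]
and its symmetric counterpart for $\vtplushalf$. A direct moment computation gives $\E M_t = \|\vt\|^2 I_{n_1}=I_{n_1}$ and $\E b_t = \langle \vt,\vstar\rangle\, \ustar$. Setting $\alpha_t := \langle \vt,\vstar\rangle/\|\vstar\|$, one obtains the decomposition $\utplushalf\approx \alpha_t\ustar + r_t$ where Gaussian concentration puts $\|r_t\|$ at the scale $\sqrt{n_1/m}$ but $|\langle r_t,\ustar\rangle|$ at the much smaller scale $1/\sqrt{m}$. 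Since a random unit $v_0$ satisfies $|\alpha_0|\gtrsim 1/\sqrt{n_2\log n_2}$ with high probability, and since~\eqref{eq:samp_complexity} forces noise to dominate signal in the $\ell_2$-sense throughout Phase~1 (i.e., $\sqrt{n_1/m}\gg|\alpha_t|$), the normalization step in Algorithm~\ref{alg:als} yields the heuristic amplification
\[
|\langle \utplus,\ustar\rangle|/\|\ustar\| \;\gtrsim\; |\alpha_t|\,\sqrt{m/n_1},
\]
and symmetrically for the subsequent $\vt$-update, so a single full ALS step amplifies the alignment by a factor of order $m/\sqrt{n_1 n_2}\gtrsim \log^4 n_2$. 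Starting from $|\alpha_0|\sim 1/\sqrt{n_2}$, this reaches a constant in $O(\log n_2/\log\log n_2)$ iterations, precisely matching~\eqref{ineq:number_iterations}.

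The technical heart of the proof is to rigorously justify the amplification display, which demands sharp two-sided control of $\langle \ustar, M_t^{-1}b_t\rangle$ rather than the loose $O(\sqrt{n_1/m})$ bound afforded by uniform RIP/concentration arguments -- an error of that magnitude along $\ustar$ would annihilate the signal $\alpha_t$. The obstruction is that $\vt$ depends non-linearly on every $A_i$, ruling out plain concentration at the random point $\vt$. To circumvent this, I would, following the authors' hint and the leave-one-out philosophy of \cite{chen2019gradient}, construct an auxiliary pair $(\utaux,\vtaux)$ that runs ALS against a modified operator $\Aopaux{\cdot}$ in which the rank-one component of each $A_i$ in the $\hat\ustar\hat\vstar^\top$-direction -- equivalently, the scalar $\ustar^\top A_i\vstar$ -- is independently re-randomized. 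Then $\vtaux$ is, by construction, independent of those dangerous scalars, so sharp concentration of $\langle \ustar,b_t\rangle$ and $\langle \ustar, M_t^{-1}b_t\rangle$ becomes available when these are evaluated along $\vtaux$ rather than $\vt$. The step I expect to be hardest is the inductive stability estimate $\|\vt-\vtaux\|\ll|\alpha_t|$, propagated throughout all $O(\log n_2/\log\log n_2)$ Phase~1 iterations: the $M_t^{-1}$ factor couples the two trajectories through \emph{every} $A_i$, not only the re-randomized coordinate, so a delicate per-step perturbation analysis of consecutive least-squares solves, together with a union bound robust to the polylogarithmic number of iterations, is needed to close the induction. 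Combining this inductive control with the Phase~1 amplification and the Phase~2 contraction yields Theorem~\ref{thm:main}.
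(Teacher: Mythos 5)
Your two-phase skeleton, the use of the normal equations, and the overall leave-one-out strategy all match the paper. The Phase~2 reduction to RIP-based local contraction and the Phase~1 heuristic amplification $|\langle\utplus,\ustar\rangle|\gtrsim|\alpha_t|/\delta$ are also essentially right (though your per-step gain $m/\sqrt{n_1 n_2}\sim\log^4 n_2$ overshoots the paper's $\log^2 n_2$; this only affects constants in the iteration count).

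The genuine gap is in which entries of $A_i$ you re-randomize to build the auxiliary sequence. You propose to re-randomize the scalar $(A_i)_{1,1}=\ustar^\top A_i\vstar$ and keep everything else. The paper does the opposite: after the WLOG reduction $\ustar=e_1$, $\vstar=e_1$, it \emph{keeps} $(A_i)_{1,1}$ and the $(n_1-1)\times(n_2-1)$ lower-right block, and \emph{re-randomizes} the off-diagonal blocks $O_i := \ustar\ustar^\top A_i(I-\vstar\vstar^\top)+(I-\ustar\ustar^\top)A_i\vstar\vstar^\top$ (the first row and first column excluding the $(1,1)$ corner). This choice is not cosmetic. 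When you project the normal equation onto $\ustar$ and expand $\utplushalf v_t^\top-\ustar\vstar^\top$ into orthogonal rank-one pieces, the term that uniform RIP cannot adequately control is
\[
\left\langle \mathcal{A}\left(\ustar (v_t^\perp)^\top\right),\ \mathcal{A}\left(\utplushalf^\perp (v_t^\perp)^\top\right)\right\rangle
= \frac{1}{m}\sum_{i=1}^m \left\langle O_i^\top e_1, v_t^\perp\right\rangle\left\langle D_i,\ \utplushalf^\perp(v_t^\perp)^\top\right\rangle ,
\]
a cross term coupling the first-row entries $O_i^\top e_1$ of $A_i$ to the lower-right block. To upgrade the RIP bound $O(\delta^2)$ (which is $\gg 1/\sqrt{n_2}\approx\lambda_0$ and hence too weak) to the needed $O(\sqrt{\log T/m})$, the paper conditions on $\{D_i\}$, $\{\utplushalfaux\}$, $\{\vtaux\}$ and treats $\{O_i^\top e_1\}$ as a fresh Gaussian — valid precisely because $\utplushalfaux,\vtaux$ are independent of the off-diagonal blocks. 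Under your construction, $\vtaux$ and $\utplushalfaux$ still depend on every $O_i$, so this conditional Gaussianity fails; and since $(A_i)_{1,1}$ does not even appear in the cross term, the independence your construction does provide buys nothing here. (Your choice would handle the other cross term $\langle\mathcal{A}(\ustar(v_t^\perp)^\top),\mathcal{A}(\ustar\vstar^\top)\rangle=\frac{1}{m}\sum_i (A_i)_{1,1}\langle O_i^\top e_1, v_t^\perp\rangle$, since conditioning on everything except $(A_i)_{1,1}$ does make that sum Gaussian; but it is the first cross term that breaks.) So the ``dangerous scalars'' to decouple from are the off-diagonal entries $O_i$, not $(A_i)_{1,1}$, and the auxiliary construction needs to be flipped accordingly.
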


There are a few remarks in order regarding Theorem \ref{thm:main}. We first note that the required sample-complexity \eqref{eq:samp_complexity} is optimal up to $\log$-factors. Indeed, the numbers of degrees of freedom of the unknown rank-one matrix  $\ustar \vstar^{\top} \in \mathbb{R}^{n_1 \times n_2}$  is $n_1+n_2-1$ and hence we need to have at least at the order of
$\max \left\{ n_1;n_2 \right\}$ 
measurements in order to recover the underlying ground-truth matrix (see also \cite{kech_krahmer}).

An upper bound on the number of iterations to achieve $\varepsilon$-accuracy is given by inequality \eqref{ineq:number_iterations}. As already mentioned in the introduction, our proof shows that convergence can be separated into two distinct phases.
Moreover, as it will become clear from our proof the two summands in \eqref{ineq:number_iterations} can be attributed to Phase 1 and Phase 2 as follows
\begin{equation}\label{equ:number_iterations}
  \underbrace{ C \, \frac{ \log n_2}{\log \log n_2}}_{\text{Phase 1}} + \underbrace{C \, \frac{\log(1/\varepsilon)}{\log \log n_2} }_{\text{Phase 2}}.
\end{equation}
Since the initialization vector $v_0$ is sampled from the sphere with uniform distribution, we expect that
\begin{equation*}
    \vert \innerproduct{v_0, \vstar } \vert \approx 1/\sqrt{n_2}.
\end{equation*}
Hence, we start with an initialization with is near-orthogonal to the ground truth.
However, as the \eqref{equ:number_iterations} shows we only need $O (\frac{ \log n_2}{\log \log n_2})$ iterations to obtain an iterate which is closely aligned with the ground truth.
After that, we enter the second phase. In this phase, ALS converges linearly to the ground truth as can be seen from the corresponding upper bound on the number of iterations $ O \left( \frac{ \log(1/\varepsilon)}{\log \log n_2} \right) $.

At the end, we stress again that crucially all of this is proven without the need for sample splitting, i.e., for each ALS step the same measurements are used.

\section{Related Work and Discussion}\label{Sec:related_work}
There has been a flurry of work on low-rank matrix recovery over the last fifteen years.  
For this reason, we will only provide a selective overview of the topic, highlighting the results which are most relevant to our work.
In fact, many different algorithmic approaches have been proposed for the low-rank matrix recovery problem.
The nuclear norm minimization approach \cite{recht2010guaranteed} has been studied for Matrix Completion in \cite{candes_matrixcompletion,candestao_matrixcompletion,gross_matrixcompletion}, for Phase Retrieval in \cite{candesphaselift1,mahdiphaselift,gross_phasellift}, for  Robust PCA in \cite{candes2011robust}, and for Blind Deconvolution in \cite{ahmed_blinddeconv} as well as its extension to the Blind Demixing problem in \cite{strohmer_demixing,jung_demixing}. We refer also to the overview article \cite{fuchs_overview} for further pointers to the literature.
Several other approaches, which have been proposed in the literature, are the projected gradient method \cite{jain2010guaranteed}, the iterative greedy algorithm \cite{lee2010admira}, and the Iteratively Reweighted Least Squares (IRLS) algorithm \cite{massimo_irls,mohan_irls}.

In recent years, there has been a flurry of work on non-convex approaches based on matrix factorization due to their small memory footprint and their low computational burden. These approaches can roughly be categorized as first-order methods based on gradient descent, e.g. \cite{GD_wirtinger,tu2016low}, and as methods based on alternating least squares (ALS) \cite{jain2013low}, which is also the method studied in this paper. We refer to \cite{chi2019nonconvex} for an overview of non-convex approaches based on matrix factorization.

\textbf{Non-convex gradient descent:} Non-convex methods based on gradient descent have been studied for the Matrix Sensing problem \cite{tu2016low}, for Blind Deconvolution \cite{GD_blinddeconv,gd_blinddeconv2}, its extension to Blind Demixing \cite{GD_blinddemixing} as well as for the Phase Retrieval problem \cite{GD_wirtinger,gd_phaseretrieval}.
However, all of these papers above only guarantee local convergence for gradient descent.
That is, convergence is only guaranteed if one picks initialization in a neighborhood of the true solution.
In most of these works, such an initialization is constructed via a so-called spectral initialization.

To obtain more insights into the global convergence properties of non-convex gradient descent based on matrix factorization people started to analyse the landscape of the loss function.
More precisely, this line of research tries to show that the landscape is benign in the sense that (i) all local minima are in fact global minima and (ii) saddle-points have at least one direction of strictly negative curvature. For the matrix sensing problem \cite{bhojanapalli2016global}, for the phase retrieval problem \cite{phaseretrieval_landscape}, and for the matrix completion problem \cite{matrixcompletion_landscape1,matrixcompletion_landscape2} it has been shown that the landscape of the loss function is benign. 
In \cite{saddle_avoidance} it has been shown that properties (i) and (ii) already imply convergence of gradient descent to a global minimum. However, \cite{du2017gradient} provides an example, that shows that this property does not rule out exponentially slow convergence. In particular, this means that properties (i) and (ii) do not guarantee convergence in polynomial time. 
Motivated by this, in \cite{chen2019gradient} the authors showed that in the Phase Retrieval problem with Gaussian measurement vectors gradient descent converges to the ground truth starting from random initialization by using a near-optimal amount of iterations and measurements.
In the case of symmetric low-rank matrix sensing, this was also shown in \cite{ma_algorithmic,stoger2021small}. However, these results require a random initialization which is chosen sufficiently small. 
For the asymmetric scenario, similar results \cite{ye_algorithmic, jiang_algorithmic} have only recently been obtained for the population loss case.
It remains an open problem to show an analogous result in the finite sample case.

\textbf{Alternating Least Squares:} In general, ALS has been widely used in a broad class of applications including low-rank approximation of data \cite{kroonenberg1980principal} and imaging \cite{o2007alternating}. 
In the context of low-rank matrix recovery, ALS approaches are arguably less well studied than methods based on gradient descent. There are several papers that study ALS (or some variants) for the matrix completion problem. However, these works either require fresh samples at each other iteration \cite{hardt_matrixcompletion1,hardt_matrixcompletion2,zhao_matrixCompletion} or they  show local convergence starting from a spectral initialization \cite{sun_matrixcompletion}. 

In \cite{kiryung_powerfactorization,deconv_powerfactorization,jakob_powerfactorization}, the authors propose to use alternating minimization combined with a projection step to recover a rank-one matrix with sparse entries from linear random measurements. 
However, their analysis requires an initialization close to the ground truth, which is a major bottleneck in the analysis. 
It is an interesting avenue for future work to see whether our analysis can also be extended to this algorithm.

For the phase retrieval problem, the Error Reduction (ER) algorithm has been proposed \cite{fienup1982phase,gerchberg1972practical}. While this method can be interpreted as an alternating minimization method, it is different from the ALS algorithm studied in this paper. 
Local convergence from a spectral initialization for the ER algorithm, in a setting where the measurements are Gaussian, has been first established in \cite{phase_netrapalli}, the analysis in this paper requires fresh samples for each iteration. This assumption has been removed by Waldspurger in \cite{phaseretrieval_waldspurger}, which showed local converge without sample splitting. Convergence from a random initialization has been established in \cite{phaseretrieval_zhang}, however using a (suboptimal) sample size at the order of $ n^{3/2}$.

The above discussion illustrates that our understanding of global convergence of non-convex methods in low-rank matrix recovery is still in its infancy. This paper contributes to this line of research by establishing the first convergence result from random initialization for the ALS method.

\textbf{Auxiliary sequences:} As already discussed in the introduction, in this paper, we construct a (virtual) auxiliary sequence to establish mild dependence of our ALS iterates on certain entries of the measurement matrices. For optimization tasks, such auxiliary sequences appeared before in  \cite{zhong_leaveoneout}, where the authors used a slightly different construction (leave-one-out sequences) to establish that the iterates depend only weakly on the individual measurements.
In \cite{chen_leaveoneout,li_leaveoneout}, the authors used leave-one-out sequences to show that gradient converges fast to the global optimum, when initialized in a local neighborhood, in several low-rank matrix recovery problems.
In \cite{ding_leaveoneout}, leave-one-out sequences were used to improve bounds for the required sample complexity of the nuclear norm minimization approach in matrix completion.

\section{Proof ideas and auxiliary sequences}

In this section, we illustrate the main ideas for proving Theorem~\ref{thm:main}. We will also introduce some necessary notation. Moreover, we will define a (virtual) auxiliary sequence, which will be a key ingredient in our proof.

\subsection{Notation}
Without loss of generality, we assume throughout the proof that $\Vert \ustar  \Vert= \Vert \vstar  \Vert =1 $.
Furthermore, we set $n:=\max(n_1,n_2)$. 
Moreover, the following shorthand notations will be used throughout this section. 
We consider the orthogonal decomposition of $\ut$ given by $\ut = u^{\parallel}_t + \ut^\perp$, where $u^{\parallel}_t :=  \mu_t \ustar$ with $\mu_t := \innerproduct{\ustar , u_t}$ and $ \ut^\perp := \ut - \ut^\parallel $ denote the projection of $\ut$ into the subspace spanned by $\ustar$ and its orthogonal complement. 
Consequently, $\|\ut^\parallel\|$ and $\|\ut^\perp\|$ respectively correspond to the cosine and sine of the angle between $\ut$ and $\ustar$. 
These will be used as metrics for convergence. 
Similarly, $\vt$ is decomposed as $\vt= v^{\parallel}_t + \vt^\perp$, where $v^{\parallel}_t :=  \lambda_t \vstar$ with $\lambda_t := \innerproduct{\vstar , v_t}$ and $ \vt^\perp := \vt - \vt^\parallel  $.
In an analogous fashion, we set $\mutplushalf= \innerproduct{\ustar ,\utplushalf}$. Then we have that
$\utplushalf = \utplushalf^{\parallel} + \utplushalf^{\perp}$, where $\utplushalf^{\parallel} := \mutplushalf \ustar$ and $\utplushalf^{\perp} := \mutplushalf- \utplushalf^{\parallel}$.

By $C>0$ we denote an absolute numerical constant, whose value may change from line to line.

\subsection{First-order necessary conditions}
Suppose that $\vt \in \mathbb{R}^{n_2}$ is given and that $\utplus$ is calculated via Algorithm \ref{alg:als}. Then it must hold that
\begin{equation*}
\nabla_{u} f \left( \utplushalf, \vt \right) =0.
\end{equation*}
By explicitly calculating the gradient it follows that
\begin{equation*}
\left[ \mathcal{A}^* \Aop{\utplushalf \vt^\top - \ustar \vstar^\top} \right] \vt
=0.
\end{equation*}
Note that by using $\twonorm{\vt}=1$ this expression can be rearranged as
\begin{equation}\label{eq:normal}
\utplushalf = \innerproduct{\vt,\vstar} \ustar+ \left[ \left(\Id - \mathcal{A}^* \mathcal{A} \right) \left( \utplushalf v_t^\top  - \ustar  \vstar^\top     \right)   \right] v_t.
\end{equation}
This identity will be used frequently in our analysis.

\subsection{Analysis in population loss}
To gain some intuition, we first consider the scenario where the number of samples $m$ is going to infinite, i.e., the population loss scenario.
Note that since $A_1,\dots,A_m$ are independent copies of a random matrix with i.i.d. standard Gaussian entries, it follows that in the scenario the measurement operator $\mathcal{A}$ is isotropic, i.e., $\E \left[ \mathcal{A}^* \mathcal{A} \right] = \Id $. 
Hence, it follows from Equation \eqref{eq:normal} that in this case
\begin{equation}\label{eq:normal_pop}
\utplushalf= \innerproduct{\vstar  , \vt}\ustar .
\end{equation}
This implies that a single step of Algorithm~\ref{alg:als} exactly recovers $\ustar$ up to a scale factor (under the assumption that $\innerproduct{\vstar  , \vt} \ne 0 $). 
The update on $\vtplushalf$ that follows will provide $u_{t+1} \vtplushalf^\top = \ustar \vstar^\top$. 
In other words, ALS from any nondegenerate initialization converges in a single iteration.

\subsection{Analysis in the finite-sample scenario}

At the sample level, the normal equation in \eqref{eq:normal} deviates from the population-level equation \eqref{eq:normal_pop} by the factor $\left[ \left(\Id - \mathcal{A}^* \mathcal{A} \right) \left( \utplushalf v_t^\top  - \ustar  \vstar^\top     \right)   \right] v_t$.
For this reason, we do not expect that one iteration will recover the signal as in the population loss scenario. 
Nevertheless, in the first convergence phase we aim to show that
\begin{equation}\label{eq:goal}
    \twonorm{ \utplus^\parallel} = \frac{\twonorm{ \utplushalf^\parallel}}{\twonorm{ \utplushalf} } \gg \twonorm{\vt^\parallel },
\end{equation}
meaning that the iterates become more aligned with the ground truth in each iteration.
To show this, we first decompose $\utplushalf$ into its parallel and its perpendicular part, i.e., $\utplushalf = \utplushalf^\parallel + \utplushalf^\perp  $.
We obtain that
\begin{equation}\label{ineq:intern34}
    \utplushalf^\parallel= \innerproduct{\vt,\vstar} \ustar+ \innerproduct{\ustar, \left[ \left(\Id - \mathcal{A}^* \mathcal{A} \right) \left( \utplushalf v_t^\top  - \ustar  \vstar^\top     \right)   \right] v_t} \ustar
\end{equation}
and 
\begin{equation*}
    \utplushalf^\perp = \left( \Id -\ustar \ustar^\top \right)  \left[ \left(\Id - \mathcal{A}^* \mathcal{A} \right) \left( \utplushalf v_t^\top  - \ustar  \vstar^\top     \right)   \right] v_t.
\end{equation*}
A standard approach to deal with the deviation term is to invoke the well-known Restricted Isometry Property (RIP), see Section~\ref{sec:concentration} as well as Lemma~\ref{lemma:perpdecrease}, which yields
\begin{equation}\label{eq:intern1}
\Vert \utplushalf^\perp\Vert \le  \frac{\delta}{1-\delta}\Vert \vt^{\perp}\Vert
\end{equation}
as well as
\begin{equation}\label{eq:intern2}
\big\Vert \utplushalf^\parallel - \innerproduct{\vstar ,v_t} \ustar  \big\Vert \le  \frac{\delta}{1-\delta} \Vert \vt^{\perp}\Vert
\end{equation}
for a RIP-constant $0<\delta < 1$.
While inequality \eqref{eq:intern1} will turn out to be sufficient to show \eqref{eq:goal}, inequality \eqref{eq:intern2} will not suffice.
The reason is that ideally we would like to have that
\begin{equation}\label{eq:intern3}
\twonorm{\utplushalf^{\parallel}} \approx \twonorm{\vt^{\parallel} } = \vert  \innerproduct{\vstar ,v_t}  \vert.
\end{equation}
However, this does not follow from \eqref{eq:intern2}. The reason for this is that we start from random initialization, which yields that $v_0 \in \mathbb{R}^{n_2}$ is almost orthogonal to the ground truth $\vstar$ in the sense that $\twonorm{v_0^\parallel} = \vert \innerproduct{v_0, \vstar} \vert \approx 1/\sqrt{n_2 }$ (and, consequently $\twonorm{v_0^\perp}$ is very close to $1$). 

In particular, this implies that \eqref{eq:intern2} is rather vacuous. 
Hence, we need to find other approaches to deal with the expression
\begin{equation}\label{eq:auxterm1}
\Big\vert  \innerproduct{\ustar, \left[ \left(\Id - \mathcal{A}^* \mathcal{A} \right) \left( \utplushalf v_t^\top  - \ustar  \vstar^\top     \right)   \right] v_t} \ustar \Big\vert
\end{equation}
in \eqref{ineq:intern34}. Note that we obtained inequality \eqref{eq:intern2} via the Restricted Isometry Property (RIP), which is a uniform bound, i.e., it holds for all vectors $ \utplushalf \in \mathbb{R}^{n_1} $ and $\vt \in \mathbb{R}^{n_2} $. In particular, it may be suboptimal for particular choices of $\vt $ and $\utplushalf$. For example, assume for a moment that $\vt $ and $\utplushalf$  would be independent of the measurement operator $\mathcal{A}$ (which of course is not the case). Under this assumption we could hope to derive much stronger concentration bounds than what could be obtained by a uniform estimate induced by the Restricted Isometry Property.

The key insight is that we can indeed establish that $\utplushalf$ and $\vt$ are \textit{nearly independent} of certain entries of the measurement matrices $\left\{ A_i \right\}_{i=1}^m$, which will allow us to go beyond the suboptimal estimates obtained via the Restricted Isometry Property.

More precisely, to show this \textit{near-independence}, we introduce a new set of measurement matrices $\{\tilde{A}_i\}_{i=1}^m$, which are obtained by substituting partial entries of the original measurement matrices as independent copies. 
This allows us to define a new measurement operator $\tilde{\mathcal{A}}$, which is constructed using the new measurement matrices $\{\tilde{A}_i\}_{i=1}^m$.
Then an auxiliary sequence of estimates $(\utaux,\vtaux)$ is obtained from the ALS algorithm starting from the same random initialization $v_0$, but replacing $\mathcal{A}$ with the new measurement operator $\tilde{\mathcal{A}} $. 
For a detailed and precise description of the construction of this auxiliary sequence, we refer to the next subsection. 

Next, we are going to establish that the trajectory of the auxiliary sequence will stay close to the trajectory the original sequence. Using this property, we expect that we can replace the expression \eqref{eq:auxterm1} by
\begin{equation*}
\Big\vert  \innerproduct{\ustar, \left[ \left(\Id - \mathcal{A}^* \mathcal{A} \right) \left( \utplushalfaux \vtaux^\top  - \ustar  \vstar^\top     \right)   \right] \vtaux} \ustar \Big\vert
\end{equation*}
as we expect those terms to be nearly the same. By leveraging that $\utaux $ and $\vtaux $ are independent of certain entries of $ \left\{ A_i \right\}_{i=1}^m $, we can now derive much stronger estimates for the above expression than what would be possible by solely relying on the RIP. 
These estimates allow us to show \eqref{eq:intern3}, from which we can in turn deduce \eqref{eq:goal}.
By inductively repeating these arguments we obtain that our iterates become more and more aligned with the ground truth signal until we enter the second convergence phase.

To show convergence in the second phase we then rely on well-known estimates induced by the Restricted Isometry Property of the measurement operator $\mathcal{A}$.

\subsection{Auxiliary sequences}

As our measurements follow a rotation-invariant distribution, we can assume without loss of generality that $\ustar = e_1 \in \mathbb{R}^{n_1} $ and $\vstar  = e_1 \in \mathbb{R}^{n_2}$. 
Here, with a slight abuse of notation, $e_1$ denotes the first standard basis vector such that the first entry is $1$ and the other entries are $0$. The ambient dimension will be clear from the context. 
We introduce an auxiliary measurement operator $ \tilde{\mathcal{A}} $, which is defined by
\begin{equation*}
\tilde{\mathcal{A}} \left(X\right) := \left( \frac{1}{\sqrt{m}} \froinnerproduct{\tilde{A}_{i}, X} \right)_{i \in [m]}
\end{equation*}
with the matrix $\tilde{A}_i$ given by
\begin{equation*}
(\tilde{A}_i)_{j,k}:= 
\begin{cases}
(A_i)_{j,k} & \text{if } ( j\ne 1 \text{ and } k\ne 1) \text{ or } \left(j,k\right) = \left(1,1\right), \\
(\hat{A}_i)_{j,k} & \text{else},
\end{cases}
\end{equation*}
where $(\hat{A}_i)_{j,k}$ are independent copies of $(A_i)_{j,k}$. We observe that it follows directly from the definition of the operator that 
\begin{equation*}
    y=\tilde{\mathcal{A}} \left( \ustar  \vstar^\top \right)  = \Aop{\ustar \vstar^\top}.
\end{equation*}

\noindent For our analysis we will need the following auxiliary sequences $ \left\{ \utaux \right\}_t $ and $ \left\{ \vtaux \right\}_t $. They are computed via the same algorithm as $ \left\{ \ut \right\} $ and $ \left\{ \vt \right\} $ except that the measurement operator $ \mathcal{A} $ is replaced by $ \tilde{\mathcal{A}} $.
We set $\tilde{v}_0 = v_0 $, that is, the auxiliary sequences start from the same initialization. Then for $ t \geq 0 $, the auxiliary sequences are iteratively updated by alternating least squares in the following four steps: Given $\vtaux$, $\tilde{u}_t$, the updates are computed via
\begin{align*}
\utplushalfaux&:= \mathop{\mathrm{argmin}}_{u \in \mathbb{R}^{n-1}} \Big\Vert y - \Aopaux{u \vtaux^\top}  \Big\Vert^2, 
& \utplusaux&:= \frac{\utplushalfaux}{\Vert \utplushalfaux \Vert}, \\
\vtplushalfaux&:= \mathop{\mathrm{argmin}}_{v \in \mathbb{R}^{n_2}} \Big\Vert y - \Aopaux{ \utaux v^\top}  \Big\Vert^2, & \vtplus&:= \frac{\vtplushalfaux}{\Vert \vtplushalfaux \Vert}. \end{align*}
\noindent Let $\tilde{f}: \mathbb{R}^{n_1} \times \mathbb{R}^{n_2} \rightarrow \mathbb{R}$ be defined by
\begin{equation*}
\tilde{f} \left(u,v\right) := \frac{1}{2} \Big\Vert y - \Aopaux{uv^\top} \Big\Vert^2.
\end{equation*}
Then its gradients with respect to $u$ and $v$ are respectively given by
\begin{align*}
\nabla_u \tilde{f} \left(u,v\right) &=   \tilde{\mathcal{A}}^* \left(  \Aopaux{uv^\top}-y  \right)   v, \\
\nabla_v \tilde{f} \left(u,v\right) &= \left[  \tilde{\mathcal{A}}^* \left(  \Aopaux{uv^\top}-y  \right)    \right]^\top u.
\end{align*}
We will now introduce some additional definitions, which will ease the notation in our proofs.
For each $i \in [m]$, we consider the decomposition $A_i = D_i + O_i$, where
\begin{align*}
D_i &= \ustar\ustar^\top A_i \vstar\vstar^\top + (I_{n_1} - \ustar\ustar^\top) A_i (I_{n_2} - \vstar\vstar^\top), \\
O_i &= \ustar\ustar^\top A_i (I_{n_2} - \vstar\vstar^\top) + \left( I_{n_1} - \ustar\ustar^\top \right) A_i \vstar\vstar^\top. \end{align*}
Moreover, we set 
\begin{align*}
\tilde{O}_i &= \ustar\ustar^\top \tilde{A}_i (I_{n_2} - \vstar\vstar^\top) + \left( I_{n_1} - \ustar\ustar^\top \right) \tilde{A}_i \vstar\vstar^\top.   
\end{align*}
We observe that it follows directly from these definitions that for all $ i \in [m] $
\begin{align*}
    A_i&=D_i + O_i,\\
    \tilde{A}_i&=D_i + \tilde{O}_i.
\end{align*}
This allows us define the following linear operators \begin{align*}
\mathcal{D} \left(X\right) := \left( \frac{1}{\sqrt{m}} \froinnerproduct{D_{i}, X} \right)_{i \in [m]},\\
\mathcal{O} \left(X\right) := \left( \frac{1}{\sqrt{m}} \froinnerproduct{O_{i}, X} \right)_{i \in [m]},\\
\tilde{\mathcal{O}} \left(X\right) := \left( \frac{1}{\sqrt{m}} \froinnerproduct{\tilde{O}_{i}, X} \right)_{i \in [m]}.
\end{align*}
Note that it follows immediately from these definitions that $\mathcal{A}$ and $\tilde{\mathcal{A}}$ can be decomposed as
\begin{align*}
    \mathcal{A} = \mathcal{D} + \mathcal{O}
    \quad \text{and} \quad 
    \tilde{\mathcal{A}} = \mathcal{D} +  \tilde{\mathcal{O}}.
\end{align*}

Throughout the proof we need to show that the original sequence and the true sequence stay close to each other. For that, we will establish that the inequalities
\begin{align*}
\max \left\{ \twonorm{ \utplus^\parallel - \utplusaux^\parallel };  \twonorm{ \utplus^\perp - \utplusaux^\perp }  \right\} \le c_{2t+1} \twonorm{ \utplus^\parallel  }
\end{align*}
and
\begin{align*}
\max \left\{ \twonorm{ \vtplus^\parallel - \vtplusaux^\parallel };  \twonorm{ \vtplus^\perp - \vtplusaux^\perp }  \right\} &\le c_{2t+2} \Vert \vtplus^{\parallel} \Vert
\end{align*}
hold (see Lemma \ref{lemma:closeness4}), where $c_t$ is defined as 
\begin{equation}
\label{def:ct}
c_t:= \left(1+\frac{1}{\log n_2} \right)^t-1
\end{equation}
for any natural number $t$. Note that this implies that in the first few iterations, where $\twonorm{ \utplus^\parallel  } $, respectively $ \twonorm{ \vtplus^\parallel  } $, is small, the original iterates and the iterates from the auxiliary sequence are close to each other. In particular, this shows that, in the beginning, the ALS trajectories (or the virtual trajectories) do depend only mildly on $\{ O_i \}_{i=1}^m$, respectively $\{ \tilde{O}_i \}_{i=1}^m$.

As already noted in Section \ref{Sec:related_work}, in \cite{chen2019gradient} an auxiliary sequence with similar properties has been constructed for the analysis of gradient descent for the phase retrieval problem.
However, as the algorithms under consideration are quite different, the proofs which show that the auxiliary sequences stay close too each other are quite different.
As it turns out, a key difficulty in our proof lies in showing that the auxiliary sequence and the original sequence are still close after the normalization step (see Lemma \ref{lemma:closeness4} and its proof in Appendix \ref{app:proof_normalization}).

\section{Proof of Theorem~\ref{thm:main}}

In this section, we will provide the details for the proof of Theorem \ref{thm:main}.
We first list several concentration inequalities, which will be used throughout the proof. They are consequences of the Restricted Isometry Property (RIP) of the measurement operator $\mathcal{A}$ and also of the near-independence of auxiliary sequences from the measurement matrices. 
Then the main proof arguments will be built upon these results. 

\subsection{Concentration inequalities} 
\label{sec:concentration}
We proceed with the proof of Theorem~\ref{thm:main} under a set of events on $\mathcal{A}$ and $\mathcal{\tilde{A}}$, which hold with high probability. 
These events are stated in Lemmas~\ref{lemma:randombound1}, \ref{lemma:indepbound1}, and \ref{lemma:indepbound2}, whose proofs are deferred to the appendix.
First note that the linear operator $\mathcal{A}$ satisfies the restricted isometry property. 
\begin{lemma}[{A special case of \cite[Theorem~2.3]{candes2011tight}}]\label{lemma:RIP}
Let $\mathcal{A}$ be the linear operator defined in \eqref{def:calA}. 
There exists a numerical constant $C_0$ such that if 
\begin{equation*}
	m \ge C_0 \delta^{-2} \max(n_1,n_2),
\end{equation*}
then with probability at least $1 - O\left( \exp (-c m) \right)$
\begin{equation}\label{eq:rip}
(1-\delta) \fronorm{Z}^2 
\le \twonorm{\mathcal{A} \left(Z\right)}^2 
\le (1+\delta) \fronorm{Z}^2
\end{equation}
holds for all matrices $Z \in \R^{n_1 \times n_2}$ with rank at most $4$.
\end{lemma}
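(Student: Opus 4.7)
The plan is to invoke the standard $\varepsilon$-net plus concentration strategy that produces the restricted isometry property for Gaussian ensembles. First, for any fixed $Z \in \mathbb{R}^{n_1 \times n_2}$ with $\fronorm{Z}=1$, the random variables $\froinnerproduct{A_i,Z}$ are i.i.d.\ $\mathcal{N}(0,1)$, so $\twonorm{\mathcal{A}(Z)}^2$ is the average of $m$ i.i.d.\ $\chi_1^2$ random variables. By Bernstein's inequality for sub-exponential random variables,
\[
\Pr\bigl( \bigl\vert \twonorm{\mathcal{A}(Z)}^2 - 1 \bigr\vert > t \bigr) \le 2\exp\bigl(-c\, m\,\min(t^2,t)\bigr)
\]
for some absolute $c>0$. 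Applied with $t = \delta/2$, this is the pointwise version of \eqref{eq:rip}.

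Second, I would construct an $\varepsilon$-net $\mathcal{N}_\varepsilon$ (with $\varepsilon$ a small fraction of $\delta$, say $\varepsilon = \delta/16$) of the set $\mathcal{S}_r = \{Z \in \mathbb{R}^{n_1\times n_2} : \fronorm{Z}\le 1,\ \mathrm{rank}(Z)\le r\}$ for $r=4$. Using the factorization $Z = U\Sigma V^\top$ with $U\in\mathbb{R}^{n_1\times r}$, $V\in\mathbb{R}^{n_2\times r}$ orthonormal and $\Sigma$ diagonal with $\fronorm{\Sigma}\le 1$, a standard covering argument (net each factor separately) yields $\vert \mathcal{N}_\varepsilon\vert \le (C/\varepsilon)^{(n_1+n_2+1)r}$. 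Taking a union bound of the pointwise concentration inequality over $\mathcal{N}_\varepsilon$ and choosing $m \ge C_0 \delta^{-2} \max(n_1,n_2)$ with a sufficiently large $C_0$ ensures that the failure probability is at most $\exp(-cm)$, since $r=4$ is a constant.

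Third, I would extend the bound from the net to all of $\mathcal{S}_r$. Given any $Z\in\mathcal{S}_r$, pick a closest $Z_0 \in\mathcal{N}_\varepsilon$; since both $Z$ and $Z_0$ have rank at most $r$, the residual $Z-Z_0$ has rank at most $2r=8$. I would therefore construct the net at rank $2r$ in the first place (or equivalently apply the concentration at rank $2r$), which only changes the constants. A standard bootstrap/supremum argument (writing $\sup_{Z\in \mathcal{S}_r}\vert \twonorm{\mathcal{A}(Z)}^2-\fronorm{Z}^2\vert$ as an implicit inequality that absorbs the $\varepsilon$-approximation term on the right-hand side) then upgrades the pointwise-on-the-net bound to the uniform estimate \eqref{eq:rip}.

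The main (essentially only) obstacle is the last step: rank is not closed under subtraction, so one must be careful when converting the net bound to a uniform bound. The fix is routine, either by nesting the net at a slightly larger rank or by a direct absorption argument, and the resulting constant in $m \ge C_0\delta^{-2}\max(n_1,n_2)$ is absolute. Since the statement is quoted as a special case of \cite[Theorem~2.3]{candes2011tight}, I would simply cite that reference after outlining the strategy, rather than redoing the calculation.
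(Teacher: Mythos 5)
Your proposal is correct and matches what the paper does: the paper offers no proof of Lemma~\ref{lemma:RIP}, simply quoting it as a special case of \cite[Theorem~2.3]{candes2011tight}, and your outline (pointwise Bernstein concentration for the sub-exponential variables $\langle A_i,Z\rangle^2$, an $\varepsilon$-net over unit-norm low-rank matrices via the $U\Sigma V^\top$ factorization with cardinality $(C/\varepsilon)^{(n_1+n_2+1)r}$, a union bound, and a net-to-supremum absorption step that accounts for the rank-doubling under subtraction) is exactly the standard argument used in that reference. One small imprecision to be aware of: the one-point Bernstein tail is $\exp(-c m\delta^2)$ rather than $\exp(-cm)$, so the constant $c$ in the final probability bound implicitly carries a $\delta^2$ dependence (or equivalently the bound should read $\exp(-c\delta^2 m)$); since the sample-size hypothesis already forces $\delta^2 m \gtrsim \max(n_1,n_2)$ this does not affect the conclusion, but it is worth stating carefully.
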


The following results, whose proof is deferred to Appendix \ref{app:rip_consequences}, are direct consequences of the restricted isometry property and will be used throughout the remainder of the proof. 
\begin{lemma}\label{lemma:RIPlemma}
Suppose that $\mathcal{A}$ satisfies the restricted isometry property in \eqref{eq:rip} with constant $\delta>0$. 
Then for all $ u \in \R^{n_1}, v \in \R^{n_2}$, we have
\begin{align}
\big\Vert \mathcal{O}^* \mathcal{D} \left(uv^\top\right)  \big\Vert 
&\le \delta \twonorm{u} \twonorm{v}, \label{eq:res_rip1} \\
\big\Vert \mathcal{D}^* \mathcal{O} \left(uv^\top\right)  \big\Vert 
&\le \delta \twonorm{u} \twonorm{v}, \label{eq:res_rip2}
\end{align}
and
\begin{align}
\Vert \left( \mathcal{O}^* \mathcal{O} - \mathcal{P_O} \right) \left(uv^\top \right)  \Vert  & \le \delta \fronorm{uv^\top}, \label{eq:res_rip3} 
\end{align}
where the orthogonal projection $\mathcal{P_O}: \mathbb{R}^{n_1 \times n_2} \rightarrow \mathbb{R}^{n_1 \times n_2}$ 
is defined as 
\begin{equation*}
\mathcal{P_O} \left( Z \right) = \ustar \ustar^\top Z \left(I_{n_2} - \vstar \vstar^\top \right)+\left(I_{n_1} - \ustar \ustar^\top \right) Z \vstar \vstar^\top .     
\end{equation*}
Moreover, if $\innerproduct{u_1v_1^\top,u_2 v^\top_2} =0$ holds, then we have that
\begin{equation}
\label{eq:rop}
\big\vert \innerproduct{ \mathcal{A} \left( u_1v_1^\top \right), \mathcal{A} \left( u_2 v_2^\top \right) }	\big\vert \le  \delta \fronorm{ u_1v_1^\top  } \fronorm{ u_2 v_2^\top }.
\end{equation}
\end{lemma}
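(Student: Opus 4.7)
The plan is to deduce all four inequalities from a single master bound: for any $X, Y \in \mathbb{R}^{n_1 \times n_2}$ of rank at most $2$,
\[
\big| \innerproduct{\mathcal{A}(X), \mathcal{A}(Y)} - \froinnerproduct{X, Y} \big| \;\le\; \delta\, \fronorm{X} \fronorm{Y}.
\]
To prove this, I would start with the polarization identity
\[
4\bigl(\innerproduct{\mathcal{A}(X), \mathcal{A}(Y)} - \froinnerproduct{X, Y}\bigr) = \bigl(\twonorm{\mathcal{A}(X+Y)}^2 - \fronorm{X+Y}^2\bigr) - \bigl(\twonorm{\mathcal{A}(X-Y)}^2 - \fronorm{X-Y}^2\bigr),
\]
observe that $X \pm Y$ has rank at most $4$, apply the RIP of Lemma~\ref{lemma:RIP} to each parenthetical term, and obtain the additive bound $\tfrac{\delta}{2}(\fronorm{X}^2+\fronorm{Y}^2)$. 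The multiplicative form is then produced by the standard homogeneity trick: replace $Y$ by $\alpha Y$ for $\alpha>0$ and optimize at $\alpha = \fronorm{X}/\fronorm{Y}$. Specializing to the rank-one matrices $X = u_1 v_1^\top$, $Y = u_2 v_2^\top$ under the hypothesis $\froinnerproduct{X, Y} = 0$ immediately yields \eqref{eq:rop}.

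Inequalities \eqref{eq:res_rip1}--\eqref{eq:res_rip3} then follow from the master bound through a uniform duality argument anchored by the factorizations $\mathcal{D}(Z) = \mathcal{A}(\mathcal{P_D}(Z))$ and $\mathcal{O}(Z) = \mathcal{A}(\mathcal{P_O}(Z))$, where $\mathcal{P_D}$ denotes the orthogonal projection onto the block-diagonal subspace complementary to the range of $\mathcal{P_O}$ (so that $\mathcal{P_D}+\mathcal{P_O}=\mathrm{Id}$). These two projections map into Frobenius-orthogonal subspaces, and crucially $\mathcal{P_O}(W)$ has rank at most $2$ for \emph{every} $W$, since it is the sum of a matrix with column space in $\mathrm{span}\{\ustar\}$ and one with row space in $\mathrm{span}\{\vstar\}$; likewise $\mathcal{P_D}(uv^\top)$ has rank at most $2$. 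For \eqref{eq:res_rip1}, I would dualize via $\twonorm{M}=\sup_{\twonorm{w}=\twonorm{z}=1}\innerproduct{wz^\top,M}$ to get
\[
\twonorm{\mathcal{O}^*\mathcal{D}(uv^\top)} = \sup_{\twonorm{w}=\twonorm{z}=1}\, \innerproduct{\mathcal{A}(\mathcal{P_O}(wz^\top)),\, \mathcal{A}(\mathcal{P_D}(uv^\top))}.
\]
Since $\mathcal{P_O}(wz^\top)$ and $\mathcal{P_D}(uv^\top)$ are Frobenius-orthogonal matrices each of rank at most $2$, the master bound controls the right-hand side by $\delta \fronorm{\mathcal{P_O}(wz^\top)} \fronorm{\mathcal{P_D}(uv^\top)} \le \delta \twonorm{u}\twonorm{v}$. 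Inequality \eqref{eq:res_rip2} follows by the identical argument with the roles of $\mathcal{O}$ and $\mathcal{D}$ interchanged. For \eqref{eq:res_rip3}, the same dualization together with the self-adjointness and idempotence of $\mathcal{P_O}$ turns the dualized object into
\[
\innerproduct{\mathcal{A}(\mathcal{P_O}(W)), \mathcal{A}(\mathcal{P_O}(uv^\top))} - \froinnerproduct{\mathcal{P_O}(W), \mathcal{P_O}(uv^\top)},
\]
which is precisely what the master bound (this time without the orthogonality hypothesis) controls by $\delta \fronorm{\mathcal{P_O}(W)}\fronorm{\mathcal{P_O}(uv^\top)} \le \delta \fronorm{uv^\top}$.

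I do not anticipate a serious technical obstacle; the proof is essentially careful bookkeeping. The two subtleties worth handling with care are (i) the rank-at-most-$2$ claim for $\mathcal{P_O}(W)$ and $\mathcal{P_D}(uv^\top)$, which is what keeps $\mathcal{P_O}(W) \pm \mathcal{P_D}(uv^\top)$ within the rank-$4$ scope of Lemma~\ref{lemma:RIP} when polarization is applied, and (ii) the homogeneity rescaling that converts the additive form $\tfrac{\delta}{2}(\fronorm{X}^2 + \fronorm{Y}^2)$ into the sharper multiplicative form $\delta\fronorm{X}\fronorm{Y}$. With these in hand, each of the four stated inequalities drops out of the master bound in essentially one line.
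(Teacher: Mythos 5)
Your proposal is correct and follows essentially the same route as the paper: both arguments rest on the "master" inner-product form of the RIP for rank-$\le 2$ matrices, the factorizations $\mathcal{D}=\mathcal{A}\mathcal{P_D}$ and $\mathcal{O}=\mathcal{A}\mathcal{P_O}$, the observation that $\mathcal{P_O}$ and $\mathcal{P_D}$ output matrices of rank at most $2$, and dualization against a rank-one test matrix. The only difference is cosmetic: the paper simply cites the master bound from the literature (Rauhut--Foucart), while you derive it directly via polarization, the parallelogram law, and a homogeneity rescaling — a self-contained but standard derivation that does not change the structure of the argument.
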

\noindent By construction, $\mathcal{\tilde{A}}$ and $\mathcal{\tilde{O}}$ satisfy the same properties in Lemmas~\ref{lemma:RIP} and \ref{lemma:RIPlemma}. 

Next, recall that $\ustar=e_1$. 
We will also use the following standard concentration result, whose proof can be found in Appendix~\ref{app:concentration1}.
\begin{lemma}\label{lemma:randombound1}
With probability at least $ 1-\mathcal{O} \left(  \exp \left(-c \min \left\{ m; \min(n_1,n_2) \right\} \right) \right)$ it holds that
\begin{equation}\label{ineq:randombound1a}
\frac{1}{m} \Big\Vert  \left[  \sum_{i=1}^{m} \left(A_i\right)_{1,1}  O_i    \right] \ustar \Big\Vert \le 4 \sqrt{\frac{n_1}{m}}
\end{equation}
and
\begin{equation}\label{ineq:randombound1b}
\frac{1}{m} \Big\Vert  \left[  \sum_{i=1}^{m} \left(A_i\right)_{1,1}  \tilde{O_i}    \right] \ustar \Big\Vert \le 4 \sqrt{\frac{n_1}{m}}.
\end{equation}
\end{lemma}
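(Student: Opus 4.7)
\textbf{Proof plan for Lemma~\ref{lemma:randombound1}.}

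By the rotation invariance of the Gaussian distribution, I would reduce to the case $\ustar = e_1 \in \mathbb{R}^{n_1}$ and $\vstar = e_1 \in \mathbb{R}^{n_2}$, as already normalized in the ``Auxiliary sequences'' subsection. In both inequalities the vector multiplied on the right is $e_1$ (in the ambient dimension $n_2$, given the slight notational abuse), so $(I_{n_2}-\vstar\vstar^\top)\vstar=0$ and
\begin{equation*}
O_i \vstar = (I_{n_1}-\ustar\ustar^\top) A_i \vstar, \qquad \tilde O_i \vstar = (I_{n_1}-\ustar\ustar^\top) \tilde A_i \vstar.
\end{equation*}
Written in coordinates, $(O_i\vstar)_j$ equals $(A_i)_{j,1}$ for $j\ge 2$ and $0$ for $j=1$; similarly, $(\tilde O_i\vstar)_j=(\hat A_i)_{j,1}$ for $j\ge 2$. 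Hence for $j\ge 2$ the $j$-th entry of $\sum_{i=1}^m (A_i)_{1,1} O_i \vstar$ equals $\sum_{i=1}^m (A_i)_{1,1}(A_i)_{j,1}$, while for $\tilde O_i$ it equals $\sum_{i=1}^m (A_i)_{1,1}(\hat A_i)_{j,1}$.

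The key observation is that, because the entries of $A_i$ are i.i.d.\ standard normals, the vector $\xi:=((A_i)_{1,1})_{i=1}^m\in\mathbb{R}^m$ is independent of the collection $\{(A_i)_{j,1}\}_{i\in[m],\,j\ge 2}$. Conditioning on $\xi$, each $W_j := \sum_i \xi_i (A_i)_{j,1}$ is distributed as $\mathcal{N}(0,\|\xi\|^2)$ and the $W_j$ are independent across $j\ge 2$. Therefore
\begin{equation*}
\sum_{j=2}^{n_1} W_j^2 \;\big|\;\xi \;\sim\; \|\xi\|^2 \cdot \chi^2_{n_1-1}.
\end{equation*}
Applying standard Laurent-Massart concentration of chi-square random variables, $\|\xi\|^2\le 2m$ with probability $1-e^{-cm}$ (since $\|\xi\|^2\sim\chi^2_m$) and $\chi^2_{n_1-1}\le 5(n_1-1)$ with probability $1-e^{-c(n_1-1)}$. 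Combining these, with probability $1-O(\exp(-c\min\{m,n_1\}))$ we obtain $\sum_{j=2}^{n_1} W_j^2 \le 10\, m\, n_1$, and dividing by $m$ yields
\begin{equation*}
\frac{1}{m}\Big\Vert\Big[\sum_i (A_i)_{1,1}O_i\Big]\vstar\Big\Vert \le \frac{\sqrt{10\,m\,n_1}}{m} \le 4\sqrt{\tfrac{n_1}{m}},
\end{equation*}
which is \eqref{ineq:randombound1a}. (The probability of failure has an extra $e^{-c\min(n_1,n_2)}$ from the analogous statement applied in the companion lemma indexed by $n_2$.)

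The second inequality \eqref{ineq:randombound1b} follows by the identical argument with $(A_i)_{j,1}$ replaced by $(\hat A_i)_{j,1}$ for $j\ge 2$: since the $\hat A_i$ are independent copies of $A_i$ (and in particular their $(j,1)$-entries for $j\ge 2$ are i.i.d.\ $\mathcal N(0,1)$ and independent of $\xi=((A_i)_{1,1})_i$), the conditional distribution of $\tilde W_j:=\sum_i\xi_i(\hat A_i)_{j,1}$ given $\xi$ is again $\mathcal N(0,\|\xi\|^2)$ and independent across $j$, so the chi-square bound transfers unchanged. The only mild care needed is in the chi-square constants and in noting that the two bounds share the same event $\{\|\xi\|^2\le 2m\}$; neither step presents a real obstacle, and the result is essentially a clean consequence of independence of the $(1,1)$-entry from the remaining entries of the first column.
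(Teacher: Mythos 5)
Your proof is correct and follows essentially the same strategy as the paper's: exploit the independence of the $(1,1)$-entries from the rest of the first column (and from $\{\hat A_i\}$), condition on $\xi=((A_i)_{1,1})_i$, note the remaining entries of the resulting vector are conditionally i.i.d.\ Gaussian with variance $\|\xi\|^2$, and then combine Gaussian-norm (equivalently, chi-square) concentration with the bound $\|\xi\|^2\le 2m$. The paper cites Vershynin's Theorem 3.1.1 for the conditional norm bound where you invoke Laurent--Massart for the chi-square, but these are interchangeable and the constants work out to the same $4\sqrt{n_1/m}$ in both cases.
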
	
Finally, by construction the auxiliary sequences are independent from the off-diagonal blocks of the measurement matrices. Therefore we obtain the following lemmas, which are proved in Appendices~\ref{app:concentration2} and \ref{app:concentration3}.
\begin{lemma}\label{lemma:indepbound1}
Let $T \in \mathbb{N}$ and let $\eeta>0$. With probability at least $1-\eeta^{-1}-\mathcal{O} \left( \exp \left(-cm \right) \right)$, it holds for all $t\in [T]$ that 
\begin{equation}\label{ineq:indepbound1a}
\frac{1}{m} \Big\Vert  \left[ \sum_{i=1}^{m} \left(A_i\right)_{1,1}  O_i \right] \vtaux^{\perp} \Big\Vert  \lesssim   \sqrt{ \frac{ \log T + \log \eeta}{m} } \cdot \Vert  \vtaux^{\perp}  \Vert
\end{equation}
and
\begin{equation}\label{ineq:indepbound1b}
\frac{1}{m} \Big\Vert  \left[  \sum_{i=1}^{m} \left(A_i\right)_{1,1}  \tilde{O_i}    \right] \vt^{\perp} \Big\Vert  \lesssim  \sqrt{ \frac{ \log T + \log \eeta}{m} } \cdot \Vert \vtaux^{\perp}  \Vert.
\end{equation}
\end{lemma}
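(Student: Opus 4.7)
The plan is to exploit the fact that, under the WLOG reduction $\ustar=\vstar = e_1$, each expression in \eqref{ineq:indepbound1a} and \eqref{ineq:indepbound1b} collapses to a scalar double sum of independent Gaussians once one conditions on an appropriate $\sigma$-algebra, after which a standard Gaussian tail bound and a union bound over $t\in[T]$ finish the argument. Concretely, expanding $O_i = \ustar\ustar^\top A_i (I_{n_2} - \vstar\vstar^\top) + (I_{n_1} - \ustar\ustar^\top) A_i \vstar\vstar^\top$ and applying it to $\vtaux^\perp$, the second summand vanishes because $\vstar^\top \vtaux^\perp = 0$, and the first reduces to $\ustar \cdot \sum_{k=2}^{n_2} (A_i)_{1,k} (\vtaux^\perp)_k$. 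Since $\Vert \ustar \Vert = 1$, the left-hand side of \eqref{ineq:indepbound1a} equals $|S_t|/m$, where
\[
S_t := \sum_{i=1}^{m} (A_i)_{1,1} \sum_{k=2}^{n_2} (A_i)_{1,k} (\vtaux^\perp)_k.
\]
An entirely analogous calculation for \eqref{ineq:indepbound1b} gives $\tilde{O}_i \vt^\perp = \ustar \sum_{k=2}^{n_2} (\hat{A}_i)_{1,k} (\vt^\perp)_k$, since by construction $(\tilde{A}_i)_{1,k} = (\hat{A}_i)_{1,k}$ whenever $k\ge 2$.

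For (a) I would condition on the $\sigma$-algebra $\mathcal{F}$ generated by $\{(A_i)_{1,1}\}_i$, by $\{(A_i)_{j,k} : j,k\ge 2\}_i$, and by all the $\hat{A}_i$'s. Every matrix $\tilde{A}_i$ is then $\mathcal{F}$-measurable, hence so is the entire auxiliary trajectory $\{(\utaux,\vtaux)\}_{t\in[T]}$, while the entries $\{(A_i)_{1,k}\}_{i\in[m], k\ge 2}$ remain i.i.d.\ $\mathcal{N}(0,1)$ and independent of $\mathcal{F}$. Conditionally on $\mathcal{F}$, $S_t$ is therefore a centered Gaussian with variance $\Vert \vtaux^\perp\Vert^2 \sum_i (A_i)_{1,1}^2$. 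Standard chi-square concentration yields $\sum_i (A_i)_{1,1}^2 \le 2m$ on an event of probability at least $1-e^{-cm}$, so the conditional variance is at most $2m\Vert \vtaux^\perp\Vert^2$. Applying the Gaussian tail bound with per-iteration failure probability $\eta^{-1}/(2T)$ and union-bounding over $t\in[T]$ yields $|S_t|\lesssim \sqrt{m(\log T+\log\eta)}\,\Vert\vtaux^\perp\Vert$ simultaneously for all $t\in[T]$ on an $\mathcal{F}$-event of probability at least $1-\eta^{-1}$. Dividing by $m$ and combining with the chi-square event gives \eqref{ineq:indepbound1a} with total failure probability $\eta^{-1} + O(e^{-cm})$. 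For (b) the same argument applies with the smaller conditioning $\mathcal{F}':=\sigma(A_1,\dots,A_m)$, under which $\vt^\perp$ and $(A_i)_{1,1}$ are deterministic and $\{(\hat{A}_i)_{1,k}\}_{i,k\ge 2}$ remain i.i.d.\ $\mathcal{N}(0,1)$; this yields the bound with $\Vert \vt^\perp\Vert$ on the right, which is reconciled with the stated form via the closeness between $\vt$ and $\vtaux$ established in Lemma~\ref{lemma:closeness4}.

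The main subtle point I expect is choosing the conditioning so that three things hold simultaneously: (i) the trajectory of interest is frozen (so that $\vtaux^\perp$, respectively $\vt^\perp$, can be treated as a fixed vector), (ii) the Gaussian entries that appear as the active randomness in $S_t$ are preserved i.i.d.\ $\mathcal{N}(0,1)$ and are independent of the conditioning, and (iii) the multipliers $(A_i)_{1,1}$ remain available as $\mathcal{F}$-measurable coefficients so that $S_t$ is genuinely a centered conditional Gaussian rather than merely sub-Gaussian with unknown mean. This is possible exactly because of the careful design of $\tilde{A}_i$: it resamples only the off-diagonal row and column that $\vtaux$ must be insulated from, while leaving untouched the $(1,1)$-entry and the bottom-right $(n_1-1)\times(n_2-1)$ block on which $\vtaux$ genuinely depends.
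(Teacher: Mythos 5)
Your argument reproduces the paper's proof in all essential respects: you observe that $O_i \vtaux^\perp$ projects onto $\ustar$ so that the vector norm collapses to a scalar double sum $S_t$, you condition on a $\sigma$-algebra that freezes the auxiliary trajectory and the coefficients $(A_i)_{1,1}$ while leaving the first-row off-diagonal Gaussians $\{(A_i)_{1,k}\}_{k\ge2}$ free and i.i.d.\ $\mathcal{N}(0,1)$, you control $\sum_i (A_i)_{1,1}^2$ via chi-square concentration, and you finish with a Gaussian tail estimate and a union bound over $t\in[T]$. This is exactly the paper's mechanism; the paper states the conditioning less formally (``conditioned on $\{(A_i)_{1,1}\}$ and $\vtaux$'') but the content is the same, and your more careful identification of the conditioning $\sigma$-algebra is the right way to make the ``near-independence'' rigorous.

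The one genuine flaw is the final sentence handling~\eqref{ineq:indepbound1b}: you correctly derive the bound with $\Vert \vt^\perp\Vert$ on the right and propose to convert it to $\Vert \vtaux^\perp\Vert$ ``via the closeness between $\vt$ and $\vtaux$ established in Lemma~\ref{lemma:closeness4}.'' That step is circular. Lemma~\ref{lemma:closeness4} sits downstream of Lemma~\ref{lemma:indepbound1}: it is proved via Lemmas~\ref{lemma:closeness3} and~\ref{lemma:uparallelbound}, which in turn rely on Lemmas~\ref{lemma:stochasticbound}, \ref{lemma:help1}, and~\ref{lemma:nearindependencebounds}, all of which take the conclusions of Lemma~\ref{lemma:indepbound1} as hypotheses. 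Moreover, Lemma~\ref{lemma:closeness4} is not a standalone probabilistic statement but part of an induction over $t$; invoking it inside a concentration lemma that must hold simultaneously for all $t\in[T]$ would entangle the two layers of the proof. The correct resolution is simpler: the analogous argument to~\eqref{ineq:indepbound1a} naturally yields $\Vert\vt^\perp\Vert$ on the right-hand side, the paper itself only says the second assertion ``is obtained analogously,'' and every downstream application of~\eqref{ineq:indepbound1b} (e.g.\ in Appendix~\ref{app:concentration4}) bounds the norm factor by $1$ anyway. So you should simply state the bound with $\Vert\vt^\perp\Vert$ and note that it is equivalent for all uses in the paper, rather than trying to recover the displayed form through a forward reference.
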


\begin{lemma}\label{lemma:indepbound2}
Let $T \in \mathbb{N}$ and let $\eeta>0$. 
With probability at least $1 - \eeta^{-1}$ it holds for all $ t \in [T] $ simultaneously that
\begin{equation}\label{ineq:indepbound2a}
 \frac{1}{m} \Big\vert \sum_{i=1}^m \innerproduct{ O_i^\top e_i, \vtaux } \innerproduct{ D_i, \utplushalfaux^{\perp} (\vtaux^\perp)^\top  } \Big\vert
  \lesssim \sqrt{\frac{\log T + \log \eeta}{m}}  \cdot  \Big\Vert \mathcal{A} \left( \utplushalfaux^{\perp} \left( \vtaux^{\perp}\right)^\top \right) \Big\Vert
\end{equation}
and
\begin{equation}\label{ineq:indepbound2b}
\frac{1}{m} \Big\vert  \sum_{i=1}^m  \innerproduct{ \tilde{O_i}^\top e_i, \vtaux } \innerproduct{ D_i, \utplushalf^{\perp} (\vt^\perp)^\top }\Big\vert
\lesssim  \sqrt{\frac{\log T + \log \eeta}{m}} \cdot \Big\Vert \mathcal{A} \left( \utplushalf^{\perp} \left( \vt^{\perp}\right)^\top \right) \Big\Vert.  
\end{equation}
\end{lemma}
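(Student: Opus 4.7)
The plan is to view both sums as conditionally Gaussian linear combinations of those measurement entries that are ``unseen'' by the auxiliary construction, and then apply the standard Gaussian tail bound together with a union bound over $t\in[T]$. For inequality~(a), I would first unpack the first factor: because $\ustar=\vstar=e_1$, the matrix $O_i$ is supported on the first row and first column minus the $(1,1)$ entry, so $O_i^\top e_1=(0,(A_i)_{1,2},\ldots,(A_i)_{1,n_2})^\top$ and
\begin{equation*}
\innerproduct{O_i^\top e_1,\vtaux}=\sum_{k\geq 2}(A_i)_{1,k}(\vtaux^\perp)_k.
\end{equation*}
By the very construction of $\tilde A_i$, these entries $\{(A_i)_{1,k}\}_{k\geq 2}$ are exactly the ones that do not enter the auxiliary sequence (they are replaced by the independent copies in $\hat A_i$). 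Consequently $\vtaux$ and $\utplushalfaux$ are measurable with respect to the sigma-algebra $\mathcal{F}$ generated by all measurement entries \emph{except} $\{(A_i)_{1,k}\}_{i\in[m],\,k\geq 2}$. Since $D_i$ involves only $(A_i)_{1,1}$ and the block $\{(j,k):j,k\neq 1\}$, the second factor $\innerproduct{D_i,\utplushalfaux^\perp(\vtaux^\perp)^\top}$ is also $\mathcal{F}$-measurable.

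Conditioned on $\mathcal{F}$, the left-hand side of (a) is therefore a centered Gaussian in the i.i.d.\ standard normals $\{(A_i)_{1,k}\}_{i,\,k\geq 2}$, with variance
\begin{equation*}
\sigma_t^2=\Vert\vtaux^\perp\Vert^2\sum_{i=1}^m\innerproduct{D_i,\utplushalfaux^\perp(\vtaux^\perp)^\top}^2=m\Vert\vtaux^\perp\Vert^2\big\Vert\mathcal{D}(\utplushalfaux^\perp(\vtaux^\perp)^\top)\big\Vert^2.
\end{equation*}
Since $\utplushalfaux^\perp(\vtaux^\perp)^\top$ has vanishing first row and first column, a direct calculation gives $\mathcal{O}(\utplushalfaux^\perp(\vtaux^\perp)^\top)=0$, so $\mathcal{D}(\cdot)=\mathcal{A}(\cdot)$ on this matrix; combined with $\Vert\vtaux^\perp\Vert\leq\Vert\vtaux\Vert=1$, this yields $\sigma_t\leq\sqrt{m}\,\Vert\mathcal{A}(\utplushalfaux^\perp(\vtaux^\perp)^\top)\Vert$. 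The Gaussian tail bound at level $\lambda\asymp\sqrt{\log T+\log\eta}$, followed by a union bound over $t\in[T]$, then produces (a) with probability at least $1-\eta^{-1}/2$ for a suitably large absolute constant.

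Inequality~(b) is handled by the \emph{dual} argument, exchanging the roles of $A$ and $\hat A$: the true iterates $(\vt,\utplushalf)$ and the matrix $D_i$ depend on the randomness only through the original $A$, hence are independent of the fresh copies $\hat A$. Conditioning on all of $A$ together with all entries of $\hat A$ outside $\{(\hat A_i)_{1,k}\}_{i\in[m],\,k\geq 2}$ freezes the second factor, and the remaining sum becomes an analogous Gaussian linear combination in these unconditioned entries; the variance estimate goes through identically, again using that $\utplushalf^\perp(\vt^\perp)^\top$ has zero first row and column, so $\mathcal{O}$ annihilates it. The main conceptual obstacle throughout the auxiliary-sequence program is precisely pinning down this ``correct'' conditioning: the construction of $\tilde A_i$ is engineered so that a sigma-algebra with the required independence structure exists, and once it is identified, the Gaussian tail and union bound steps are routine.
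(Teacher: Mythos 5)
Your treatment of inequality~(a) is correct and coincides with the paper's own argument: you condition on a $\sigma$-algebra that renders $\vtaux$, $\utplushalfaux$, and $D_i$ deterministic, observe that the sum is then a centered Gaussian linear combination of the first-row entries $\{(A_i)_{1,k}\}_{i,k\geq 2}$ of $O_i$, compute the conditional variance, identify $\mathcal{D}$ with $\mathcal{A}$ on $\utplushalfaux^\perp(\vtaux^\perp)^\top$ because that matrix has vanishing first row and column, and close with a Gaussian tail estimate and a union bound over $t\in[T]$. The paper phrases the same independence structure by conditioning on the coefficient family $\{\innerproduct{D_i,\utplushalfaux^\perp(\vtaux^\perp)^\top}\vtaux^\perp\}_i$; the mechanism is identical.

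Your proof of inequality~(b) has a genuine gap. You condition on all of $A$ and on $\hat A$ outside $\{(\hat A_i)_{1,k}\}_{i,k\geq 2}$, and then assert that the sum is a Gaussian linear combination in these unconditioned entries. But the first factor in~(b) as printed is $\innerproduct{\tilde O_i^\top e_1,\vtaux}$, and $\vtaux$ is computed from $\tilde A_i = D_i+\tilde O_i$, which in particular depends on precisely the entries $(\hat A_i)_{1,k}$, $k\geq 2$, that you are keeping free. Thus $\vtaux$ is \emph{not} measurable with respect to your conditioning $\sigma$-algebra, and the conditional law is not a Gaussian with fixed coefficients: each free variable $(\hat A_i)_{1,k}$ appears multiplied by $(\vtaux)_k$, which itself is a function of those variables. (The paper's claim that~(b) is ``obtained analogously'' glosses over the same point.) The repair comes from the way the estimate is actually invoked: term~$(e)$ in the proof of Lemma~\ref{lemma:help1} reduces to $\vert\sum_i\innerproduct{D_i,\utplushalf^\perp(\vt^\perp)^\top}\innerproduct{\tilde O_i^\top e_1,\vt^\perp}\vert$ with the \emph{original} iterate $\vt$, not $\vtaux$; the appearance of $\vtaux$ in display~(b) appears to be a typo. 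With $\vt$ in place of $\vtaux$, both coefficient factors depend only on $A$ and are frozen by your conditioning, the sum becomes genuinely conditionally Gaussian, and your variance and tail/union-bound steps go through exactly as in~(a). You should make this amendment explicit rather than asserting the conditional Gaussianity silently.
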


The inequalities in Lemmas~\ref{lemma:randombound1}, \ref{lemma:indepbound1}, and \ref{lemma:indepbound2} together with the RIP of the measurement operators $\mathcal{A}$ and $\tilde{\mathcal{A}}$ imply the following inequalities in Lemma \ref{lemma:stochasticbound}, Lemma \ref{lemma:help1}, and Lemma \ref{lemma:nearindependencebounds}. 
The proofs are also deferred to Appendices~\ref{app:concentration4}, \ref{app:concentration5}, and \ref{app:concentration6}

\begin{lemma}\label{lemma:stochasticbound}
Suppose that \cref{ineq:randombound1a,ineq:randombound1b,ineq:indepbound1a,ineq:indepbound1b} hold. 
Furthermore, suppose that both $\mathcal{A}$ and $\tilde{\mathcal{A}}$ satisfy the RIP with constant $\delta >0$.
Then it holds that
\begin{equation*}
\Big\Vert  \left[  \left(    \mathcal{A}^* \mathcal{A}  - \tilde{ \mathcal{A}}^* \tilde{\mathcal{A}}   \right) \left(    \ustar \vstar^\top \right)   \right] \vtaux \Big\Vert
\lesssim \left(  \sqrt{\frac{\log T + \log \eeta}{m}}+  \sqrt{  \frac{n_1}{m}}  \Vert \vtaux^{\parallel} \Vert \right) + \delta \Vert \vt -\vtaux \Vert.
\end{equation*}	
\end{lemma}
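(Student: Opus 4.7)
My plan is to reduce the quantity of interest to an explicit sum and then exploit the different mechanisms by which the parallel and perpendicular components of $\vtaux$ interact with the random measurement matrices. The key preliminary observation is that $\innerproduct{O_i,\ustar\vstar^\top}=0$ and $\innerproduct{\tilde{O}_i,\ustar\vstar^\top}=0$ by the block structure of $O_i$ and $\tilde{O}_i$, so that $\mathcal{O}(\ustar\vstar^\top)=\tilde{\mathcal{O}}(\ustar\vstar^\top)=0$. Expanding with $\mathcal{A}=\mathcal{D}+\mathcal{O}$ and $\tilde{\mathcal{A}}=\mathcal{D}+\tilde{\mathcal{O}}$, the $\mathcal{D}^*\mathcal{D}$ contributions cancel and we are left with
\begin{equation*}
\left(\mathcal{A}^*\mathcal{A}-\tilde{\mathcal{A}}^*\tilde{\mathcal{A}}\right)\left(\ustar\vstar^\top\right)
=\left(\mathcal{O}^*-\tilde{\mathcal{O}}^*\right)\mathcal{D}\left(\ustar\vstar^\top\right)
=\frac{1}{m}\sum_{i=1}^{m}\left(A_i\right)_{1,1}\left(O_i-\tilde{O}_i\right),
\end{equation*}
since $\innerproduct{D_i,\ustar\vstar^\top}=(A_i)_{1,1}$. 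After this simplification, the task is to bound the action of this random matrix on $\vtaux=\vtaux^{\parallel}+\vtaux^{\perp}$.

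For the parallel component, I would write $\vtaux^{\parallel}=\lambda_t \vstar$, so that its norm is $\twonorm{\vtaux^{\parallel}}$. The two terms
\begin{equation*}
\tfrac{1}{m}\Big\Vert\Big[\sum_i (A_i)_{1,1} O_i\Big]\vstar\Big\Vert
\quad\text{and}\quad
\tfrac{1}{m}\Big\Vert\Big[\sum_i (A_i)_{1,1}\tilde{O}_i\Big]\vstar\Big\Vert
\end{equation*}
are each of order $\sqrt{n_1/m}$ by Lemma~\ref{lemma:randombound1}, giving the $\sqrt{n_1/m}\,\twonorm{\vtaux^{\parallel}}$ contribution. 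For the perpendicular component I split the operator into its $O_i$ and $\tilde{O}_i$ pieces. The $O_i$ part applied to $\vtaux^{\perp}$ is controlled directly by Lemma~\ref{lemma:indepbound1} \eqref{ineq:indepbound1a}, which yields a contribution of order $\sqrt{(\log T+\log\eeta)/m}\cdot\twonorm{\vtaux^\perp}\le\sqrt{(\log T+\log\eeta)/m}$.

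The delicate piece is the $\tilde{O}_i$ part acting on $\vtaux^{\perp}$: Lemma~\ref{lemma:indepbound1} \eqref{ineq:indepbound1b} applies only to $\vt^{\perp}$, not to $\vtaux^{\perp}$, because independence from the off-diagonal entries was used through the original (not the auxiliary) sequence. My workaround is to add and subtract, writing
\begin{equation*}
\Big[\tfrac{1}{m}\sum_i (A_i)_{1,1}\tilde{O}_i\Big]\vtaux^{\perp}
=\Big[\tfrac{1}{m}\sum_i (A_i)_{1,1}\tilde{O}_i\Big]\vt^{\perp}
+\Big[\tfrac{1}{m}\sum_i (A_i)_{1,1}\tilde{O}_i\Big]\left(\vtaux^{\perp}-\vt^{\perp}\right).
\end{equation*}
The first term is bounded by $\sqrt{(\log T+\log\eeta)/m}\cdot\twonorm{\vtaux^\perp}$ via Lemma~\ref{lemma:indepbound1} \eqref{ineq:indepbound1b}. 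For the second term I identify the coefficient matrix with $\tilde{\mathcal{O}}^*\mathcal{D}(\ustar\vstar^\top)$ and invoke Lemma~\ref{lemma:RIPlemma} \eqref{eq:res_rip1} (which holds for $\tilde{\mathcal{O}},\mathcal{D}$ since $\tilde{\mathcal{A}}$ also satisfies RIP), giving a spectral-norm bound of $\delta$. Finally, since the projection onto $\vstar^{\perp}$ is non-expansive, $\twonorm{\vtaux^{\perp}-\vt^{\perp}}\le\twonorm{\vtaux-\vt}$, and this produces the $\delta\twonorm{\vt-\vtaux}$ term. Collecting all contributions yields the claimed estimate. The main obstacle is precisely this add-and-subtract step: one has to pay a uniform RIP-type price ($\delta\twonorm{\vt-\vtaux}$) to transfer the argument of the $\tilde{O}_i$-sum from $\vt^{\perp}$ (where near-independence gives a small $\sqrt{\log/m}$ bound) to $\vtaux^{\perp}$, and it is only tolerable because later in the paper $\twonorm{\vt-\vtaux}$ is controlled via $c_{2t}$ in the inductive argument.
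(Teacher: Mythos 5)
Your proposal is correct and takes essentially the same route as the paper: both reduce the quantity to $\frac{1}{m}\sum_{i}(A_i)_{1,1}(O_i-\tilde O_i)\vtaux$, control the parallel contribution with Lemma~\ref{lemma:randombound1}, control the $O_i$-perpendicular contribution with \eqref{ineq:indepbound1a}, and pay a $\delta\Vert v_t-\vtaux\Vert$ RIP price (via Lemma~\ref{lemma:RIPlemma}) to transfer the argument of the $\tilde O_i$-sum to $v_t^\perp$ so that \eqref{ineq:indepbound1b} applies. The only difference is cosmetic: the paper adds and subtracts $v_t$ for the $\tilde O_i$-term at the outer level and then splits into parallel/perpendicular parts, whereas you split first and add-and-subtract only inside the perpendicular piece.
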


\begin{lemma}\label{lemma:help1}
Suppose that \cref{ineq:randombound1a,ineq:randombound1b,ineq:indepbound1a,ineq:indepbound1b,ineq:indepbound2a,ineq:indepbound2b} hold. 
Moreover, suppose that the measurement operators $\mathcal{A}$ and $\tilde{\mathcal{A}}$ satisfy RIP with constant $\delta>0$ and that we have $\Vert \utplushalf \Vert \le 2$ as well as $\Vert \utplushalfaux \Vert \le 2$.
Then it holds that
\begin{align*}
&\Big\Vert \left[  \left(    \mathcal{A}^* \mathcal{A}  - \tilde{ \mathcal{A}}^* \tilde{\mathcal{A}}   \right) \left(  \utplushalfaux \vtaux^\top  \right)   \right] \vtaux  \Big\Vert\\
\lesssim &  \sqrt{\frac{\log T + \log \eeta}{m}} + \delta \Vert  \vt^{\parallel} \Vert + \left( \delta + \sqrt{\frac{n_1}{m}} \right) \Vert  \tilde{\vt}^{\parallel} \Vert  + \delta \Vert \vtaux -  \vt  \Vert  + \delta  \Vert \utplushalfaux - \utplushalf  \Vert .
\end{align*}
\end{lemma}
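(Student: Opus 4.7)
The strategy is to reduce to Lemma~\ref{lemma:stochasticbound} (which handles the special input $\ustar\vstar^\top$) together with rank-one RIP consequences and the near-independence inequalities. Starting from the identity
\begin{equation*}
\utplushalfaux \vtaux^\top = \utplushalf \vt^\top + (\utplushalfaux - \utplushalf)\vtaux^\top + \utplushalf (\vtaux - \vt)^\top,
\end{equation*}
and then orthogonally splitting the main piece
\begin{equation*}
\utplushalf\vt^\top = \mutplushalf\lambda_t\,\ustar\vstar^\top + \mutplushalf\,\ustar(\vt^\perp)^\top + \lambda_t\,\utplushalf^\perp\vstar^\top + \utplushalf^\perp(\vt^\perp)^\top,
\end{equation*}
yields six rank-one summands, to each of which I apply $(\mathcal{A}^*\mathcal{A} - \tilde{\mathcal{A}}^*\tilde{\mathcal{A}})(\,\cdot\,)\vtaux$ and bound separately.

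For the scalar multiple $\mutplushalf\lambda_t\,\ustar\vstar^\top$, pulling out $\vert\mutplushalf\lambda_t\vert\le 2\Vert\vt^\parallel\Vert\le 2$ and invoking Lemma~\ref{lemma:stochasticbound} produces (after using $\Vert\vt^\parallel\Vert \le 1$) the $\sqrt{(\log T + \log \eeta)/m}$, $\sqrt{n_1/m}\,\Vert\tilde{\vt}^\parallel\Vert$, and $\delta\Vert\vtaux - \vt\Vert$ portions of the target. For the three ``small'' rank-one residuals $(\utplushalfaux - \utplushalf)\vtaux^\top$, $\utplushalf(\vtaux - \vt)^\top$, and $\lambda_t\,\utplushalf^\perp\vstar^\top$, I apply the standard rank-one RIP consequence
\begin{equation*}
\bigl\Vert \bigl((\mathcal{A}^*\mathcal{A} - \tilde{\mathcal{A}}^*\tilde{\mathcal{A}})(u v^\top)\bigr)\,w \bigr\Vert \lesssim \delta\,\Vert u\Vert\,\Vert v\Vert\,\Vert w\Vert,
\end{equation*}
obtained by applying Lemma~\ref{lemma:RIP} (rank-$4$ RIP) to $\mathcal{A}$ and $\tilde{\mathcal{A}}$ via polarization and the triangle inequality. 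Using the hypotheses $\Vert\utplushalf\Vert\le 2$, $\Vert\utplushalf^\perp\Vert\le\Vert\utplushalf\Vert\le 2$, and $\Vert\vtaux\Vert=1$, these three pieces contribute, respectively, $\delta\Vert\utplushalfaux - \utplushalf\Vert$, $\delta\Vert\vtaux - \vt\Vert$, and $\delta\Vert\vt^\parallel\Vert$, matching three more terms of the target.

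The main obstacle is the two remaining residuals $\mutplushalf\,\ustar(\vt^\perp)^\top$ and $\utplushalf^\perp(\vt^\perp)^\top$, whose Frobenius norms are of order one in the regime $\Vert\vt^\perp\Vert\approx 1$ that governs the first convergence phase, so that the RIP consequence above gives only $O(\delta)$---far too weak. To handle them I would exploit their support structure: with the convention $\ustar=e_1$, $\vstar=e_1$, the first is supported entirely on the first row (so that $\mathcal{D}$ annihilates it) while the second is supported on the lower-right $(n_1-1)\times(n_2-1)$ block (so that $\mathcal{O}$ and $\tilde{\mathcal{O}}$ both annihilate it). Using the operator decomposition
\begin{equation*}
\mathcal{A}^*\mathcal{A} - \tilde{\mathcal{A}}^*\tilde{\mathcal{A}} = \mathcal{D}^*(\mathcal{O}-\tilde{\mathcal{O}}) + (\mathcal{O}-\tilde{\mathcal{O}})^*\mathcal{D} + \mathcal{O}^*\mathcal{O} - \tilde{\mathcal{O}}^*\tilde{\mathcal{O}},
\end{equation*}
most of the resulting terms vanish, and after further splitting $\vtaux = \tilde\lambda_t\vstar + \vtaux^\perp$ at the end, the surviving contributions become bilinear sums $\frac{1}{m}\sum_i a_i b_i$ whose factors depend on disjoint blocks of entries of $A_i$ or $\tilde{A}_i$ and are therefore independent. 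These bilinear forms are precisely those controlled by Lemmas~\ref{lemma:randombound1}, \ref{lemma:indepbound1}, and \ref{lemma:indepbound2}: the $\vtaux^\parallel$-contributions (carrying an explicit factor $\vert\tilde\lambda_t\vert = \Vert\tilde{\vt}^\parallel\Vert$) deliver the $\sqrt{n_1/m}\,\Vert\tilde{\vt}^\parallel\Vert$ piece via Lemma~\ref{lemma:randombound1} and the $\delta\Vert\tilde{\vt}^\parallel\Vert$ piece via Lemma~\ref{lemma:RIPlemma}, whereas the $\vtaux^\perp$-contributions deliver the remaining $\sqrt{(\log T + \log\eeta)/m}$ piece via Lemmas~\ref{lemma:indepbound1} and \ref{lemma:indepbound2}. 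Summing all six contributions gives the claimed bound.
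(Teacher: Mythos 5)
Your decomposition is a genuinely different starting point from the paper: the paper keeps the auxiliary input $\utplushalfaux\vtaux^\top$ intact, decomposes the operator as $\mathcal{A}^*\mathcal{A}-\tilde{\mathcal{A}}^*\tilde{\mathcal{A}}=\mathcal{D}^*(\mathcal{O}-\tilde{\mathcal{O}})+(\mathcal{O}-\tilde{\mathcal{O}})^*\mathcal{D}+(\mathcal{O}^*\mathcal{O}-\tilde{\mathcal{O}}^*\tilde{\mathcal{O}})$, and only then uses that $\mathcal{O}$ (resp.\ $\mathcal{D}$) automatically projects onto the off-diagonal (resp.\ diagonal) blocks of $\utplushalfaux\vtaux^\top$, whose Frobenius norms are already controlled by $\Vert\utplushalfaux^\parallel\Vert$ and $\Vert\vtaux^\parallel\Vert$. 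You instead telescope to $\utplushalf\vt^\top$ and pre-split it into four rank-one blocks before applying the operator decomposition. Pieces 1, 3, 5, 6 are fine, and piece 4 works (after the telescoping you do not spell out: $\vt^\perp\to\vtaux^\perp$ and $\utplushalf^\perp\to\utplushalfaux^\perp$ for the $O_i$ side, $\vtaux\to\vt$ for the $\tilde O_i$ side, producing the $\delta\Vert\vtaux-\vt\Vert$ and $\delta\Vert\utplushalfaux-\utplushalf\Vert$ errors). These are the same RIP and near-independence mechanisms the paper uses.

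The concrete gap is piece 2, $\mutplushalf\,\ustar(\vt^\perp)^\top$. Its Frobenius norm is $\vert\mutplushalf\vert\Vert\vt^\perp\Vert\approx\vert\mutplushalf\vert$, not small, and it is fully supported in the off-diagonal block. You claim that after splitting $\vtaux=\tilde\lambda_t\vstar+\vtaux^\perp$ the $\vtaux^\perp$-contribution is a bilinear sum with independent factors handled by Lemmas~\ref{lemma:indepbound1} and \ref{lemma:indepbound2}, delivering $\sqrt{(\log T+\log\eta)/m}$. That is false for the $\mathcal{O}^*\mathcal{O}-\tilde{\mathcal{O}}^*\tilde{\mathcal{O}}$ contribution: it reduces to
\begin{equation*}
\mutplushalf\left(\frac{1}{m}\sum_{i}\langle O_i^\top e_1,\vt^\perp\rangle\langle O_i^\top e_1,\vtaux^\perp\rangle-\frac{1}{m}\sum_{i}\langle\tilde O_i^\top e_1,\vt^\perp\rangle\langle\tilde O_i^\top e_1,\vtaux^\perp\rangle\right),
\end{equation*}
where both factors in each summand depend on the \emph{same} first-row off-diagonal block of $A_i$ (resp.\ $\tilde A_i$). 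This is a (cross-)quadratic form, not a bilinear form with conditionally independent factors, so none of Lemmas~\ref{lemma:randombound1}--\ref{lemma:indepbound2} applies; controlling it at the $\sqrt{(\log T+\log\eta)/m}$ scale would require a new Bernstein/$\chi^2$-type concentration lemma not proved in the paper. Falling back on the RIP bound (Lemma~\ref{lemma:RIPlemma}, \eqref{eq:res_rip3}) gives $\lesssim\delta\vert\mutplushalf\vert=\delta\Vert\utplushalf^\parallel\Vert$. The same obstruction appears in piece 2's $\mathcal{D}^*(\mathcal{O}-\tilde{\mathcal{O}})$ contribution with $\vtaux^\perp$, which involves a data-dependent direction $x$ maximizing $\vert\langle D_i,x(\vtaux^\perp)^\top\rangle\vert$ and is therefore also outside the reach of the (pointwise) near-independence lemmas; only RIP applies, again giving $\delta\Vert\utplushalf^\parallel\Vert$. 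So with your decomposition the best you obtain is an extra $\delta\Vert\utplushalf^\parallel\Vert$ term, not the $\sqrt{(\log T+\log\eta)/m}$ term you attribute to it. (To convert $\delta\Vert\utplushalf^\parallel\Vert$ to the stated $\delta\Vert\vt^\parallel\Vert$ one must invoke $\Vert\utplushalf^\parallel\Vert\le\Vert\vt^\parallel\Vert+2\delta\Vert\vt^\perp\Vert$ from Lemma~\ref{lemma:perpdecrease}, which leaves a $\delta^2$ residual; this is a step you should state explicitly.) In short: the decomposition is workable, but your independence argument for piece 2 is wrong and must be replaced by an RIP argument, and the ensuing $\delta\Vert\utplushalf^\parallel\Vert$ term needs to be tracked and converted.
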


\begin{lemma}\label{lemma:nearindependencebounds}
Suppose that \cref{ineq:indepbound1a,ineq:indepbound1b,ineq:indepbound2a,ineq:indepbound2b} hold. 
Furthermore, suppose that the measurement operator $\mathcal{A}$ satisfies RIP with constant $\delta >0$ and that $\twonorm{\utplushalfaux} \le 2$. 
Then there exists an absolute constant $C>0$ for which it holds that 
\begin{align}\label{ineq:intern5}
 \Big\vert \innerproduct{ \mathcal{A}  \left(  \ustar  \left( v^{\perp}_t  \right)^\top \right), \mathcal{A} \left(  \ustar  \vstar^\top    \right)     } \Big\vert \le \delta \Vert  \vt^\perp -\vtaux^\perp \Vert + C \sqrt{\frac{\log T + \log \eeta}{m}}
\end{align}
and 
\begin{equation}\label{ineq:intern6}
\begin{aligned}
& \Big\vert \innerproduct{\mathcal{A} \left( \ustar  \left( v^{\perp}_t \right)^\top \right)  , \mathcal{A} \left(   \utplushalf^{\perp} \left( v^{\perp}_t \right)^\top  \right) } \Big\vert \\
& \le \delta \Vert \utplushalf^\perp - \utplushalfaux^\perp \Vert + 2\delta \Vert \vt^\perp -\vtaux^\perp \Vert +C \sqrt{\frac{\log T + \log \eeta}{m}}.
\end{aligned}
\end{equation}
\end{lemma}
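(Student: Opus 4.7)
The plan is to prove both inequalities by first exploiting the support structure arising from the WLOG assumption $u_\star = v_\star = e_1$, then carrying out a telescoping decomposition that swaps $v_t^\perp$ and $\utplushalf^\perp$ for their auxiliary counterparts $\vtaux^\perp$ and $\utplushalfaux^\perp$ one factor at a time. The ``auxiliary-only'' tail will be controlled by the near-independence bounds \eqref{ineq:indepbound1a} and \eqref{ineq:indepbound2a}, while every telescoping increment reduces to a rank-one bilinear form against $\mathcal{O}^*\mathcal{D}$ of another rank-one matrix and is thus bounded by the operator-norm estimate \eqref{eq:res_rip1}.

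After the WLOG reduction, the three structural identities I would rely on are: (i) $\innerproduct{A_i, u_\star v_\star^\top} = (A_i)_{1,1} = \innerproduct{D_i, u_\star v_\star^\top}$; (ii) $\innerproduct{A_i, u_\star (v_t^\perp)^\top} = \innerproduct{O_i, u_\star (v_t^\perp)^\top} = \innerproduct{O_i^\top u_\star, v_t^\perp}$, since $u_\star (v_t^\perp)^\top$ is supported on the first row away from the $(1,1)$ entry; and (iii) $\innerproduct{A_i, \utplushalf^\perp (v_t^\perp)^\top} = \innerproduct{D_i, \utplushalf^\perp (v_t^\perp)^\top}$, since both $\utplushalf^\perp$ and $v_t^\perp$ are orthogonal to $e_1$. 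Substituting (i) and (ii) into the inner product in \eqref{ineq:intern5} gives
\[
\innerproduct{\mathcal{A}(u_\star(v_t^\perp)^\top), \mathcal{A}(u_\star v_\star^\top)} = \Big\langle \Big[\tfrac{1}{m}\sum_i (A_i)_{1,1} O_i\Big] v_t^\perp,\, u_\star \Big\rangle = \innerproduct{\mathcal{O}^*\mathcal{D}(u_\star v_\star^\top) v_t^\perp,\, u_\star}.
\]
Splitting $v_t^\perp = \vtaux^\perp + (v_t^\perp - \vtaux^\perp)$, the first piece matches the left-hand side of \eqref{ineq:indepbound1a} (after a Cauchy--Schwarz step absorbing $\twonorm{u_\star}=1$), while the residual is bounded by $\delta \twonorm{v_t^\perp - \vtaux^\perp}$ using the operator-norm estimate $\twonorm{\mathcal{O}^*\mathcal{D}(u_\star v_\star^\top)} \le \delta$ from \eqref{eq:res_rip1}. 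Combining these two contributions proves \eqref{ineq:intern5}.

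For \eqref{ineq:intern6}, identities (ii) and (iii) yield
\[
S := \innerproduct{\mathcal{A}(u_\star(v_t^\perp)^\top),\, \mathcal{A}(\utplushalf^\perp (v_t^\perp)^\top)} = \tfrac{1}{m}\sum_i \innerproduct{O_i^\top u_\star, v_t^\perp}\innerproduct{D_i, \utplushalf^\perp (v_t^\perp)^\top}.
\]
Let $\tilde{S}$ be the analogous expression with $(\utplushalf^\perp, v_t^\perp)$ replaced by $(\utplushalfaux^\perp, \vtaux^\perp)$. Since $O_i^\top u_\star$ has zero first entry, $\innerproduct{O_i^\top u_\star, \vtaux} = \innerproduct{O_i^\top u_\star, \vtaux^\perp}$, so $\tilde{S}$ is precisely the quantity controlled by \eqref{ineq:indepbound2a}; together with the upper RIP bound $\twonorm{\mathcal{A}(\utplushalfaux^\perp(\vtaux^\perp)^\top)} \lesssim 1$ (using $\twonorm{\utplushalfaux}\le 2$, $\twonorm{\vtaux^\perp}\le 1$), this gives $|\tilde{S}| \lesssim \sqrt{(\log T + \log \eeta)/m}$. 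For $|S - \tilde{S}|$, I would telescope through intermediate sums $S_1, S_2$, swapping $\utplushalf^\perp \to \utplushalfaux^\perp$ in the $D$-factor, then $v_t^\perp \to \vtaux^\perp$ inside the $D$-factor, and finally $v_t^\perp \to \vtaux^\perp$ inside the $O$-factor. Each increment has the form $\innerproduct{\mathcal{O}(u_\star y^\top), \mathcal{D}(zw^\top)} = u_\star^\top[\mathcal{O}^*\mathcal{D}(zw^\top)]y$ with rank-one $zw^\top$, and is bounded by $\delta \twonorm{y}\twonorm{z}\twonorm{w}$ via \eqref{eq:res_rip1}. Plugging in the bounds $\twonorm{v_t^\perp}, \twonorm{\vtaux^\perp} \le 1$ and $\twonorm{\utplushalfaux^\perp} \le \twonorm{\utplushalfaux} \le 2$ yields $|S - \tilde{S}| \lesssim \delta \twonorm{\utplushalf^\perp - \utplushalfaux^\perp} + \delta\twonorm{v_t^\perp - \vtaux^\perp}$, matching the form of \eqref{ineq:intern6}.

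The main obstacle will be arranging the telescoping so that every increment remains a rank-one bilinear form against $\mathcal{O}^*\mathcal{D}$ applied to a rank-one matrix: a careless swap could produce an intermediate expression in which either argument of $\mathcal{O}^*\mathcal{D}$ has rank two, and the RIP-type bound \eqref{eq:res_rip1} would then only give a Frobenius-norm estimate too weak to close the argument. A secondary piece of bookkeeping is distributing the unit-or-near-unit norm factors $\twonorm{v_t^\perp}, \twonorm{\vtaux^\perp}, \twonorm{\utplushalfaux^\perp}$ across the three telescoping increments so as to land on the specific constants $\delta$ and $2\delta$ in front of $\twonorm{\utplushalf^\perp - \utplushalfaux^\perp}$ and $\twonorm{v_t^\perp - \vtaux^\perp}$ in the statement.
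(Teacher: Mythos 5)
Your proposal is correct and follows essentially the same route as the paper's own proof: both split $v_t^\perp$ (and, for the second bound, $\utplushalf^\perp$) into the auxiliary vector plus a difference, control the pure-auxiliary term via Lemmas~\ref{lemma:indepbound1}/\ref{lemma:indepbound2} after rewriting in terms of $(A_i)_{1,1}$ and $O_i$, and bound the telescoping increments by the RIP (the paper invokes the cross-term estimate \eqref{eq:rop} on orthogonal rank-one matrices, which is interchangeable with your $\mathcal{O}^*\mathcal{D}$ operator-norm reformulation). Your worry about rank-two intermediates does not materialize here—every increment stays rank one by construction—and your telescoping yields the same $4\delta\twonorm{v_t^\perp-\vtaux^\perp}$ coefficient the paper's proof actually derives before absorbing it into the stated constant.
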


\begin{remark}\label{remark:conc_ineq}
The inequalities in Lemmas \ref{lemma:randombound1} to \ref{lemma:nearindependencebounds}
will be used to analyze the update of $u_{t}$ to $u_{t+1}$ given $v_t$ by the normal equation in \eqref{eq:normal} (by the least-squares minimization step and by the normalization step in Algorithm \ref{alg:als}). 
To analyze the ALS update from $v_t $ to $v_{t+1} $ given $ u_{t+1}$ we will need analogous inequalities in order to be able to analyze these updates.
Due to symmetry of the problem, the statements and proofs of these analogous results can be obtained in an analogous way. 
For this reason, to keep the presentation concise we omit the statements and proofs of analogous versions of these lemmas.
\end{remark}

\subsection{Phase 1: From random initialization to a local neighborhood of the ground truth}

Since the initialization vector $v_0 \in \mathbb{R}^{n_2}$ is chosen from the sphere with uniform distribution, with probability at least $1- \mathcal{O} \left( n_2^{-1}\right)$, the random initialization $v_0 \in \mathbb{R}^{n_2}$ satisfies
\begin{equation}\label{ineq:init}
\big\Vert v^{\parallel}_0 \big\Vert = \vert \innerproduct{v_0,\vstar } \vert \ge \frac{1}{2\sqrt{n_2 \log n_2}}.
\end{equation}
Then the following proposition illustrates the convergence properties of the ALS iterates $ \left\{ \ut \right\}_t $ and $  \left\{ \vt \right\}_t $ to a neighborhood of $\vstar$ in Phase 1. 

\begin{proposition}\label{prop:stage1}
There exists a numerical constant $c>0$ for which the following holds. 
Suppose that 
\begin{itemize}
    \item[i)] $\mathcal{A}$ and $\mathcal{\tilde{A}}$ satisfy RIP with constant $\delta = \frac{c}{4\log n_2}$.
    \item[ii)] $m \ge \delta^{-2} \left( n_1 + n_2 \right) \log n_2 \log |T|$, where $T = \left\lceil \frac{\log n_2}{4\log\log n_2} \right\rceil$. 
    \item[iii)] \cref{ineq:randombound1a,ineq:randombound1b} hold. 
    \item[iv)] \cref{ineq:indepbound1a,ineq:indepbound1b,ineq:indepbound2a,ineq:indepbound2b} hold for all $t \in [T]$ with $\eeta = n_2 $. 
    \item[v)] $v_0$ satisfies \eqref{ineq:init}. 
    \item[vi)] Analogous inequalities of iii) and iv) hold for updating $v_{t}$ to $v_{t+1}$ given $u_{t+1}$ (see Remark \ref{remark:conc_ineq}) with $ \eeta=n_2 $. 
\end{itemize}
Then for every $t \in [T]$ it holds that
\begin{equation}
\Vert \vt^{\parallel} \Vert 
\ge \left( \log n_2 \right)^{2t} \cdot \Vert v_0^{\parallel} \Vert \ge \frac{\left( \log n_2 \right)^{2t}}{2\sqrt{n_2 \log n_2}} \label{ineq:induction1}
\end{equation}
and
\begin{equation}
\max \left\{ \twonorm{ \vt^\parallel - \vtaux^\parallel };  \twonorm{ \vt^\perp - \vtaux^\perp }  \right\} 
\le \ct \Vert \vt^{\parallel} \Vert, \label{ineq:induction2} 
\end{equation}
where $c_t$ is defined in \eqref{def:ct} until we have that
\begin{equation}\label{ineq:end_stage1}
\min\left\{ \twonorm{\vt^\parallel}, \twonorm{\ut^\parallel} \right\} 
\ge \frac{c}{\log n_2}. 
\end{equation}
\end{proposition}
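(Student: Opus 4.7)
The natural approach is induction on $t$, treating each half-step (the $u$-update given $v_t$, then the $v$-update given $u_{t+1}$) separately, and maintaining throughout Phase 1 both a lower bound on the ``signal'' $\|v_t^\parallel\|$ and an upper bound on the deviation of the auxiliary sequence from the true one. The base case $t=0$ is immediate: by construction $\tilde v_0 = v_0$ so the closeness bound is vacuous with $c_0 = 0$, and \eqref{ineq:init} supplies the initial lower bound $\|v_0^\parallel\| \ge 1/(2\sqrt{n_2 \log n_2})$.

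For the inductive step the core task is to extract, from the normal equation \eqref{eq:normal}, a clean growth relation of the form $\|u_{t+1}^\parallel\| \ge \log n_2 \cdot \|v_t^\parallel\|$ (and symmetrically for the $v$-update), so that after two half-steps $\|v_{t+1}^\parallel\|$ has grown by a factor $(\log n_2)^2$. The perpendicular piece $\|u_{t+1/2}^\perp\|$ is controlled directly by RIP as in \eqref{eq:intern1}, giving $\|u_{t+1/2}^\perp\| \le \frac{\delta}{1-\delta}\|v_t^\perp\| = O(1/\log n_2)$. For the parallel piece the naive RIP bound \eqref{eq:intern2} is much too weak (since $\|v_t^\perp\| \approx 1$ in the first few iterations), so I would instead use the auxiliary sequence: write
\[
u_{t+1/2}^\parallel = \langle v_t, v_\star\rangle u_\star + \big\langle u_\star,\, \big[(\Id - \mathcal{A}^*\mathcal{A})(u_{t+1/2} v_t^\top - u_\star v_\star^\top)\big] v_t\big\rangle u_\star,
\]
and bound the last inner product by (i) replacing $\mathcal{A}^*\mathcal{A}$ by $\tilde{\mathcal{A}}^*\tilde{\mathcal{A}}$ and $(u_{t+1/2}, v_t)$ by $(\tilde u_{t+1/2}, \tilde v_t)$ via Lemmas~\ref{lemma:stochasticbound} and~\ref{lemma:help1} (plus Lemma~\ref{lemma:nearindependencebounds}), and (ii) using the near-independence of the auxiliary sequence to beat the RIP bound on the remaining auxiliary inner product. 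The two sources of error, $\delta \|v_t - \tilde v_t\|$ and $\delta \|u_{t+1/2} - \tilde u_{t+1/2}\|$, are controlled by the inductive closeness hypothesis \eqref{ineq:induction2} and turn out to be $O(\delta)\|v_t^\parallel\| = o(\|v_t^\parallel\|)$.

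Putting the perpendicular and parallel estimates together yields $\|u_{t+1/2}^\parallel\| \ge (1-o(1))\|v_t^\parallel\|$ while $\|u_{t+1/2}\| \le \|u_{t+1/2}^\parallel\| + O(1/\log n_2)$. Since we are in the regime where $\|v_t^\parallel\|$ is still much smaller than $1/\log n_2$ (otherwise \eqref{ineq:end_stage1} is already satisfied), normalization by $\|u_{t+1/2}\|$ produces the desired multiplicative boost of $\log n_2$: concretely, $\|u_{t+1}^\parallel\| = \|u_{t+1/2}^\parallel\|/\|u_{t+1/2}\| \ge \log n_2 \cdot \|v_t^\parallel\|$. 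Running the symmetric argument for the $v$-update (using the analogous concentration inequalities from hypothesis (vi)) gives the second factor of $\log n_2$ and establishes \eqref{ineq:induction1}. For the closeness bound \eqref{ineq:induction2} I would invoke Lemma~\ref{lemma:closeness4}: each half-step multiplies the relative error $\|v_t - \tilde v_t\|/\|v_t^\parallel\|$ by at most $(1 + 1/\log n_2)$, so after $2t+2$ half-steps it is at most $(1 + 1/\log n_2)^{2t+2} - 1 = c_{2t+2}$. Termination after $T = \lceil \log n_2 / (4 \log\log n_2) \rceil$ iterations follows from $(\log n_2)^{2T}/\sqrt{n_2 \log n_2} \gtrsim 1/\log n_2$.

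\textbf{Main obstacle.} The delicate point, already flagged in the excerpt, is propagating the near-independence through the normalization step: dividing by $\|u_{t+1/2}\|$ can amplify additive errors, and one must check that the multiplicative error in the auxiliary sequence grows by only a factor of $1 + 1/\log n_2$ per half-step (the geometric growth rate built into the definition of $c_t$). This forces a careful comparison of numerator errors and denominator errors in $u_{t+1} - \tilde u_{t+1}$, which is exactly what Lemma~\ref{lemma:closeness4} is set up to deliver. A secondary but important bookkeeping task is verifying that the tolerated deviation $c_{2t+2}\|v_t^\parallel\|$ in \eqref{ineq:induction2} indeed stays below the slack $o(\|v_t^\parallel\|)$ demanded at each step by the parallel-part estimate above, which is guaranteed by $c_{2T} = O(1/\log\log n_2)$ throughout Phase 1.
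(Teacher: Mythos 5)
Your proposal is essentially the paper's own proof: the same induction scheme, the same base case, the same lemma stack (perpendicular control via RIP as in Lemma~\ref{lemma:perpdecrease}, parallel control via the auxiliary sequence and near-independence via Lemmas~\ref{lemma:closeness3} and~\ref{lemma:uparallelbound}, normalization boost as packaged in Lemma~\ref{lemma:convergence}, and propagation of the closeness constant $c_t$ through normalization via Lemma~\ref{lemma:closeness4}), and the same termination argument from $(\log n_2)^{2T} \gtrsim \sqrt{n_2}$. The only minor slip is in the attribution of lemmas: Lemmas~\ref{lemma:stochasticbound} and~\ref{lemma:help1} are invoked inside Lemma~\ref{lemma:closeness3} to keep the auxiliary sequence close to the original, while the ``beat-the-RIP-bound'' step for $\|u_{t+1/2}^\parallel\|$ is carried out in Lemma~\ref{lemma:uparallelbound} using Lemma~\ref{lemma:nearindependencebounds} together with the closeness estimate~\eqref{ineq:closeness5} already established; this is a bookkeeping detail that does not affect the correctness of your plan.
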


\begin{proof}[Proof of Proposition \ref{prop:stage1}]
It suffices to only consider the case when the initialization vector $v_0 \in \mathbb{R}^{n_2}$ does not satisfy \eqref{ineq:end_stage1}. Otherwise there is nothing to prove.

We are going to show by induction that \eqref{ineq:induction1} and \eqref{ineq:induction2} hold until condition \eqref{ineq:end_stage1} is fulfilled. In particular, note that by our choice of $T$ this immediately implies that \eqref{ineq:end_stage1} holds for some $t\le T$.
For the base case, observe that for $t=0$ the two inequalities in \eqref{ineq:induction1} and \eqref{ineq:induction2} are satisfied since we have $v_0 = \tilde{v}_0$ by definition and since we assume that inequality \eqref{ineq:init} holds.

For the induction step, suppose that the statements hold for some natural number $t$ with $t \le T$.
Then we will show that the statements also hold for $t+1$ whenever \eqref{ineq:end_stage1} is not yet satisfied. 
To this end, we first show that the estimation error and the norm of the next least-squares update $\utplushalf$ are upper-bounded as shown in the following lemma. It is proved in Appendix~\ref{app:perpdecrease}.

\begin{lemma}\label{lemma:perpdecrease}
Suppose that $\mathcal{A}$ satisfies RIP for $0 < \delta < 1$ and $\Vert v_t \Vert=1$. Then it holds that
\begin{equation}\label{equ:bound1}
\big\Vert \utplushalf - \innerproduct{\vstar ,v_t} \ustar  \big\Vert \le  \frac{\delta}{1-\delta} \Vert \vt^{\perp}\Vert.
\end{equation}
In particular, it follows that
\begin{equation}\label{equ:bound2}
\Vert \utplushalf^\perp\Vert \le  \frac{\delta}{1-\delta}\Vert \vt^{\perp}\Vert.
\end{equation}
Moreover, for $\delta \le \frac{1}{2}  $, we have that
\begin{equation}\label{equ:bound3}
\Vert \utplushalf \Vert \le 2.
\end{equation}
\end{lemma}	

\noindent Analogously, since $\tilde{\mathcal{A}}$ also satisfies the RIP with the same constant $\delta$ and since $\twonorm{\vtaux}=1$ holds, we also have
\begin{align}
\big\Vert \utplushalfaux - \innerproduct{\vstar ,\vtaux} \ustar  \big\Vert &\le  \frac{\delta}{1-\delta} \Vert \vtaux^{\perp}\Vert, \label{equ:bound4} \\
\Vert \utplushalfaux^\perp\Vert &\le  \frac{\delta}{1-\delta}\Vert \vtaux^{\perp}\Vert,\label{equ:bound5} \\
\Vert \utplushalfaux \Vert &\le 2. \label{equ:bound6}
\end{align}

\noindent Given the upper estimates in \cref{equ:bound3,equ:bound6}, the next lemma, proven in Appendix~\ref{app:closeness3}, shows that the distances between the least-square updates of the original and auxiliary sequences stay close each other. 

\begin{lemma}\label{lemma:closeness3}
Under the hypothesis of Proposition~\ref{prop:stage1}, suppose that \cref{equ:bound3,equ:bound6} hold. 
Let $t\in \mathbb{N}$ and assume furthermore that the inequalities \eqref{ineq:induction1} and \eqref{ineq:induction2} hold.
Then there exists an absolute constant $C_1>0$ for which the followings hold:
\begin{align}
\twonorm{  	\utplushalf^{\parallel} - 	\utplushalfaux^{\parallel}}	
&\le \left( \ct + C_1 \delta(1+\ct) \right) \twonorm{ \vt^\parallel }  ,\label{ineq:closeness4}\\
\twonorm{  	\utplushalf^{\perp} - 	\utplushalfaux^{\perp}} 	&\le C_1 \delta \left(1+\ct\right) \twonorm{\vt^\parallel}.  \label{ineq:closeness5}
\end{align}
\end{lemma}	

\noindent The upper estimates in \eqref{ineq:closeness4} and \eqref{ineq:closeness5} imply that $\twonorm{\utplushalf^\parallel}$ is close to $\twonorm{ \vt^{\parallel} }$, which is stated in the following lemma, see Appendix~\ref{app:uparallelbound}. 

\begin{lemma}\label{lemma:uparallelbound}
Under the hypothesis of Proposition~\ref{prop:stage1}, suppose that \cref{equ:bound3,ineq:closeness4,ineq:closeness5} hold. 
Moreover, let $t\in \mathbb{N}$ and assume that the inequalities \eqref{ineq:induction1} and \eqref{ineq:induction2} hold.
Then there exists an absolute constant $C_2>0$ for which the followings hold:
\begin{equation}\label{ineq:uparallelbound}
  \left( 1- C_2 \delta (1+\ct) \right)  \big\Vert   \utplushalf^{\parallel} \big\Vert \le \Vert \vt^{\parallel} \Vert \le   \left( 1+ C_2 \delta \left(1+\ct\right) \right)  \big\Vert   \utplushalf^{\parallel} \big\Vert.
\end{equation}	
\end{lemma}
\begin{remark}
Later on, we will in fact only use the upper bound on $ \Vert \utplushalf^{\parallel} \Vert  $ in inequality \eqref{ineq:uparallelbound}.
As there is no additional effort required in proving the lower bound as well, we also decided to include it in this manuscript.
\end{remark}

\noindent Moreover, since $T = \left\lceil \frac{\log n_2}{4 \log\log n_2} \right\rceil$, it follows that $c_{2t}$ is bounded from above by an absolute constant for all $t\le T$, which is formally stated in the following lemma. 

\begin{lemma}\label{lemma:ctbound} 
Then for all $t \le 2T \leq \left\lceil \frac{\log n_2}{2 \log\log n_2} \right\rceil + 1$ it holds that $c_t$ defined in \eqref{def:ct} satisfies $c_t \leq C$ for an absolute constant $C$. 
\end{lemma}
\begin{proof}
For all $t \leq 2T$ we have 
\begin{align*}
c_t + 1 
&= \exp \left( t \log \left( 1+ \frac{1}{\log n_2}  \right)  \right) \\
& \overleq{(a)} \exp \left(t/\log n_2\right) \\
& \le \exp \left(2T/\log n_2\right) \\
& \le \exp \left( \frac{1}{2\log\log n_2} + \frac{2}{\log n_2} \right)\\
& \le C
, 
\end{align*}	
where $(a)$ follows from the elementary inequality $\log \left(1+x\right)\le x$ for $x>0$. 
\end{proof}

\noindent Hence, for sufficiently small $c>0$, \eqref{ineq:uparallelbound} implies that
\begin{equation}\label{ineq:vtplushalfnorm}
\frac{1}{2}\Vert \vt^{\parallel} \Vert \le  \big\Vert   \utplushalf^{\parallel} \big\Vert. 
\end{equation}
The next lemma, proved in Appendix~\ref{app:proof_normalization}, shows that the original and auxiliary sequences stay close in $\ell_2$-distance under the conditions derived above. 

\begin{lemma}\label{lemma:closeness4}
Under the hypothesis of Proposition~\ref{prop:stage1}, suppose that \cref{ineq:closeness4,ineq:closeness5,ineq:uparallelbound} hold. 
Moreover, suppose that $c>0$ is chosen small enough (smaller than an absolute constant depending only on $C_1,C_2,C_3$).
Then it follows that
\begin{equation}\label{ineq:parallelauxcloseness}
\max \left\{ \twonorm{ \utplus^\parallel - \utplusaux^\parallel };  \twonorm{ \utplus^\perp - \utplusaux^\perp }  \right\} \le c_{2t+1} \cdot \twonorm{ \utplus^\parallel  }.
\end{equation}
\end{lemma}

\noindent We further proceed with the following lemma, which shows how the estimation error propagates with the normalization. The proof is provided in Appendix~\ref{app:convergence}.

\begin{lemma}\label{lemma:convergence}
Suppose that $\twonorm{\vt}=1$ and that for fixed $ t \in \mathbb{N} $ and real numbers $0<\beta < \alpha <1$ it holds that
\begin{align}
\Vert \utplushalf^{\parallel} \Vert^2&\ge \alpha \Vert  v^{\parallel}_t \Vert^2, \label{ineq:intern99_1} \\
\Vert \utplushalf^{\perp} \Vert^2 &\le  \beta    \Vert  v^{\perp}_t \Vert^2.  \label{ineq:intern99_2}
\end{align}
Then, whenever $\vt^{\parallel} \ne 0$, it holds that
\begin{equation}\label{ineq:convergence1}
\Vert \utplus^{\parallel} \Vert^2 
\ge  \frac{ \alpha \Vert \vt^{\parallel} \Vert^2}{ \beta + \left(\alpha - \beta \right) \Vert \vt^{\parallel} \Vert^2}
\ge \frac{  \Vert \vt^{\parallel} \Vert^2 }{ \frac{\beta}{\alpha} +\twonorm{\vt^\parallel}^2  } 
\end{equation}
and, moreover,
\begin{equation}\label{ineq:convergence2}
\Vert \utplus^{\perp} \Vert^2 \le  \frac{\beta}{\alpha \twonorm{\vt^\parallel}^2} \cdot  \twonorm{\vt^\perp}^2.
\end{equation}
\end{lemma}

\noindent Note that due to \eqref{equ:bound2} with $\delta < \frac{1}{2}$ and due to \eqref{ineq:vtplushalfnorm} the assumptions in Lemma~\ref{lemma:convergence} are satisfied with $\alpha = \frac{1}{4}$ and $\beta = 4\delta^2$. 
Therefore, with $\delta = \frac{c}{4\log n_2}$ and $\twonorm{\vt^\parallel} \le \frac{c}{\log n_2}$ we obtain that 
\begin{align}\label{equ:aux37}
\twonorm{\utplus^\parallel}^2 
& \ge 
\frac{\twonorm{\vt^\parallel}^2}{16\delta^2 + \twonorm{\vt^\parallel}^2} 
\ge \frac{2\log n_2}{c} \twonorm{\vt^\parallel}^2
\ge \left(  \frac{ 2\log n_2}{c} \right)^{2t+1} \twonorm{v_0^\parallel}. 
\end{align}
Since we have shown \eqref{ineq:parallelauxcloseness} and \eqref{equ:aux37} this finishes the induction step for $\utplus$.
With exactly the same reasoning we can then prove the inequalities
\begin{align}
\Vert \vtplus^{\parallel} \Vert 
&\ge \left( \frac{2 \log n_2}{c} \right)^{2t+2} \Vert v_0^{\parallel} \Vert, \label{eq:induction1_with_c} \\
\max \left\{ \twonorm{ \vtplus^\parallel - \vtplusaux^\parallel };  \twonorm{ \vtplus^\perp - \vtplusaux^\perp }  \right\} &\le c_{2t+2} \Vert \vtplus^{\parallel} \Vert. \nonumber
\end{align}
This shows inequalities \eqref{ineq:induction1} and \eqref{ineq:induction2} for $t+1$. 
Note that by choosing $c < \frac{1}{2}$ inequality \eqref{eq:induction1_with_c} implies \eqref{ineq:induction1}.
This completes the induction step. 
\end{proof}

\subsection{Phase 2: Linear convergence by RIP}

We enter the second phase as soon as the iterates are sufficiently aligned with the ground truth solution, that is when condition \eqref{ineq:end_stage1} is satisfied.
Once we enter the second phase, our iterates converge linearly to the ground truth as it is shown by the next proposition, which describes the second phase.
\begin{proposition}\label{prop:stage2}
There exists a numerical constant $c'>0$ for which the following holds. 
Suppose that $\mathcal{A}$ satisfies RIP with constant $\delta = \frac{c'}{8\log n_2}$ and either $\Vert v_{ \hat{t}}^{\parallel} \Vert > \frac{c'}{\log n_2}$ or $\Vert u_{ \hat{t}}^{\parallel} \Vert > \frac{c'}{\log n_2}$ for some $\hat{t} \in \mathbb{N}$. 
Then it holds that for all $t > \hat{t}$
\begin{align}
\twonorm{\utplus^\perp}	
\le  \frac{1}{2} \left( \frac{1}{2\log n_2} \right)^{2(t-\hat{t})} \twonorm{ \vthat^\perp }
\quad \text{and} \quad
\twonorm{v_{t+1}^\perp}	
\le \frac{1}{2} \left( \frac{1}{2\log n_2} \right)^{2(t-\hat{t})+1}  \twonorm{ \vthat^\perp }. \label{ineq:aux71}
\end{align}
\end{proposition}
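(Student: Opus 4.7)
I plan to prove \eqref{ineq:aux71} by induction on $t \ge \hat{t}$, using only Lemma~\ref{lemma:perpdecrease} (together with its verbatim analog for the $v$-update) combined with Lemma~\ref{lemma:convergence}; no auxiliary-sequence machinery is needed in Phase~2. Fix $\delta = c'/(8\log n_2)$, where $c'>0$ is a sufficiently small numerical constant. Without loss of generality assume $\twonorm{\vthat^\parallel} > c'/\log n_2$; the symmetric case is handled identically by swapping the roles of $u$ and $v$.

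\textbf{Base case ($t=\hat{t}$).} The first ALS update from $\vthat$ to $\utplushat$ must be treated separately, since $\twonorm{\vthat^\parallel}$ may be as small as $c'/\log n_2$. Lemma~\ref{lemma:perpdecrease} yields $\twonorm{\utplushalfhat^\perp} \le 2\delta\,\twonorm{\vthat^\perp}$, and using $2\delta \le \tfrac14 \twonorm{\vthat^\parallel}$ also
\begin{equation*}
\twonorm{\utplushalfhat^\parallel} \ge \twonorm{\vthat^\parallel} - 2\delta \ge \tfrac{3}{4}\twonorm{\vthat^\parallel}.
\end{equation*}
Applying Lemma~\ref{lemma:convergence} with $\alpha=9/16$ and $\beta=4\delta^2$ then gives
\begin{equation*}
\twonorm{\utplushat^\perp}^2 \le \frac{64\,\delta^2}{9\,\twonorm{\vthat^\parallel}^2}\twonorm{\vthat^\perp}^2 \le \frac{64\,\delta^2(\log n_2)^2}{9(c')^2}\twonorm{\vthat^\perp}^2 = \tfrac{1}{9}\twonorm{\vthat^\perp}^2,
\end{equation*}
so $\twonorm{\utplushat^\perp} \le \tfrac12 \twonorm{\vthat^\perp}$, which is exactly \eqref{ineq:aux71} at $t=\hat{t}$. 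Since $\twonorm{\utplushat}=1$, we also obtain $\twonorm{\utplushat^\parallel}^2 \ge 8/9$, so one step has lifted the parallel component to order one.

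\textbf{Inductive step.} Suppose the iterate under consideration satisfies $\twonorm{\utplus^\parallel}^2 \ge 8/9$ for some $t \ge \hat{t}$ (just established for $t=\hat{t}$). The analog of Lemma~\ref{lemma:perpdecrease} for the $v$-update gives $\twonorm{\vtplushalf^\perp} \le 2\delta\,\twonorm{\utplus^\perp}$ and, via $\twonorm{\vtplushalf^\parallel} \ge \twonorm{\utplus^\parallel} - 2\delta$, also $\twonorm{\vtplushalf^\parallel}^2 \ge 1/2$ once $c'$ is small enough. The symmetric form of Lemma~\ref{lemma:convergence} then produces
\begin{equation*}
\twonorm{\vtplus^\perp}^2 \le \frac{4\,\delta^2}{(1/2)\cdot(8/9)}\twonorm{\utplus^\perp}^2 \le 9\,\delta^2\, \twonorm{\utplus^\perp}^2,
\end{equation*}
and choosing $c'$ small enough that $3\delta \le 1/(2\log n_2)$ gives $\twonorm{\vtplus^\perp} \le \tfrac{1}{2\log n_2}\twonorm{\utplus^\perp}$. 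Since $\twonorm{\vtplus}=1$, this in turn yields $\twonorm{\vtplus^\parallel}^2 \ge 8/9$, and the argument runs again for the next $u$-update by symmetry. Chaining the base case with $2(t-\hat{t})$ further inductive half-steps produces both bounds in \eqref{ineq:aux71}: one initial factor $1/2$ followed by $2(t-\hat{t})$ additional factors $1/(2\log n_2)$.

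\textbf{Expected obstacle.} The proof is largely mechanical — the RIP supplies all the contraction needed inside the local neighborhood, and no near-independence arguments are required. The only subtle point is the asymmetry of the very first step: when $\twonorm{\vthat^\parallel}$ is only of order $1/\log n_2$, the ratio $\delta/\twonorm{\vthat^\parallel}$ is merely an absolute constant, so that step contracts by a constant factor only and not by $1/(2\log n_2)$. Proposition~\ref{prop:stage2} accommodates this through the explicit $1/2$ in \eqref{ineq:aux71}. Once that first step is absorbed the parallel component is $\Theta(1)$, and every subsequent half-step then contributes the full RIP-driven factor $O(\delta) = O(1/\log n_2)$.
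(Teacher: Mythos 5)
Your proof is correct and follows essentially the same route as the paper's: WLOG reduce to $\twonorm{\vthat^\parallel} > c'/\log n_2$, use Lemma~\ref{lemma:perpdecrease} and Lemma~\ref{lemma:convergence} to obtain a constant-factor contraction for the first half-step (where $\delta/\twonorm{\vthat^\parallel}$ is only $O(1)$), and then iterate with the resulting $\Omega(1)$ parallel bound to get the full $O(1/\log n_2)$ contraction per half-step. The only cosmetic differences are in the choice of $\alpha$ in the inductive half-steps (you use $\alpha=1/2$ via the absolute bound $\twonorm{\vtplushalf^\parallel}^2 \ge 1/2$, whereas the paper keeps $\alpha = 9/16$) and the resulting per-step factor ($3\delta$ vs.\ the paper's $2\sqrt{2}\delta$), both of which lead to the same conclusion for sufficiently small $c'$.
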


\begin{proof}
Due to the symmetry of the argument, we may assume without loss of generality that 
\begin{equation}\label{ineq:aux51}
\Vert \vthat^{\parallel} \Vert > \frac{c'}{\log n_2} = 8\delta.
\end{equation}
Next we show that
\begin{equation}\label{ineq:aux53}
	\twonorm{ \utplushat^\perp } \le \frac{1}{3} \, \twonorm{\vthat^\perp}.
\end{equation}
By choosing the absolute constant $c'$ small enough, we may assume that $\delta < \frac{1}{2}$. 
Then by Lemma \ref{lemma:perpdecrease} and the RIP of $\mathcal{A}$ we have
\begin{equation*}
	\big\Vert \utplushalfhat - \innerproduct{\vstar ,\vthat} \ustar  \big\Vert \le  \frac{\delta}{1-\delta} \Vert \vthat^{\perp}\Vert \le 2\delta  \Vert \vthat^{\perp}\Vert.
\end{equation*}
This implies 
\begin{equation*}
    \twonorm{ \utplushalfhat^\perp }
    \le 2\delta \twonorm{ \vthat^\perp }
\end{equation*}
as well as 
\begin{equation}\label{eq:stage2_vthatnorm1}
    \Big\vert \twonorm{\utplushalfhat^\parallel} - \twonorm{\vthat^\parallel} \Big\vert 
    = \Big\vert \twonorm{\utplushalfhat^\parallel} - \twonorm{\innerproduct{\vstar ,\vthat} \ustar} \Big\vert
    \le \big\Vert \utplushalfhat - \innerproduct{\vstar ,\vthat} \ustar  \big\Vert
    \le 2\delta \twonorm{ \vthat^\perp } \le 2 \delta,
\end{equation}
where in the last inequality we used that $\twonorm{ \vthat^\perp } \le \twonorm{ \vthat } =1 $.
In particular, the inequality in \eqref{eq:stage2_vthatnorm1} implies that
\begin{align*}
	\twonorm{\utplushalfhat^\parallel} 
	\ge  \twonorm{\vthat^\parallel} - 2\delta  
	\ge  \frac{3}{4} \, \twonorm{\vthat^\parallel},
\end{align*}
where the last inequality follows from \eqref{ineq:aux51}.
Hence, setting $\alpha =  \frac{9}{16}$ and $ \beta = 4\delta^2 $, Lemma \ref{lemma:convergence} and \eqref{ineq:aux51} yield \begin{align}\label{ineq:aux75}
\twonorm{\utplushat^\perp}^2 
& \le \frac{\beta  }{\alpha \twonorm{\vthat^\parallel}^2} \cdot \twonorm{\vthat^\perp}^2 
\le \frac{1}{9} \, \twonorm{\vthat^\perp}^2.
\end{align}
This shows \eqref{ineq:aux53}. 
Next, one can show by induction that for $t > \hat{t}$
\begin{align*}
\twonorm{\utplus^\perp}	\le \left( \frac{c'}{2\sqrt{2} \log n_2} \right) \twonorm{\vt^\perp}
\quad \text{and} \quad 
\twonorm{v_{t+1}^\perp}	\le \left( \frac{c'}{2\sqrt{2} \log n_2} \right) \twonorm{u_{t+1}^\perp}.
\end{align*}
The proof of these inequalities is analogous to the proof of \eqref{ineq:aux53} except that in \eqref{ineq:aux75} we can use the estimate $ \twonorm{\vt^\parallel}^2 \ge \frac{8}{9} $ due to $\twonorm{\vt^\perp}^2 \le \twonorm{ u_{t-1}^\perp }^2 \le \frac{1}{9}$ instead of the weaker estimate $ \twonorm{ \vt^\parallel} \ge \frac{c'}{ \log n_2 }$.
Finally, one can choose $c'$ small so that \eqref{ineq:aux71} is satisfied. 
\end{proof}

\subsection{Finishing the proof of Theorem~\ref{thm:main}}
We deduce from \eqref{ineq:induction1} in Proposition~\ref{prop:stage1} that Phase 1 is completed after 
\begin{equation}\label{inequ:taux}
	\hat{t} \lesssim \frac{ \log n_2}{\log \log n_2}
\end{equation}
iterations.
Next, one observes immediately by a direct calculation that $ \sin\left(  \angle(u_t,\ustar )\right) = \twonorm{\ut^\perp}  $ and $  \sin\left(  \angle(\vt,\vstar )\right) = \twonorm{\vt^\perp}   $.
Moreover, one obtains from inequalities in \eqref{ineq:aux71} of Proposition~\ref{prop:stage2} that after 
\begin{equation*}
	t - \hat{t} \lesssim  \frac{\log(1/\varepsilon)}{\log \log n_2}
\end{equation*}
iterations it holds that $ \max \left\{ \twonorm{\ut^\perp} ; \twonorm{\vt^\perp}  \right\}  \le \varepsilon $.
Together with \eqref{inequ:taux} this finishes the proof of Theorem~\ref{thm:main}.

\section{Numerical experiments}

We present a set of Monte Carlo simulations to compare the theoretical bound in Theorem~\ref{thm:main} to the empirical performance of ALS from random initialization. According to the assumptions of Theorem~\ref{thm:main}, the measurement matrices were generated as independent copies of a random matrix with i.i.d. standard Gaussian entries. Observations were obtained without noise. 
In the first experiment, we compare the performance of ALS methods respectively from random initialization and from spectral initialization. 
Figure~\ref{fig:pt} plots the phase transition of the reconstruction error in this experiment. 
We vary the matrix size from 8 to 256 while the oversampling factor $m/(n_1+n_2)$ is between 1 and 3. 
As shown in Figures~\ref{fig:pt_si} and \ref{fig:pt_ri}, ALS from spectral initialization has larger success regime so that the reconstruction is achieved from fewer observations. 
In these plots, we displayed the median of the normalized reconstruction error over 100 random trials. 
Figure~\ref{fig:pt_ri} shows that compared to ALS from spectral initialization, the phase transition for ALS from random initialization occurs at a higher oversampling factor. The amount of excess observations scales as a poly-log of the matrix size, which coincides with the result in Theorem~\ref{thm:main}. 

\begin{figure}
    \centering
    \begin{subfigure}[b]{0.49\textwidth}
        \centering
        \includegraphics[width=\textwidth]{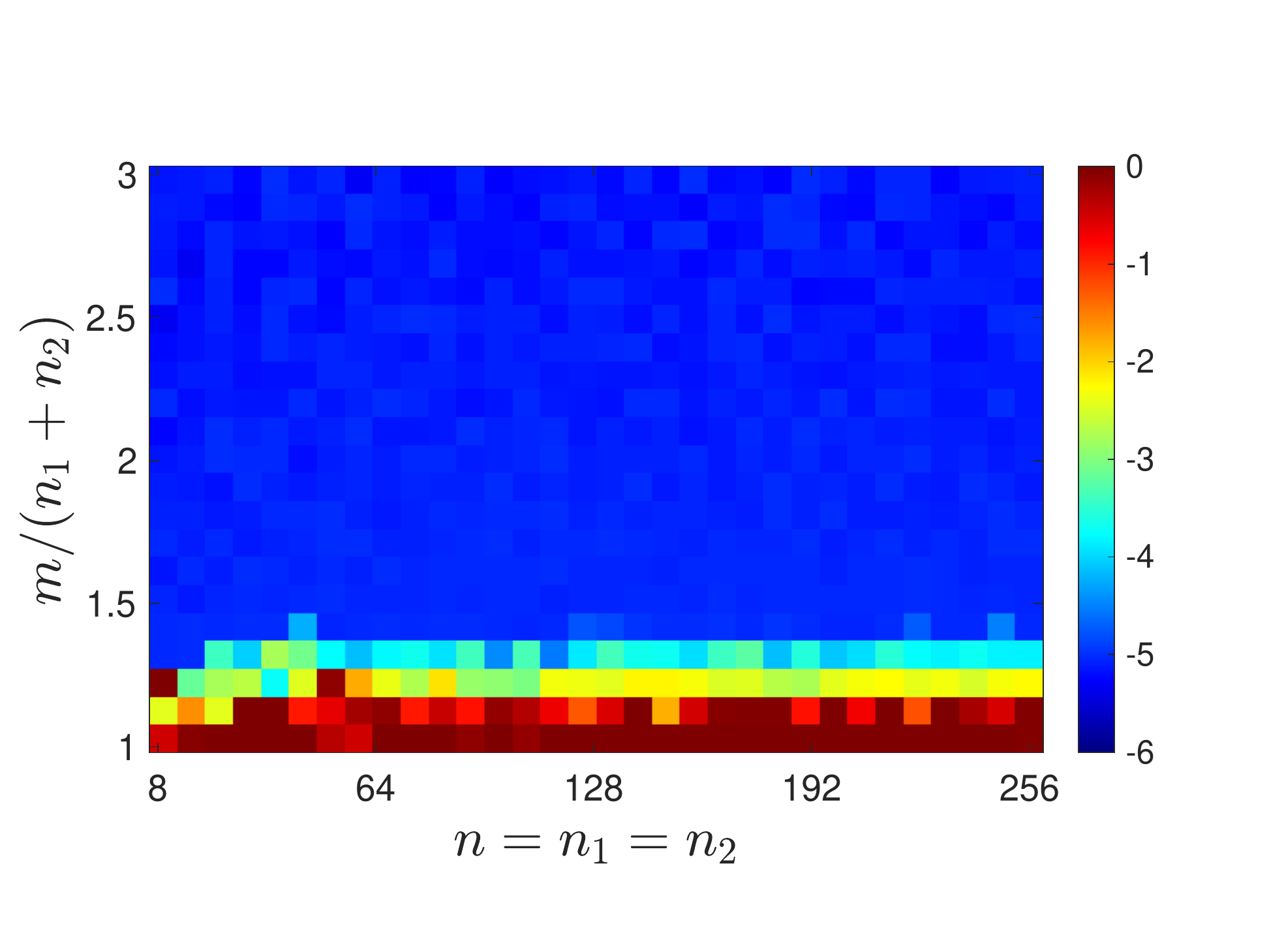}
        \caption{ALS from spectral initialization}
        \label{fig:pt_si}
    \end{subfigure}
    \hfill
    \begin{subfigure}[b]{0.49\textwidth}
        \centering
        \includegraphics[width=\textwidth]{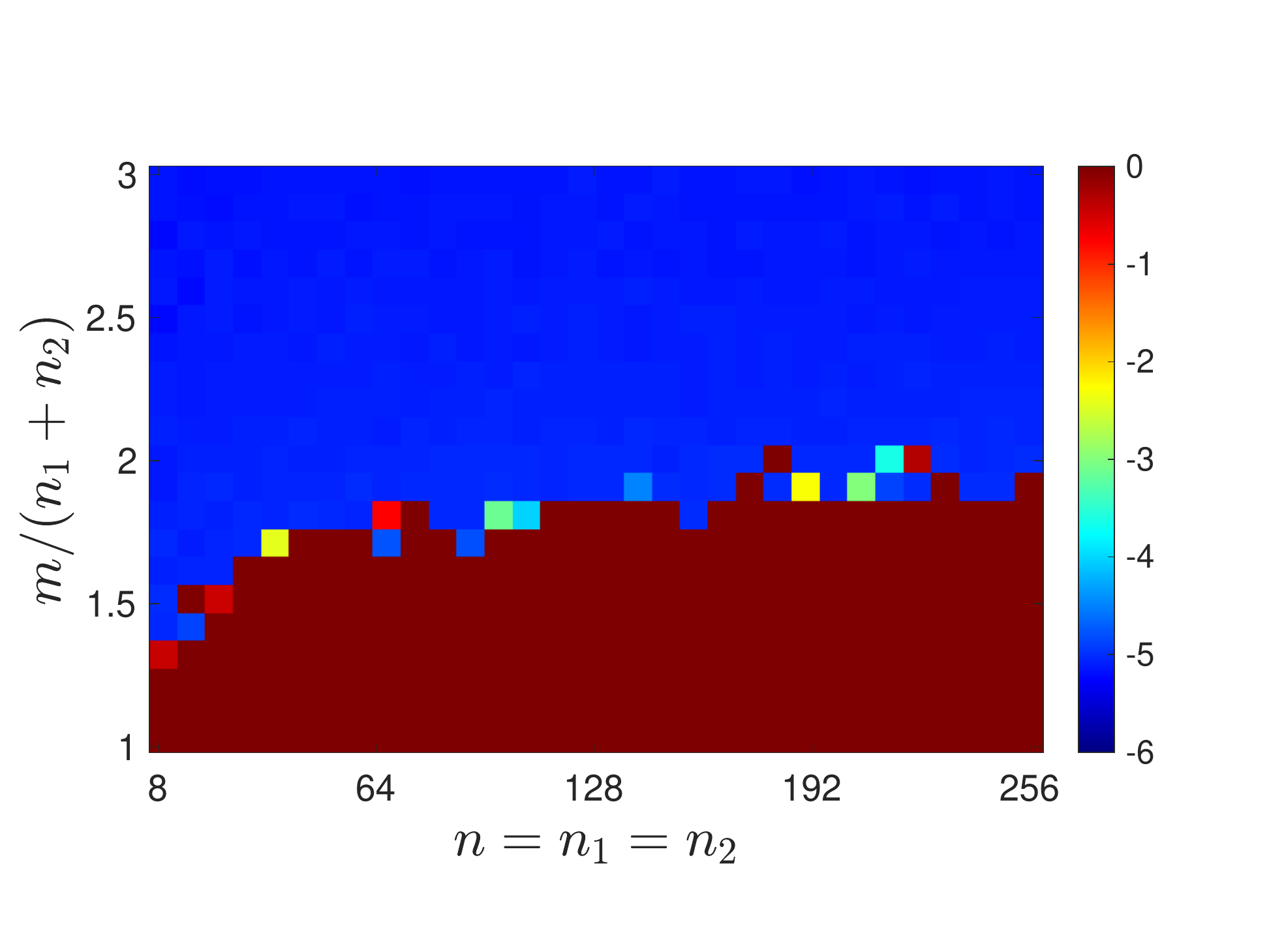}
        \caption{ALS from random initialization}
        \label{fig:pt_ri}
    \end{subfigure}
    \caption{Phase transition of reconstruction error}
    \label{fig:pt}
\end{figure}

Although the main result in Theorem~\ref{thm:main} is restricted to the rank-1 case, empirically, ALS from random initialization continues to work at a small oversampling factor when the rank of the unknown matrix becomes larger. 
We conducted the same experiment in Figure~\ref{fig:iter} in the rank-$r$ case, which is plotted in Figure~\ref{fig:iter_rankr}. 
One can observe that the same phase transition in Theorem~\ref{thm:main} occurs in the rank-5 case.  

\begin{figure}
    \centering
    \begin{subfigure}[b]{0.49\textwidth}
        \centering
        \includegraphics[width=\textwidth]{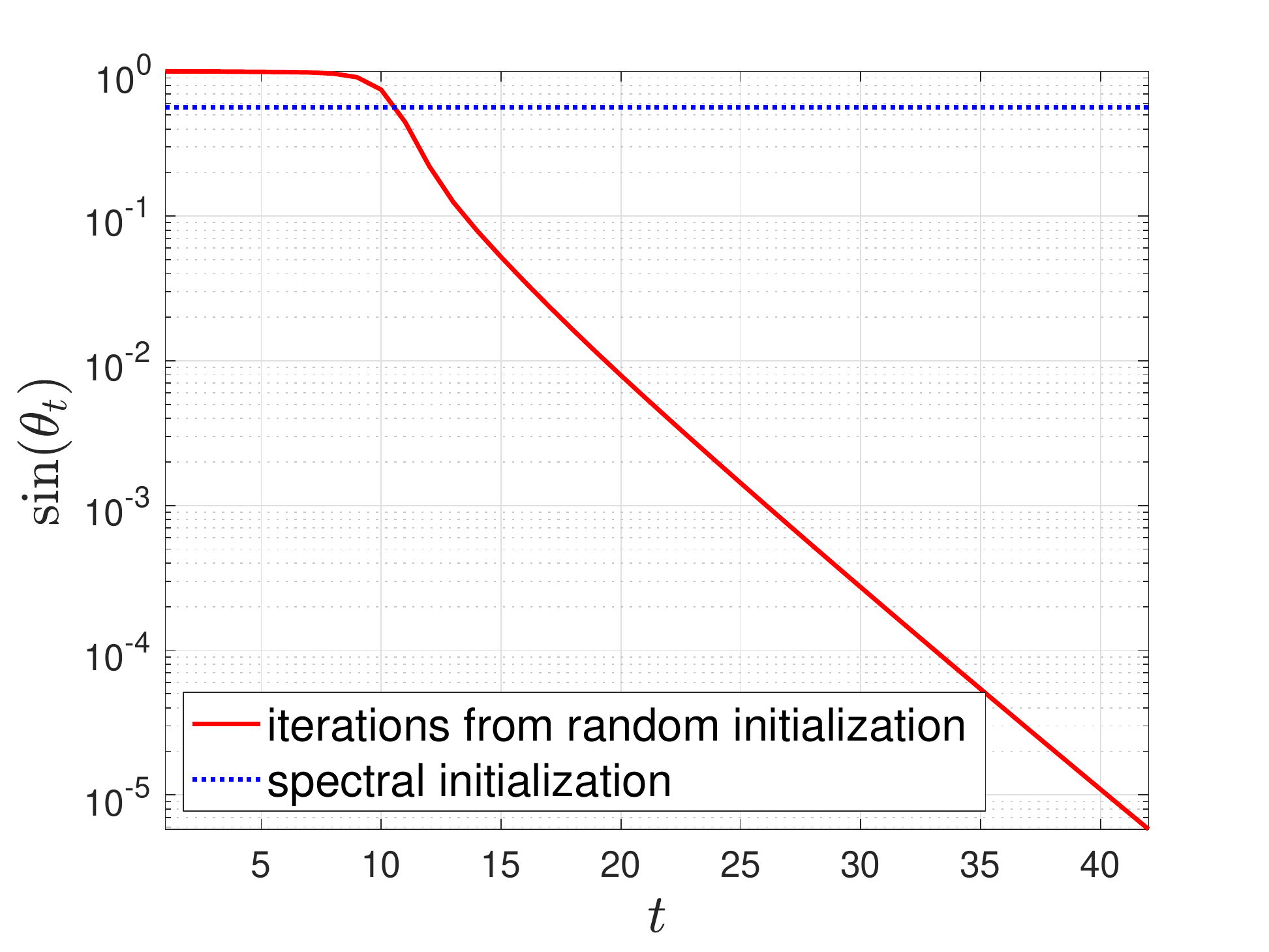}
        \caption{}
    \end{subfigure}
    \hfill
    \begin{subfigure}[b]{0.49\textwidth}
        \centering
        \includegraphics[width=\textwidth]{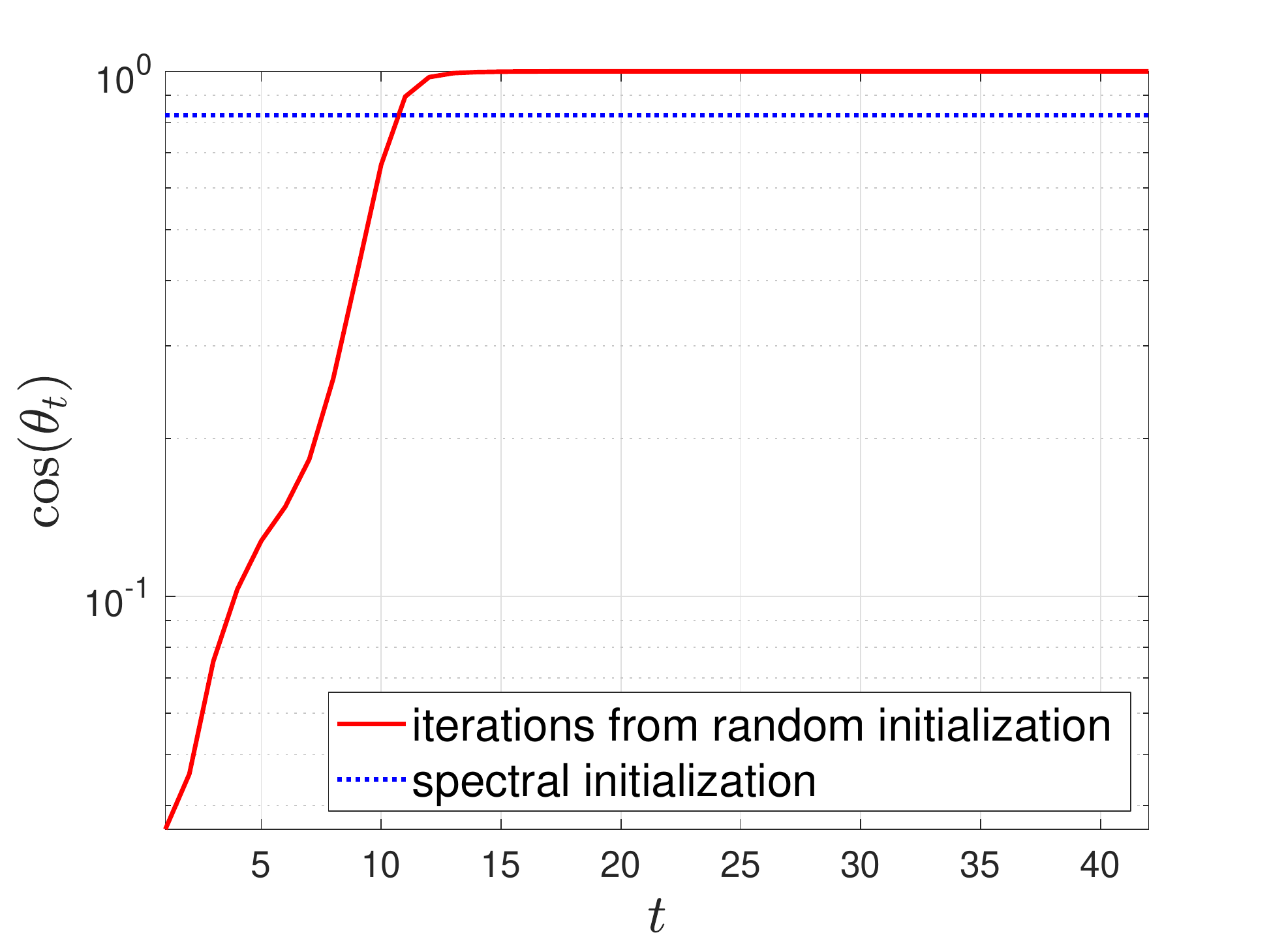}
        \caption{}
    \end{subfigure} \\
    \begin{subfigure}[b]{0.49\textwidth}
        \centering
        \includegraphics[width=\textwidth]{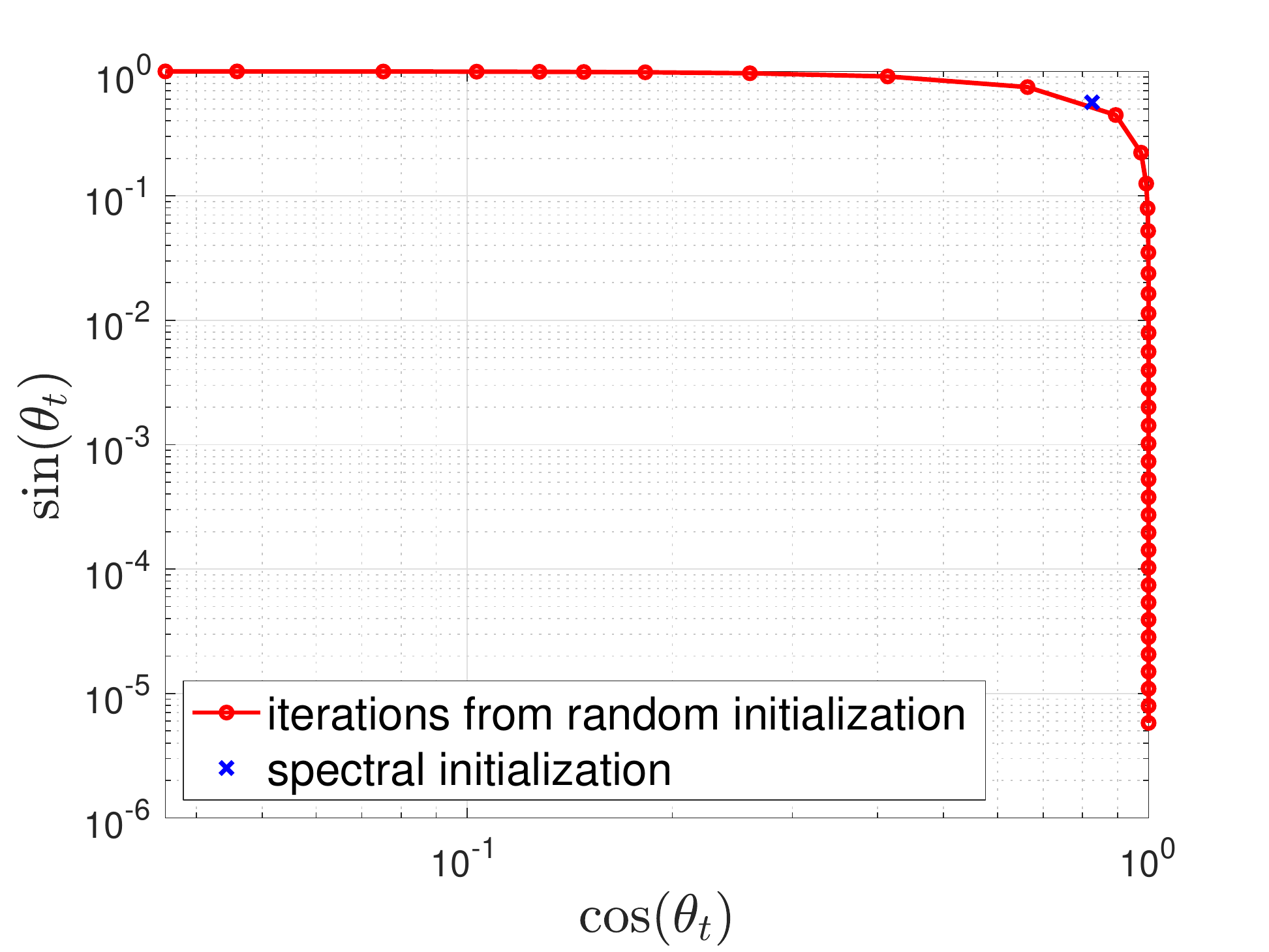}
        \caption{}
    \end{subfigure}
    \caption{Evolution of the estimation by randomly initialized ALS over iteration (rank-5 case): $n_1 = n_2 = 256$, $r = 5$, $m = 2 r (n_1+n_2-r)$. The principal angle between subspaces spanned by $\hat{U}$ and $U_t$ is denoted by $\theta_t$. (a) $\sin \theta_t$ vs $t$; (b) $\cos \theta_t$ vs $t$; (c) $\sin \theta_t$ vs $\cos \theta_t$.}
    \label{fig:iter_rankr}
\end{figure}

\section{Discussion}

We have shown that ALS from random initialization converges to the rank-one ground-truth matrix in the low-rank matrix sensing setting (with high probability). 
In our analysis, we observed that the trajectory of the iteration can be separated into two distinct phases: in the first one, the iterates converge from random initialization to a local neighborhood in $ O \left( \log n/ \log \log n  \right) $ iterations. In the second phase, the iterates converge linearly to the ground truth. 
This is aligned with our numerical experiments, where a sharp phase transition is visible. 

We expect that the convergence analysis in this paper will shed light on the convergence of ALS starting from random initialization in more general settings. 
For example, empirically, ALS from random initialization was shown to be successful if the ground truth has a rank higher than one. It would be interesting to see whether our analysis can be extended to this setting.
Moreover, it would be interesting to examine the scenario when the measurement matrices are more structured such as in the Matrix Completion problem.

Moreover, our result requires a sample size at least in the order of $ n \log^4 n $, whereas, for example, approaches based on convex relaxation such as nuclear-norm minimization only need in the order of $n$ samples. It would be interesting to examine whether it is possible to remove the additional $\log$-factors in our result.

\bibliographystyle{IEEEtran}
\bibliography{literature}

\appendix

\section{Proofs of concentration inequalities}

\subsection{Proof of Lemma \ref{lemma:RIPlemma}}\label{app:rip_consequences}

The inequality in \eqref{eq:rop} is well known (see, e.g., \cite[Exercise 6.24]{rauhut_foucart}). 
In fact, since we assumed the RIP to hold for all matrices of rank at most $4$ in \eqref{eq:rip}, we even obtain the stronger statement that
\begin{equation}\label{eq:rop1}
\big\vert \innerproduct{ \mathcal{A} \left(Z_1 \right), \mathcal{A} \left(Z_2 \right) } -\froinnerproduct{Z_1,Z_2}	\big\vert \le  \delta \fronorm{ Z_1  } \cdot \fronorm{ Z_2}.   
\end{equation}
for all matrices $Z_1$ and $Z_2$ of rank at most $2$ (see \cite[Section 6]{rauhut_foucart}. 

We are going to derive the other inequalities in \eqref{eq:res_rip1}, \eqref{eq:res_rip2}, and \eqref{eq:res_rip3} from \eqref{eq:rop1}. For that, we note first that 
\[
\mathcal{O}^* \mathcal{D} = \mathcal{P_O} \mathcal{A}^* \mathcal{A} (\mathrm{Id} - \mathcal{P_O}).
\]
Then there exist $\hat{x} \in \mathbb{R}^{n_1}$ and $ \hat{y} \in \mathbb{R}^{n_2}$ with $ \twonorm{\hat{x}}=\twonorm{ \hat{y} }=1 $ such that
\begin{equation*}
\big\Vert \mathcal{O}^* \mathcal{D} \left(uv^\top\right)  \big\Vert = \big\vert \froinnerproduct{\hat{x}  \hat{y}^\top, \mathcal{O}^* \mathcal{D} \left(uv^\top\right)} \big\vert. 
\end{equation*}
Then it follows that the left-hand side of \eqref{eq:res_rip1} is upper-bounded by
\begin{align*}
\big\Vert \mathcal{O}^* \mathcal{D} \left( uv^\top \right)  \big\Vert 
&=\big\vert \froinnerproduct{ \hat{x}  \hat{y}^\top, \mathcal{O}^* \mathcal{D} \left(uv^\top\right)} \big\vert \\
&=  \big\vert \froinnerproduct{ \hat{x}  \hat{y}^\top, \mathcal{P_O} \mathcal{A}^* \mathcal{A} (\mathrm{Id} - \mathcal{P_O}) \left(uv^\top\right)} \big\vert \\
&=  \big\vert \innerproduct{\mathcal{A} \mathcal{P_O} \left( \hat{x}  \hat{y}^\top \right), \mathcal{A} (\mathrm{Id} - \mathcal{P_O}) \left(uv^\top\right)} \big\vert \\
&\overset{\mathrm{(a)}}{\le} \delta \fronorm{\mathcal{P_O} ( \hat{x}  \hat{y}^\top)} \cdot \fronorm{(\mathrm{Id} - \mathcal{P_O})(uv^\top)} \\
&\le  \delta \fronorm{ \hat{x}  \hat{y}^\top} \cdot \fronorm{uv^\top} \\
&= \delta \twonorm{u} \cdot \twonorm{v},
\end{align*}

where (a) is due to \eqref{eq:rop1} and the fact that $\mathcal{P_O} (\hat{x} \hat{y}^\top $ and $(\mathrm{Id} - \mathcal{P_O}) \left(uv^\top\right) $ have rank at most $2$ each.
This proves inequality \eqref{eq:res_rip1}.
Inequality \eqref{eq:res_rip2} can be derived in an analogous way.

In order to show inequality \eqref{eq:res_rip3}, we again note there is $\tilde{x} \in \mathbb{R}^{n_1}$ and $\tilde{y} \in \mathbb{R}^{n_2}$ with $ \twonorm{\tilde{x}}=\twonorm{\tilde{y}} =1 $ such that
\begin{align*}
\Vert \left( \mathcal{O}^* \mathcal{O} - \mathcal{P_O} \right) \left(uv^\top \right)  \Vert 
&= \big\vert \froinnerproduct{ \tilde{x} \tilde{y}^\top, \left( \mathcal{O}^* \mathcal{O} - \mathcal{P_O} \right) \left(uv^\top \right)    } \big\vert
\end{align*}
holds.
From $\mathcal{O} = \mathcal{A} \mathcal{P_O}$ it follows that
\begin{align*}
 \Vert \left( \mathcal{O}^* \mathcal{O} - \mathcal{P_O} \right) \left(uv^\top \right)  \Vert 
 = \big\vert \innerproduct{ \mathcal{A} \left( \mathcal{P}_O  (\tilde{x} \tilde{y}^\top)  \right) , \mathcal{A} \left( \mathcal{P_O} \left(uv^\top \right) \right) }-  \froinnerproduct{ \mathcal{P}_O  ( \tilde{x} \tilde{y}^\top) ,  \mathcal{P_O} \left(uv^\top \right)  } \big\vert.
\end{align*}
Then it follows from \eqref{eq:rop1} that
\begin{align*}
 \Vert \left( \mathcal{O}^* \mathcal{O} - \mathcal{P_O} \right) \left(uv^\top \right)  \Vert 
 &\le \delta \fronorm{ \mathcal{P}_O  (\tilde{x} \tilde{y}^\top)  } \cdot \fronorm{ \mathcal{P}_O  (uv^\top)  }\\
 &\le \delta \twonorm{u} \cdot \twonorm{v}.
\end{align*}
This finishes the proof.

\subsection{Proof of Lemma \ref{lemma:randombound1}}\label{app:concentration1}
Note that the first entry of the vector $\sum_{i=1}^{m} (A_i)_{1,1}  O_i e_1 \in \mathbb{R}^{n_1}$  vanishes. Conditioned on $\{ (A_i)_{1,1} \}^m_{i=1} $, all other entries are i.i.d. random variables with distribution $ \mathcal{N} (  0, \sum_{i=1}^{m} \vert (A_i)_{1,1}  \vert^2 ) $. 
In particular, this implies that conditioned on $ \{ (A_i)_{1,1} \}^m_{i=1}  $ with probability at least $1-\mathcal{O} \left( \exp \left(-cn_1\right) \right)$ we have that
\begin{align}\label{ineq:help2}
\Big\Vert \sum_{i=1}^{m} \left(A_i\right)_{1,1}  O_i e_1 \Big\Vert 
\le 2 \sqrt{n_1 \sum_{i=1}^{m} \left(A_i\right)_{1,1}^2 }.
\end{align}
This is the standard concentration of the norm of a Gaussian vector (see, e.g. \cite[Theorem 3.1.1]{vershynin2018high}).
Similarly, it holds with probability at least $  1- \mathcal{O} \left( \exp \left(-cm\right) \right)$  that
\begin{equation}\label{ineq:help3}
\sum_{i=1}^{m} \vert   \left(A_i\right)_{1,1}  \vert^2 \le 2 m.
\end{equation}
Inserting inequality \eqref{ineq:help3} into inequality \eqref{ineq:help2} provides the first assertion in Lemma \ref{lemma:randombound1}.
The second assertion can be obtained analogously.

\subsection{Proof of Lemma \ref{lemma:indepbound1}}\label{app:concentration2}
We prove only the first assertion. The proof for the second assertion is analogous. 
We first note that by the concentration of the norm of Gaussian vector (e.g., \cite[Theorem 3.1.1]{vershynin2018high}), it holds with probability at least $1 - \mathcal{O} \left( \exp \left(-cm\right) \right)$ that
\begin{equation}\label{ineqe:conditionedA}
\sum_{i=1}^{m} \vert   \left(A_i\right)_{1,1}  \vert^2 \le 2m.
\end{equation}
In the following we will proceed conditioned on this event. 
Since by definition the first entry of $\vtaux^{\perp}$ vanishes, only the first entry of $O_i \vtaux^{\perp} $ is non-zero due to the structure of the matrix $O_i$.
In particular, we have that
\begin{equation*}
O_i \vtaux^{\perp} = \innerproduct{  O^\top_i e_1,   \vtaux^{\perp}  }    e_1.
\end{equation*}
This implies that
\begin{align*}
 \frac{1}{m} \Big\Vert  \left[  \sum_{i=1}^{m} \left(A_i\right)_{1,1}  O_i    \right] \vtaux^{\perp} \Big\Vert
 = \frac{1}{m} \Big\Vert  \sum_{i=1}^{m}  \innerproduct{  O^\top_i e_1 ,  \vtaux^{\perp}}  \left(A_i\right)_{1,1} e_1 \Big\Vert
 = \frac{1}{m} \Big\vert  \sum_{i=1}^{m}  \innerproduct{  O^\top_i e_1 ,  \vtaux^{\perp}}  \left(A_i\right)_{1,1}  \Big\vert.
\end{align*}
We observe that $\vtaux $ and $ \left(A_i \right)_{1,1}  $ are independent of  $ O_i$ for all $i \in \left[m\right]$ due to their definitions. 
Hence, conditioned on $ \{ (A_i)_{1,1} \}^m_{i=1}  $ and $\vtaux$ it holds that 
\begin{equation*}
 \innerproduct{  O^\top_i e_1 ,  \vtaux^{\perp}}  \left(A_i\right)_{1,1}  
  \sim \mathcal{N} \left(0, \big\vert \left(A_i \right)_{1,1} \big\vert \Vert \vtaux^{\perp}  \Vert   \right), \quad \text{for all } i \in [m]
\end{equation*}
and, hence, 
\[
\frac{1}{m} \sum_{i=1}^{m}  \innerproduct{  O^\top_i e_1 ,  \vtaux^{\perp}}  \left(A_i\right)_{1,1} 
\sim \mathcal{N} \left(0, \frac{1}{m} \sqrt{ \sum_{i=1}^m \left(A_i \right)_{1,1}^2} \Vert \vtaux^{\perp}  \Vert   \right).
\]
In particular, conditioned on $ \{ (A_i)_{1,1} \}^m_{i=1}$ and $\vtaux$ we obtain by a union bound that with probability $1-\eeta^{-1}$ it holds for all $t \in [T]$ simultaneously that  
\begin{align*}
\frac{1}{m} \Big\vert  \sum_{i=1}^{m}  \innerproduct{  O^\top_i e_1 ,  \vtaux^{\perp}}  \left(A_i\right)_{1,1}  \Big\vert 
& \lesssim  
\frac{\sqrt{\log T + \log \eeta}}{m} 
\cdot \sqrt{ \sum_{i=1}^{m} \left( \left(A_i\right)_{1,1} \right)^2 } 
\cdot \Vert \vtaux^{\perp} \Vert. 
\end{align*}
By inserting \eqref{ineqe:conditionedA} into the above inequality and by integrating over all events $ \{ (A_i)_{1,1} \}^m_{i=1}$, which satisfy \eqref{ineqe:conditionedA}, the first assertion in Lemma \ref{lemma:indepbound1} is obtained.
The second assertion in Lemma \ref{lemma:indepbound1} is obtained analogously.

\subsection{Proof of Lemma \ref{lemma:indepbound2}}\label{app:concentration3}
We note that $ \left\{  \froinnerproduct{D_i, \utplushalfaux^{\perp} \left( \vtaux^{\perp} \right)^\top } \vtaux^{\perp} \right\}_{i=1}^m   $ is independent from $ \left\{ O_i^\top e_1\right\}^{m}_{i=1} $. 
This implies that conditioned on $ \left\{  \froinnerproduct{D_i, \utplushalfaux^{\perp} \left( \vtaux^{\perp} \right)^\top } \vtaux^{\perp} \right\}_{i=1}^m   $ we have 
\begin{equation*}
    \sum_{i=1}^m \innerproduct{ O_i^\top e_i, \vtaux } \froinnerproduct{ D_i, \utplushalfaux^{\perp} (\vtaux^\perp)^\top } 
    \sim  \sqrt{ \sum_{i=1}^m \froinnerproduct{  D_i   ,  \utplushalfaux^{\perp} \left( \vtaux^{\perp} \right)^\top  }^2  \twonorm{\vtaux^\perp}^2       }  \cdot \mathcal{N} \left(0,1\right)
\end{equation*}
Hence, we obtain that conditioned on $ \left\{  \froinnerproduct{D_i, \utplushalfaux^{\perp} \left( \vtaux^{\perp} \right)^\top }  \vtaux^{\perp} \right\}_{i=1}^m$ with probability $1-\eeta^{-1}$ it holds for all $t \in [T] $ simultaneously that
\begin{align*}
\frac{1}{m} \big\vert  \sum_{i=1}^m \froinnerproduct{ O_i^\top e_i, \vtaux } \froinnerproduct{ D_i, \utplushalfaux^{\perp} (\vtaux^\perp)^\top  } \big\vert    
& \lesssim \frac{\sqrt{\log T + \log \eeta}}{m}  \cdot \sqrt{ \sum_{i=1}^m \froinnerproduct{  D_i   ,  \utplushalfaux^{\perp} \left( \vtaux^{\perp} \right)^\top  }^2  \twonorm{\vtaux^\perp}^2       }\\
&= \sqrt{\frac{\log T + \log \eeta}{m}} \cdot \twonorm{\vtaux^\perp} \cdot \sqrt{ \frac{1}{m} \sum_{i=1}^m \froinnerproduct{  A_i   ,  \utplushalfaux^{\perp} \left( \vtaux^{\perp} \right)^\top  }^2  } \\
&= \sqrt{\frac{\log T + \log \eeta}{m}} \cdot  \twonorm{\vtaux^\perp} \cdot \Big\Vert \mathcal{A} \left( \utplushalfaux^{\perp} \left( \vtaux^{\perp}\right)^\top \right) \Big\Vert \\  
&\le \sqrt{\frac{\log T + \log \eeta}{m}} \cdot \Big\Vert \mathcal{A} \left( \utplushalfaux^{\perp} \left( \vtaux^{\perp}\right)^\top \right) \Big\Vert. 
\end{align*}
This finishes the proof of the first assertion.
The second assertion is obtained analogously.

\subsection{Proof of Lemma \ref{lemma:stochasticbound}}\label{app:concentration4}
Recall without loss of generality that we assumed $\ustar = e_1$ and $\vstar = e_1$. 
This implies that we have
\begin{align*}
	\left[ \left( \mathcal{A}^* \mathcal{A} \right) \ustar  \vstar^\top  \right] \vtaux
	= \frac{1}{m} \left( \sum_{i=1}^{m} A_i  \froinnerproduct{ A_i  , \ustar \vstar^\top} \right) \vtaux 
	= \frac{1}{m} \left( \sum_{i=1}^{m} A_i  \left(A_i\right)_{1,1} \right) \vtaux
\end{align*}
and
\begin{align*}
	\left[ \left( \tilde{ \mathcal{A}}^* \tilde{\mathcal{A}} \right) \ustar \vstar^\top  \right] \vtaux 
	= \frac{1}{m} \left( \sum_{i=1}^{m} \tilde{A}_i  \froinnerproduct{ \tilde{A}_i  , \ustar  \vstar^\top} \right) \vtaux 
	= \frac{1}{m} \left( \sum_{i=1}^{m} \tilde{A}_i  \left(A_i\right)_{1,1} \right) \vtaux.
\end{align*}
Then it follows that 
\begin{align*}
	\left[  \left(    \mathcal{A}^* \mathcal{A}  - \tilde{ \mathcal{A}}^* \tilde{\mathcal{A}}   \right) \left(    \ustar  \vstar^\top \right)   \right] \vtaux &= \frac{1}{m}   \left[  \sum_{i=1}^{m} \left(A_i\right)_{1,1} \left( A_i - \tilde{A}_i \right)  \right] \vtaux.
\end{align*}
In order to proceed recall that we have decomposition $A_i= D_i + O_i$ and $\tilde{A}_i= D_i + \tilde{O}_i $ for all $ i \in \left[m\right] $. 
This implies that $A_i - \tilde{A}_i = O_i - \tilde{O}_i$. 
Hence, we obtain that
\begin{equation}\label{ineq:intern8}
	\begin{split}
		&\Big\Vert  \left[  \left(    \mathcal{A}^* \mathcal{A}  - \tilde{ \mathcal{A}}^* \tilde{\mathcal{A}}   \right) \left(    \ustar  \vstar^\top  \right)   \right] \vtaux \Big\Vert \\
		=& \frac{1}{m} \Big\Vert  \left[  \sum_{i=1}^{m} \left(A_i\right)_{1,1} \left( O_i - \tilde{O}_i \right)  \right] \vtaux \Big\Vert\\
		\le &  \frac{1}{m} \Big\Vert  \left[  \sum_{i=1}^{m} \left(A_i\right)_{1,1}  O_i    \right]\vtaux \Big\Vert +   \frac{1}{m} \Big\Vert  \left[  \sum_{i=1}^{m} \left(A_i\right)_{1,1}  \tilde{O}_i   \right]\vtaux \Big\Vert\\
		\le & \underset{=:(a)}{\underbrace{\frac{1}{m} \Big\Vert  \left[  \sum_{i=1}^{m} \left(A_i\right)_{1,1}  O_i    \right] \vtaux \Big\Vert} }+ \underset{=:(b)}{ \underbrace{ \frac{1}{m} \Big\Vert  \left[  \sum_{i=1}^{m} \left(A_i\right)_{1,1}  \tilde{O}_i    \right] \left(\vt -\vtaux \right) \Big\Vert} }  +   \underset{=:(c)}{\underbrace{\frac{1}{m} \Big\Vert  \left[  \sum_{i=1}^{m} \left(A_i\right)_{1,1}  \tilde{O}_i   \right]v_t \Big\Vert}}.
	\end{split}
\end{equation}
We estimate the three summands in the right-hand side of \eqref{ineq:intern8} individually.\\

\noindent\textbf{Estimating $(a)$: }
In order to upper-bound the first summand (a) we note that by the triangle inequality it holds that
\begin{equation}\label{ineq:intern7}
\frac{1}{m} \Big\Vert  \left[  \sum_{i=1}^{m} \left(A_i\right)_{1,1}  O_i    \right] \vtaux \Big\Vert \le  \frac{1}{m} \Big\Vert  \left[  \sum_{i=1}^{m} \left(A_i\right)_{1,1}  O_i    \right] \vtaux^{\perp} \Big\Vert +  \frac{1}{m} \Big\Vert  \left[  \sum_{i=1}^{m} \left(A_i\right)_{1,1}  O_i    \right] \vtaux^{\parallel} \Big\Vert.
\end{equation}
Then \eqref{ineq:randombound1a} and \eqref{ineq:indepbound1a} respectively imply that 
\begin{align*}
\frac{1}{m} \Big\Vert  \left[  \sum_{i=1}^{m} \left(A_i\right)_{1,1}  O_i    \right] \vtaux^{\parallel} \Big\Vert &= \frac{1}{m} \Big\Vert  \left[  \sum_{i=1}^{m} \left(A_i\right)_{1,1}  O_i    \right] e_1 \Big\Vert \cdot \Vert \vtaux^{\parallel}  \Vert \le 4 \sqrt{\frac{n_1}{m}} \Vert  \vtaux^{\parallel} \Vert
\intertext{and}
\frac{1}{m} \Big\Vert  \left[  \sum_{i=1}^{m} \left(A_i\right)_{1,1}  O_i \right] \vtaux^{\perp} \Big\Vert 
& \lesssim \sqrt{\frac{ \log T + \log \eeta}{m}}. 
\end{align*}
Plugging in these two estimates into \eqref{ineq:intern7} provides
\begin{equation*}
\frac{1}{m} \Big\Vert  \left[  \sum_{i=1}^{m} \left(A_i\right)_{1,1}  O_i    \right] \vtaux \Big\Vert 
\lesssim 
\sqrt{\frac{\log T}{m}}+  \sqrt{  \frac{n_1}{m}}  \Vert \vtaux^{\parallel} \Vert.
\end{equation*}
	
\noindent \textbf{Estimating $(b)$:} It follows from the restricted isometry property that
\begin{align*}
\frac{1}{m} \Big\Vert  \left[  \sum_{i=1}^{m} \left(A_i\right)_{1,1}  \tilde{O_i}    \right] \left(\vt -\vtaux \right) \Big\Vert  
= \Big\Vert \left[  \left(  \tilde{\mathcal{O}}^* \mathcal{D}  \right)  \left(    \ustar  \vstar^\top  \right)   \right] \left(  \vt -\vtaux \right)   \Big\Vert 
\le \delta \Vert \vt -\vtaux \Vert,
\end{align*}
where in the last line we used Lemma \ref{lemma:RIPlemma}.\\
	
\noindent \textbf{Estimating $(c)$:} By an analogous argument as for the first summand (a) we obtain for the third summand (c) that
\begin{equation*}
\frac{1}{m} \Big\Vert  \left[  \sum_{i=1}^{m} \left(A_i\right)_{1,1}  \tilde{O}_i   \right]v_t \Big\Vert 
\lesssim 
\sqrt{\frac{\log T + \log \eeta}{m}}+  \sqrt{  \frac{n_1}{m}}  \Vert \vtaux^{\parallel} \Vert.
\end{equation*}
Hence, by summing up these estimates we have shown that
\begin{align*}
\Big\Vert  \left[  \left(    \mathcal{A}^* \mathcal{A}  - \tilde{ \mathcal{A}}^* \tilde{\mathcal{A}}   \right) \left(    \ustar  \vstar^\top  \right)   \right] \vtaux \Big\Vert
\le C \left(  \sqrt{\frac{\log T}{m}}+  \sqrt{  \frac{n_1}{m}}  \Vert \vtaux^{\parallel} \Vert \right) + \delta \Vert \vt -\vtaux \Vert,
\end{align*}
which finishes the proof.

\subsection{Proof of Lemma \ref{lemma:help1}}\label{app:concentration5}
It follows from $ \mathcal{A} = \mathcal{D} + \mathcal{O} $ and $\mathcal{A} = \mathcal{D} + \tilde{ \mathcal{O}}  $ that
\begin{equation}\label{equ:intern10}
\begin{split}
\mathcal{A}^* \mathcal{A} - \tilde{\mathcal{A}}^* \tilde{\mathcal{A}} &= \left(  \mathcal{D} + \mathcal{O}   \right)^* \left(  \mathcal{D} + \mathcal{O}  \right) - \left(  \mathcal{D} + \tilde{ \mathcal{O}}    \right)^* \left(  \mathcal{D} +  \tilde{ \mathcal{O}}   \right)\\
&=   \mathcal{D}^*   \mathcal{O}  +   \mathcal{O}^*  \mathcal{D}  + \mathcal{O}^*   \mathcal{O}-  \mathcal{D}^*   \tilde{ \mathcal{O}} - \tilde{ \mathcal{O}}^*   \mathcal{D}  - \tilde{ \mathcal{O}}^*   \tilde{ \mathcal{O}}\\
&= \mathcal{D}^*  \left(   \mathcal{O}-  \tilde{ \mathcal{O}}  \right) +  \left(  \mathcal{O}  - \tilde{ \mathcal{O}}   \right)^*  \mathcal{D} + \left( \mathcal{O}^*   \mathcal{O} -  \tilde{ \mathcal{O}}^*   \tilde{ \mathcal{O}}\right).
\end{split}
\end{equation}
Using decomposition in \eqref{equ:intern10} and the triangle inequality we obtain that
\begin{align*}
&\Big\Vert \left[  \left(    \mathcal{A}^* \mathcal{A}  - \tilde{ \mathcal{A}}^* \tilde{\mathcal{A}}   \right) \left(  \utplushalfaux \vtaux^\top  \right)   \right] \vtaux   \Big\Vert\\
& \le \underset{=:(I)}{ \underbrace{ \Big\Vert \left[ \left(   \mathcal{D}^* \left(   \mathcal{O}-  \tilde{ \mathcal{O}}  \right)   \right) \left(  \utplushalfaux \vtaux^\top  \right)   \right] \vtaux  \Big\Vert }} + \underset{=:(II)}{ \underbrace{ \Big\Vert \left[ \left(   \left(   \mathcal{O}-  \tilde{ \mathcal{O}}  \right)^* \mathcal{D}   \right) \left(  \utplushalfaux \vtaux^\top  \right)   \right] \vtaux  \Big\Vert}}\\
& + \underset{=:(III)}{ \underbrace{ \Big\Vert  \left[  \left(  \mathcal{O}^*  \mathcal{O} -  \tilde{ \mathcal{O}}^* \tilde{ \mathcal{O}} \right)   \right] \left(  \utplushalfaux \vtaux^\top  \right)  \vtaux   \Big\Vert }}.
\end{align*}
We estimate these three summands separately.\\

\noindent\textbf{Bounding $(I)$:} 
Note that
\begin{align*}
&\Big\Vert \left[ \left(   {\mathcal{D}^*} {\left(   \mathcal{O} - \tilde{ \mathcal{O}} \right)}   \right) \left(  \utplushalfaux \vtaux^\top  \right) \right] \vtaux  \Big\Vert\\
&\overeq{(a)} \Big\Vert \left[ \left(   \mathcal{D}^*   {\left(   \mathcal{O}-  \tilde{ \mathcal{O}}  \right)}     \right) \left( \utplushalfaux^{\parallel} (\vtaux^{\perp})^\top  + \utplushalfaux^{\perp} (\vtaux^{\parallel})^\top  \right)   \right] \vtaux  \Big\Vert\\
& \overleq{(b)} \Big\Vert \left[ \left(   \mathcal{D}^*  {\left(   \mathcal{O}-  \tilde{ \mathcal{O}}  \right)}   \right) \left( \utplushalfaux^{\parallel} (\vtaux^{\perp})^\top   \right)   \right] \vtaux  \Big\Vert + \Big\Vert \left[ \left(   \mathcal{D}^* \left(   \mathcal{O}-  \tilde{ \mathcal{O}}  \right)   \right) \left(  \utplushalfaux^{\perp} (\vtaux^{\parallel})^\top  \right)   \right] \vtaux  \Big\Vert\\
& \overleq{(c)}  2 \delta \left(    \Vert  \utplushalfaux^{\parallel} \Vert \Vert \vtaux^{\perp} \Vert + \Vert  \utplushalfaux^{\perp}  \Vert \Vert  \vtaux^{\parallel} \Vert   \right) \\
& \overleq{(d)} 2 \delta \left(    \Vert  \utplushalfaux^{\parallel} \Vert + 2  \Vert  \vtaux^{\parallel} \Vert \right), 
\end{align*}
where equality (a) follows from the definition of $\mathcal{O}$ and $\tilde{ \mathcal{O}}$; Inequality (b) follows from the triangle inequality; Inequality (c) is due to Lemma \ref{lemma:RIPlemma} and the assumption that $\Vert \vtaux \Vert = 1$; Inequality (d) follows from $\Vert \vtaux \Vert =1 $ and $ \Vert  \utplushalfaux  \Vert \le 2 $.\\

\noindent\textbf{Bounding $(II)$:} By definition of $\mathcal{D}$ we have that
\begin{align*}
 \left[ \left(   \left(   \mathcal{O}-  \tilde{ \mathcal{O}}  \right)^* \mathcal{D}   \right) \left(  \utplushalfaux \vtaux^\top  \right)   \right] \vtaux  
 =  \left[ \left(   \left(   \mathcal{O}-  \tilde{ \mathcal{O}}  \right)^* \mathcal{D}   \right) \left(  \utplushalfaux^{\parallel} (\vtaux^{\parallel})^\top + \utplushalfaux^{\perp} (\vtaux^{\perp} )^\top \right)   \right] \vtaux.
\end{align*}
Hence by the triangle inequality it follows that
\begin{align*}
& \Big\Vert  \left[ \left(   \left(   \mathcal{O}-  \tilde{ \mathcal{O}}  \right)^* \mathcal{D}   \right) \left(  \utplushalfaux \vtaux^\top  \right)   \right] \vtaux \Big\Vert\\
& \le  \underset{ =:(\S) }{ \underbrace{ \Big\Vert \left[ \left(   \left(   \mathcal{O}-  \tilde{ \mathcal{O}}  \right)^* \mathcal{D}   \right)  \left(   \utplushalfaux^{\parallel} (\vtaux^{\parallel})^\top \right)   \right] \vtaux \Big\Vert } } + \underset{ =:(\S\S) }{ \underbrace{ \Big\Vert \left[ \left(   \left(   \mathcal{O}-  \tilde{ \mathcal{O}}  \right)^* \mathcal{D}   \right) \left(  \utplushalfaux^{\perp} \left(\vtaux^{\perp}  \right)^\top  \right)  \right] \vtaux \Big\Vert } }.
\end{align*}
\textbf{Estimating $(\S)$:} In order to bound the first term we note that
\begin{align*}
&\Big\Vert \left[ \left(   \left(   \mathcal{O}-  \tilde{ \mathcal{O}}  \right)^* \mathcal{D}   \right) \left(  \utplushalfaux^{\parallel} (\vtaux^{\parallel})^\top \right)   \right] \vtaux \Big\Vert
= \big\Vert \utplushalfaux^{\parallel} \big\Vert \cdot \big\Vert \vtaux^{\parallel} \big\Vert \cdot \Big\Vert \left[ \left(   \left(   \mathcal{O}-  \tilde{ \mathcal{O}}  \right)^* \mathcal{D}   \right) \left( \ustar  \vstar^\top \right)   \right] \vtaux \Big\Vert.
\end{align*}
Moreover note that
\begin{equation*}
 \left[ \left(   \left(   \mathcal{O}-  \tilde{ \mathcal{O}}  \right)^* \mathcal{D}   \right) \left( \ustar  \vstar^\top \right)   \right] \vtaux = \frac{1}{m}  \left[  \sum_{i=1}^{m} \left(A_i\right)_{1,1} \left( O_i - \tilde{O}_i \right)  \right] \vtaux.
\end{equation*}
Note that this is exactly the term, which appeared already in the inequality chain \eqref{ineq:intern8}. 
Hence, by exactly the same argument, since we assumed that \cref{ineq:randombound1a,ineq:randombound1b,ineq:indepbound1a,ineq:indepbound1b} hold, we then obtain that
\begin{equation}\label{eq:part1ofII}
\begin{aligned}
&\Big\Vert \left[ \left(   \left(   \mathcal{O}-  \tilde{ \mathcal{O}}  \right)^* \mathcal{D}   \right) \left(  \utplushalfaux^{\parallel} (\vtaux^{\parallel})^\top \right)   \right] \vtaux \Big\Vert\\
& \lesssim \big\Vert \utplushalfaux^{\parallel} \big\Vert \cdot \big\Vert \vt^{\parallel} \big\Vert \cdot \left(    \sqrt{\frac{\log T + \log \eeta}{m}}+  \sqrt{  \frac{n_1}{m}}  \Vert \vtaux^{\parallel} \Vert  + \delta \Vert \vt -\vtaux \Vert  \right).
\end{aligned}
\end{equation}

\noindent\textbf{Estimating $(\S\S)$:} In order to bound term $(2)$ we note that
\begin{equation*}
\left[ \left(   \left(   \mathcal{O}-  \tilde{ \mathcal{O}}  \right)^* \mathcal{D}   \right) \left(  \utplushalfaux^{\perp} \left(\vtaux^{\perp}  \right)^\top  \right)   \right] \vtaux = \frac{1}{m} \sum_{i=1}^{m}  \froinnerproduct{D_i, \utplushalfaux^{\perp} \left(\vtaux^{\perp}  \right)^\top }  \left( O_i - \tilde{O}_i \right) \vtaux .
\end{equation*}
Due to the triangle inequality it follows that
\begin{align*}
\Big\Vert \frac{1}{m} \sum_{i=1}^{m}  \froinnerproduct{D_i, \utplushalfaux^{\perp} \left(\vtaux^{\perp}  \right)^\top }  \left( O_i - \tilde{O}_i \right) \vtaux \Big\Vert &\le  \underset{=:(a)}{\underbrace{\Big\Vert \frac{1}{m} \sum_{i=1}^{m}  \innerproduct{D_i, \utplushalfaux^{\perp} \left(\vtaux^{\perp}  \right)^\top }   O_i  \tilde{\vt} \Big\Vert}}\\
 &+ \underset{=:(b)}{\underbrace{\Big\Vert \frac{1}{m} \sum_{i=1}^{m}  \froinnerproduct{D_i, \utplushalfaux^{\perp} \left(\vtaux^{\perp}  \right)^\top }  \tilde{O}_i  \left( \tilde{\vt} -\vt \right) \Big\Vert}}\\
 &+ \underset{=:(c)}{\underbrace{\Big\Vert \frac{1}{m} \sum_{i=1}^{m}  \froinnerproduct{D_i, \utplushalfaux^{\perp} \left(\vtaux^{\perp} -  \vt^{\perp}  \right)^\top }   \tilde{O}_i  v_t \Big\Vert}} \\
 & + \underset{=:(d)}{\underbrace{\Big\Vert \frac{1}{m} \sum_{i=1}^{m}  \froinnerproduct{D_i, \left(\utplushalfaux^{\perp} - \utplushalf^{\perp}  \right) \left(\vt^{\perp}  \right)^\top }   \tilde{O}_i  v_t \Big\Vert}} \\
& + \underset{=:(e)}{\underbrace{\Big\Vert \frac{1}{m} \sum_{i=1}^{m}  \froinnerproduct{D_i, \utplushalf^{\perp} \left(\vt^{\perp}  \right)^\top }   \tilde{O}_i  v_t \Big\Vert}}.
\end{align*}
We will estimate the summands individually.\\

\noindent \textbf{Estimating $(b)$, $(c)$, and $(d)$ :} By the consequences of RIP in Lemma \ref{lemma:RIPlemma}, the term $(b)$ is upper-bounded by
\begin{align*}
\Big\Vert \frac{1}{m} \sum_{i=1}^{m}  \froinnerproduct{D_i, \utplushalfaux^{\perp} \left(\vtaux^{\perp}  \right)^\top }  O_i  \left( \tilde{\vt} -\vt \right) \Big\Vert 
\le \delta 
\Vert \utplushalfaux^{\perp} \Vert 
\cdot 
\Vert \vtaux^{\perp} \Vert 
\cdot 
\Vert \tilde{\vt} -\vt \Vert
\le 2 \delta \Vert  \tilde{\vt} -\vt \Vert,
\end{align*}
where we used $\Vert \utplushalfaux^{\perp} \Vert \le 2 $ and $\Vert \vtaux^{\perp} \Vert \le 1 $. Similarly we obtain that
\begin{equation*}
\Big\Vert \frac{1}{m} \sum_{i=1}^{m}  \froinnerproduct{D_i, \utplushalfaux^{\perp} \left(\vtaux^{\perp} -  \vt^{\perp}  \right)^\top }   \tilde{O}_i  v_t \Big\Vert \le 2 \delta \Vert \vtaux^{\perp} -  \vt^{\perp}  \Vert \le 2\delta \Vert \vtaux -  \vt  \Vert 
\end{equation*}
and
\begin{equation*}
\Big\Vert \frac{1}{m} \sum_{i=1}^{m}  \froinnerproduct{D_i, \left(\utplushalfaux^{\perp} - \utplushalf^{\perp}  \right) \left(\vt^{\perp}  \right)^\top }   \tilde{O}_i  v_t \Big\Vert \le \delta \Vert \utplushalfaux^{\perp} - \utplushalf^{\perp}  \Vert \le \delta \Vert \utplushalfaux - \utplushalf  \Vert.   
\end{equation*}

\noindent\textbf{Estimating $(a)$:} By the triangle inequality it holds that
\begin{equation}\label{ineq:aux41}
\begin{split}
& \Big\Vert \frac{1}{m} \sum_{i=1}^{m}  \froinnerproduct{D_i, \utplushalfaux^{\perp} \left(\vtaux^{\perp}  \right)^\top }   O_i  \tilde{\vt} \Big\Vert\\
\le & \Big\Vert \frac{1}{m} \sum_{i=1}^{m}  \froinnerproduct{D_i,  \utplushalfaux^{\perp} \left( \vtaux^{\perp}  \right)^\top  }   O_i  \tilde{\vt}^{\perp} \Big\Vert       +  \Big\Vert \frac{1}{m} \sum_{i=1}^{m}  \froinnerproduct{D_i,  \utplushalfaux^{\perp} \left( \vtaux^{\perp}  \right)^\top  }   O_i  \tilde{\vt}^{\parallel} \Big\Vert.
\end{split}
\end{equation}
We estimate the two summands individually.
Note that from the definition of $O_i$ and $ \vtaux^\perp$ it follows that only the first entry of $ O_i \vtaux^\perp $ is non-zero.
It follows that
\begin{align*}
 \Big\Vert \frac{1}{m} \sum_{i=1}^{m}  \froinnerproduct{D_i,  \utplushalfaux^{\perp} \left( \vtaux^{\perp}  \right)^\top  }   O_i  \tilde{\vt}^{\perp} \Big\Vert &=\Big\vert \frac{1}{m} \sum_{i=1}^{m}  \froinnerproduct{D_i,  \utplushalfaux^{\perp} \left( \vtaux^{\perp}  \right)^\top  }   \innerproduct{O_i^\top e_i,  \tilde{\vt}^{\perp}  } \Big\vert
\end{align*}
Hence, it follows from \eqref{ineq:indepbound2a} that
\begin{align*}
 \Big\Vert \frac{1}{m} \sum_{i=1}^{m}  \froinnerproduct{D_i,  \utplushalfaux^{\perp} \left( \vtaux^{\perp}  \right)^\top  }   O_i  \tilde{\vt}^{\perp} \Big\Vert	
 &\lesssim \sqrt{\frac{\log T + \log \eeta}{m}} \cdot \twonorm{ \mathcal{A} \left( \utplushalfaux^\perp (\vtaux^\perp )^\top  \right) }\\
 &\lesssim \sqrt{\frac{\log T + \log \eeta}{m}}, 
\end{align*}
where in the second inequality we used the RIP of $\mathcal{A}$ as well as the assumption $ \twonorm{\utplushalfaux} \le 2$.
This provides an upper bound on the first summand of the right-hand side in \eqref{ineq:aux41}.
In order to bound the second summand we first choose a vector $u \in \mathbb{C}^{n_1}$ that satisfies $\Vert u \Vert =1 $, $ \innerproduct{u,\ustar } = 0 $, and 
\begin{align*}
\Big\Vert \frac{1}{m} \sum_{i=1}^{m}  \froinnerproduct{D_i,  \utplushalfaux^{\perp} \left( \vtaux^{\perp}  \right)^\top  }   O_i  \tilde{\vt}^{\parallel} \Big\Vert 
&=  \frac{1}{m} \sum_{i=1}^{m}  \froinnerproduct{D_i,  \utplushalfaux^{\perp} \left( \vtaux^{\perp}  \right)^\top  }  \innerproduct{  O_i  \tilde{\vt}^{\parallel}, u}.
\end{align*}
Such a vector exists due to the definitions of $O_i$ and $\vtaux^\parallel$ and the fact that the vector $O_i \vtaux^\parallel$ is orthogonal to $\ustar $.
Hence, we obtain that
\begin{align*}
\Big\Vert \frac{1}{m} \sum_{i=1}^{m}  \froinnerproduct{D_i,  \utplushalfaux^{\perp} \left( \vtaux^{\perp}  \right)^\top  }   O_i  \tilde{\vt}^{\parallel} \Big\Vert 
&=  \frac{1}{m} \sum_{i=1}^{m}  \froinnerproduct{D_i,  \utplushalfaux^{\perp} \left( \vtaux^{\perp}  \right)^\top  }  \froinnerproduct{  O_i , u\left(\tilde{\vt}^{\parallel} \right)^\top} \\
&\overeq{(i)}  \frac{1}{m} \sum_{i=1}^{m}  \froinnerproduct{A_i,  \utplushalfaux^{\perp} \left( \vtaux^{\perp}  \right)^\top  }  \froinnerproduct{  A_i , u\left(\tilde{\vt}^{\parallel} \right)^\top} \\
&\overeq{(ii)} \innerproduct{ \Aop{  \utplushalfaux^{\perp} \left( \vtaux^{\perp}  \right)^\top   }  , \Aop{  u\left( \tilde{\vt}^{\parallel} \right)^\top }} \\
& \overleq{(iii)} \delta \Vert  \utplushalfaux^{\perp} \left( \vtaux^{\perp}  \right)^\top \Vert_F \cdot \Vert u\left( \tilde{\vt}^{\parallel} \right)^\top  \Vert_F \\
&\le \delta \Vert  \utplushalfaux^{\perp} \Vert \cdot \Vert  \vtaux^{\perp}  \Vert \cdot \Vert u\Vert \cdot \Vert \tilde{\vt}^{\parallel} \Vert \\
&\overleq{(iv)} 2\delta \Vert  \tilde{\vt}^{\parallel} \Vert,
\end{align*}
where the identity $(i)$ follows from our choice of $u$ and the definition of $D_i$ and $O_i$; Equation $(ii)$ follows from the definition of $\mathcal{A}$; Inequality $(iii)$ is due to the consequences of RIP in Lemma \ref{lemma:RIPlemma}; Inequality $(iv)$ is obtained by $ \Vert  \utplushalfaux^{\perp} \Vert \le 2 $, $ \Vert u \Vert =1 $ and $  \Vert  \tilde{\vt}^{\perp} \Vert \le 1 $.
Hence, we have shown that 
\begin{equation*}
(a) \lesssim   \sqrt{\frac{\log T + \log \eeta}{m}} + \delta \Vert  \tilde{\vt}^{\parallel} \Vert.
\end{equation*}
\textbf{Estimating $(e)$:} We can upper-bound this term in an analogous way to term $(a)$, which yields that
\begin{equation*}
(e) \lesssim  \sqrt{\frac{\log T + \log \eeta}{m}} + \delta \Vert  \vt^{\parallel}  \Vert.
\end{equation*}
Summing up terms yields that
\begin{equation}\label{eq:part2ofII}
\begin{aligned}
(\S\S) 
= &\left(a\right) + (b) + (c) + (d) + (e) \\
\lesssim & 
\sqrt{\frac{\log T+\log \eeta}{m}} 
+ \delta \Vert \tilde{\vt}^{\parallel} \Vert 
+ \delta \Vert \tilde{\vt} -\vt \Vert 
+ \delta \Vert \utplushalfaux - \utplushalf  \Vert \delta \Vert  \vt^{\parallel}  \Vert \\
\lesssim &  \sqrt{\frac{\log T + \log \eeta}{m}} +  \delta \Vert  \tilde{\vt}^{\parallel} \Vert   +  \delta \Vert \vtaux -  \vt  \Vert   + \delta \Vert \utplushalfaux - \utplushalf  \Vert.
\end{aligned}
\end{equation}
By combining \eqref{eq:part1ofII} and \eqref{eq:part2ofII}, we obtain
\begin{align*}
(II)
&= (\S) + (\S\S) \\
& \lesssim  \big\Vert \utplushalfaux^{\parallel} \big\Vert \big\Vert \vt^{\parallel} \big\Vert \left(     \sqrt{\frac{\log T + \log \eeta}{m}}+  \sqrt{  \frac{n_1}{m}}  \Vert \vtaux^{\parallel} \Vert  + \delta \Vert \vt -\vtaux \Vert  \right)  \\
& + \sqrt{\frac{\log T + \log \eeta}{m}} +  \delta \Vert  \tilde{\vt}^{\parallel} \Vert   + \delta \Vert \vtaux -  \vt  \Vert   + \delta \Vert \utplushalfaux - \utplushalf  \Vert\\
& \lesssim  \sqrt{\frac{\log T + \log \eeta}{m}} +  \left( \delta + \sqrt{\frac{n_1}{m}} \right) \Vert  \tilde{\vt}^{\parallel} \Vert   +  \delta \Vert \vtaux -  \vt  \Vert   + \delta \Vert \utplushalfaux - \utplushalf  \Vert.
\end{align*}

\noindent\textbf{Bounding $(III)$:} Observe that
\begin{align*}
&\Big\Vert  \left[  \left(  \mathcal{O}^*  \mathcal{O} -  \tilde{ \mathcal{O}}^* \tilde{ \mathcal{O}} \right)   \right] \left(  \utplushalfaux \vtaux^\top  \right)  \vtaux   \Big\Vert\\ 
& \overeq{(a)} \Big\Vert  \left[  \left(  \mathcal{O}^*  \mathcal{O} -  \tilde{ \mathcal{O}}^* \tilde{ \mathcal{O}} \right) \left(  \utplushalfaux^{\parallel} (\vtaux^{\perp})^\top   + \utplushalfaux^{\perp} (\vtaux^{\parallel})^\top   \right)  \right]   \vtaux   \Big\Vert\\
& \le \Big\Vert  \left[  \left(  \mathcal{O}^*  \mathcal{O} -  \tilde{ \mathcal{O}}^* \tilde{ \mathcal{O}} \right) \left(  \utplushalfaux^{\parallel} (\vtaux^{\perp})^\top    \right)  \right]   \vtaux   \Big\Vert + \Big\Vert  \left[  \left(  \mathcal{O}^*  \mathcal{O} -  \tilde{ \mathcal{O}}^* \tilde{ \mathcal{O}} \right) \left(    \utplushalfaux^{\perp} (\vtaux^{\parallel})^\top   \right)  \right]   \vtaux   \Big\Vert\\
& \le \Big\Vert  \left[  \left(  \mathcal{O}^*  \mathcal{O} - P_{O} \right) \left(  \utplushalfaux^{\parallel} (\vtaux^{\perp})^\top    \right)  \right]   \vtaux   \Big\Vert  + \Big\Vert  \left[  \left( P_{O}-  \tilde{ \mathcal{O}}^* \tilde{ \mathcal{O}} \right) \left(  \utplushalfaux^{\parallel} (\vtaux^{\perp})^\top    \right)  \right]   \vtaux   \Big\Vert   \\
& \quad +\Big\Vert  \left[  \left(  \mathcal{O}^*  \mathcal{O} - P_{O}  \right) \left(    \utplushalfaux^{\perp} (\vtaux^{\parallel})^\top   \right)  \right]   \vtaux   \Big\Vert + \Big\Vert  \left[  \left(  P_{O}-  \tilde{ \mathcal{O}}^* \tilde{ \mathcal{O}} \right) \left(    \utplushalfaux^{\perp} (\vtaux^{\parallel})^\top   \right)  \right]   \vtaux   \Big\Vert\\
& \overleq{(b)} 2\delta  \left(  \big\Vert  \utplushalfaux^{\parallel} \big\Vert \cdot \Vert \vtaux^{\perp} \big\Vert + \big\Vert  \utplushalfaux^{\perp}  \big\Vert \cdot \Vert  \vtaux^{\parallel} \big\Vert  \right)  \\
& \overleq{(c)} 4 \delta  \left(  \big\Vert  \utplushalfaux^{\parallel} \big\Vert +  \big\Vert \vtaux^{\parallel} \big\Vert  \right),
\end{align*}
where the identity $(a)$ follows from the definition of $\mathcal{O}$ and $\tilde{\mathcal{O}}$; Inequality $(b)$ is due to Lemma \ref{lemma:RIPlemma}; Inequality $(c)$ follows from $\Vert \vtaux \Vert =1$ and $ \Vert \utplushalfaux  \Vert \le 2 $.\\ 

\noindent Finally, by combining the upper estimates of $(I)$, $(II)$, and $(III)$, we obtain 
\begin{align*}
& \Big\Vert \left[  \left(    \mathcal{A}^* \mathcal{A}  - \tilde{ \mathcal{A}}^* \tilde{\mathcal{A}}   \right) \left(  \utplushalfaux \vtaux^\top   \right)   \right] \vtaux   \Big\Vert\\
\le & (I)+(II)+(III)\\
& \lesssim \delta \left(    \Vert  \utplushalfaux^{\parallel} \Vert  +\Vert  \vt^{\parallel} \Vert   \right) \\
& \quad + \left( \sqrt{\frac{\log T + \log \eeta}{m}} + \delta + \sqrt{\frac{n_1}{m}} \right) \Vert  \tilde{\vt}^{\parallel} \Vert   +  \delta \Vert \vtaux -  \vt  \Vert   + \delta \Vert \utplushalfaux - \utplushalf  \Vert \\
& \quad + \delta  \left(  \big\Vert  \utplushalfaux^{\parallel} \big\Vert +  \big\Vert \vt^{\parallel} \big\Vert  \right)\\
\lesssim & \sqrt{\frac{\log T + \log \eeta}{m}} + \left( \delta + \sqrt{\frac{n_1}{m}} \right) \Vert  \tilde{\vt}^{\parallel} \Vert  + \delta  \big\Vert  \utplushalfaux^{\parallel} \big\Vert + \delta \Vert \vtaux -  \vt  \Vert  + \delta  \Vert \utplushalfaux - \utplushalf  \Vert.
\end{align*}
This completes the proof.

\subsection{Proof of Lemma \ref{lemma:nearindependencebounds}}\label{app:concentration6}
The RIP of $\mathcal{A}$ provides 
\begin{equation}\label{ineq:intern20}
\begin{split}
& \Big\vert \innerproduct{ \mathcal{A}  \left(  \ustar \left( v^{\perp}_t  \right)^\top \right), \mathcal{A} \left(  \ustar \vstar^\top \right) } \Big\vert\\
& \le \Big\vert \innerproduct{ \mathcal{A}  \left(  \ustar \left( v^{\perp}_t -\vtaux^{\perp}  \right)^\top \right), \mathcal{A} \left(  \ustar \vstar^\top \right)     } \Big\vert  + \Big\vert \innerproduct{ \mathcal{A}  \left(  \ustar \left( \vtaux^{\perp}  \right)^\top \right), \mathcal{A} \left(  \ustar \vstar^\top   \right)     } \Big\vert \\
& \le \delta \Vert  v^{\perp}_t -\vtaux^{\perp} \Vert + \Big\vert \innerproduct{ \mathcal{A}  \left(  \ustar \left( \vtaux^{\perp}  \right)^\top \right), \mathcal{A} \left(  \ustar \vstar^\top   \right)     } \Big\vert.
\end{split}
\end{equation} 
The second term in the right-hand side of \eqref{ineq:intern20} is rewritten as 
\begin{align*}
\Big\vert \innerproduct{ \mathcal{A}  \left(  \ustar \left( \vtaux^\perp \right)^\top \right), \mathcal{A} \left(  \ustar \vstar^\top   \right)     } \Big\vert 
&= \frac{1}{m} \Big\vert \sum_{i=1}^m \froinnerproduct{A_i, \ustar \left( \vtaux^\perp \right)^\top }  \froinnerproduct{A_i, \ustar \vstar^\top} \Big\vert\\
&= \frac{1}{m} \Big\vert \sum_{i=1}^m \froinnerproduct{A_i, \ustar \left( \vtaux^\perp \right)^\top } \left(A_i\right)_{1,1}  \Big\vert\\
&= \frac{1}{m} \Big\vert \sum_{i=1}^m \froinnerproduct{O_i, \ustar \left( \vtaux^\perp \right)^\top } \left(A_i\right)_{1,1}  \Big\vert\\
&= \frac{1}{m} \Big\vert \froinnerproduct{ \sum_{i=1}^m  \left(A_i\right)_{1,1} O_i, \ustar \left( \vtaux^\perp \right)^\top }   \Big\vert \\
&= \frac{1}{m} \Big\vert \innerproduct{ \sum_{i=1}^m  \left(A_i\right)_{1,1} O_i  \vtaux^\perp  , \ustar  }   \Big\vert \\
&= \frac{1}{m} \twonorm{   \sum_{i=1}^m  \left(A_i\right)_{1,1} O_i  \vtaux^\perp   }.
\end{align*}
Hence, the assumption in \eqref{ineq:indepbound1a} implies 
\begin{equation*}
\Big\vert \innerproduct{ \mathcal{A}  \left(  \ustar \left( v^{\perp}_t  \right)^\top \right), \mathcal{A} \left(  \ustar \vstar^\top   \right)     } \Big\vert \lesssim   \sqrt{\frac{ \log T + \log n}{m}} \Vert  \vtaux^{\perp}  \Vert.
\end{equation*}
Inserting this inequality into \eqref{ineq:intern20} yields inequality \eqref{ineq:intern5}.\\

It remains to show the inequality in \eqref{ineq:intern6}. By applying the triangle inequality several times in combination with the RIP of $\mathcal{A}$ we obtain that
\begin{equation}\label{ineq:intern33}
\begin{split}
& \Big\vert \innerproduct{\mathcal{A} \left( \ustar \left( v^{\perp}_t \right)^\top \right)  , \mathcal{A} \left(   \utplushalf^{\perp} \left( v^{\perp}_t \right)^\top  \right) } \Big\vert \\
& \le \delta \Vert \utplushalf^{\perp} - \utplushalfaux^{\perp} \Vert + 2\delta \Vert v^{\perp}_t -\vtaux^{\perp} \Vert \cdot \twonorm{ \utplushalf^{\perp} } +    	\Big\vert \innerproduct{\mathcal{A} \left( \ustar \left( \vtaux^{\perp} \right)^\top \right)  , \mathcal{A} \left(   \utplushalfaux^{\perp} \left( \vtaux^{\perp} \right)^\top  \right) } \Big\vert\\
& \le \delta \Vert \utplushalf^{\perp} - \utplushalfaux^{\perp} \Vert + 4\delta \Vert v^{\perp}_t -\vtaux^{\perp} \Vert +    	\Big\vert \innerproduct{\mathcal{A} \left( \ustar \left( \vtaux^{\perp} \right)^\top \right)  , \mathcal{A} \left(   \utplushalfaux^{\perp} \left( \vtaux^{\perp} \right)^\top  \right) } \Big\vert, 
\end{split}
\end{equation}
where in the last inequality we used that $\twonorm{\utplushalf^\perp} \le 2$, which holds by Lemma~\ref{lemma:perpdecrease} due to the RIP of $\mathcal{A}$. Next, we note that
\begin{align*}
\Big\vert \innerproduct{\mathcal{A} \left( \ustar \left( \vtaux^{\perp} \right)^\top \right)  , \mathcal{A} \left(   \utplushalfaux^{\perp} \left( \vtaux^{\perp} \right)^\top  \right) } \Big\vert 
&= \frac{1}{m} \Big\vert \sum_{i=1}^{m} \froinnerproduct{A_i,\ustar \left( \vtaux^{\perp} \right)^\top} \froinnerproduct{A_i, \utplushalfaux^{\perp} \left( \vtaux^{\perp} \right)^\top }    \Big\vert\\
&= \frac{1}{m} \Big\vert \sum_{i=1}^{m} \froinnerproduct{ O_i ,\ustar \left( \vtaux^{\perp} \right)^\top} \froinnerproduct{D_i, \utplushalfaux^{\perp} \left( \vtaux^{\perp} \right)^\top }    \Big\vert\\
&= \frac{1}{m} \Big\vert \sum_{i=1}^{m} \innerproduct{ O_i^\top e_1 , \vtaux^{\perp}} \froinnerproduct{D_i, \utplushalfaux^{\perp} \left( \vtaux^{\perp} \right)^\top }    \Big\vert.
\end{align*}
Hence, it follows from \eqref{ineq:indepbound2a}, the RIP of $\mathcal{A}$, and $ \twonorm{ \utplushalfaux } \le 2$ that
\begin{align*}
\Big\vert \innerproduct{\mathcal{A} \left( \ustar \left( \vtaux^{\perp} \right)^\top \right)  , \mathcal{A} \left(   \utplushalfaux^{\perp} \left( \vtaux^{\perp} \right)^\top  \right) } \Big\vert 
&\lesssim \sqrt{ \frac{\log T + \log \eeta}{m} } \cdot \twonorm{ \mathcal{A} \left(  \utplushalfaux^\perp (\vtaux^\perp )^\top  \right)  }\\
&\lesssim \sqrt{ \frac{\log T + \log \eeta}{m} } .
\end{align*}
Combining this inequality with \eqref{ineq:intern33} yields \eqref{ineq:intern6}.

\section{Proofs of Lemmas in Phase 1}

\subsection{Proof of Lemma~\ref{lemma:perpdecrease}}
\label{app:perpdecrease}
It follows from the normal equations that
\begin{align*}
&\utplushalf - \innerproduct{\vstar ,v_t} \ustar \\
=& \left[ \left( \Id -\mathcal{A}^* \mathcal{A} \right) \left(  \utplushalf \vt^\top  -   \ustar  \vstar^\top    \right)   \right] \vt \\
=& \left[ \left( \Id -\mathcal{A}^* \mathcal{A} \right) \left( \left(  \utplushalf - \innerproduct{\vstar ,v_t} \ustar  \right) \vt^\top    \right)   \right] \vt + \left[ \left( \Id -\mathcal{A}^* \mathcal{A} \right) \left(    \ustar  \left( \innerproduct{\vstar ,v_t} \vt^\top  -  \vstar^\top   \right)   \right)   \right] \vt.
\end{align*}
In the following we will set for convenience that $\lambda_t = \innerproduct{\vstar ,\vt} $. Then we obtain by the previous calculation, the triangle inequality, and the Restricted Isometry Property that
\begin{align*}
&\Vert \utplushalf - \lambda_t \ustar  \Vert\\
\le &\Big\Vert  \left[ \left( \Id -\mathcal{A}^* \mathcal{A} \right) \left( \left(  \utplushalf - \lambda_t \ustar  \right) \vt^\top    \right)   \right] \vt \Big\Vert + \Big\Vert  \left[ \left( \Id -\mathcal{A}^* \mathcal{A} \right) \left(    \ustar  \left( \lambda_t \vt -  \vstar  \right)^\top   \right)   \right] \vt \Big\Vert \\
\le & \delta \left( \Vert  \utplushalf - \lambda_t \ustar  \Vert +    \Vert  \lambda_t \vt -  \vstar    \Vert \right),
\end{align*}
where in the last line we have used that $ \twonorm{\vt}=1 $.
Rearranging terms yields that
\begin{equation}\label{ineq:intern3}
\Vert \utplushalf - \lambda_t \ustar  \Vert \le \frac{\delta}{1-\delta}   \Big\Vert  \lambda_t \vt -  \vstar    \Big\Vert.
\end{equation}
We compute that
\begin{align*}
\lambda_t \vt -  \vstar   &= \lambda^2_t \vstar  + \lambda_t \vt^\perp  - \vstar   = \left(  \lambda^2_t-1  \right) \vstar  + \lambda_t \vt^\perp.
\end{align*}
Due to $ 1-\lambda_t^2   = 1- \innerproduct{\vstar ,v_t}^2   = \Vert \vt^\perp \Vert^2  $ and $ \lambda_t^2 = \Vert \vstar ^{\parallel} \Vert^2 $ this implies that
\begin{equation*}
\begin{split}
\Vert \lambda_t \vt -  \vstar   \Vert^2 &= \left(1-\lambda_t^2\right)^2  \Vert \vstar  \Vert^2 + \lambda^2_t \Vert \vt^{\perp} \Vert^2  \\
 &= \Vert \vt^{\perp} \Vert^4   + \Vert \vt^{\parallel} \Vert^2 \Vert \vt^{\perp} \Vert^2\\
 &= \Vert \vt^{\perp} \Vert^2,
\end{split}
\end{equation*}
where in the last line we used that $ \twonorm{ \vt^\parallel }^2 + \twonorm{\vt^\perp}^2 = \twonorm{\vt}^2= 1$.
Together with \eqref{ineq:intern3}  this shows \eqref{equ:bound1}. 
Since $\Vert \utplushalf^\perp\Vert \le \big\Vert \utplushalf - \innerproduct{\vstar ,v_t} \ustar  \big\Vert $ this implies \eqref{equ:bound2}.  
In order to prove inequality \eqref{equ:bound3} we note that
\begin{align*}
\Vert \utplushalf \Vert &\le \Vert \utplushalf -  \innerproduct{\vstar ,\vt}  \ustar  \Vert + \vert  \innerproduct{\vstar ,\vt}  \vert \Vert \ustar  \Vert \\
&\le \frac{\delta}{1-\delta} \Vert \vt^{\perp} \Vert + \vert  \innerproduct{\vstar ,\vt}  \vert \\
&\le \frac{\delta}{1-\delta}+ \vert \innerproduct{\vstar ,\vt}  \vert \\
&\le 2,
\end{align*}
where the third line follows from inequality \eqref{equ:bound1} and from $\Vert \ustar  \Vert =1 $. 
In the last line we used the assumption that $\delta \le \frac{1}{2} $ and $ \twonorm{\vt} =\twonorm{\vstar } =1 $. 
This shows inequality \eqref{equ:bound3}.

\subsection{Proof of Lemma~\ref{lemma:closeness3}}
\label{app:closeness3}
We will first show the following auxiliary inequality:
\begin{equation}\label{ineq:closeness3}
\begin{split}
\twonorm{  	\utplushalf^{\parallel} - 	\utplushalfaux^{\parallel}} 
\le & \left( \ct + C\delta \left(\ct+1 \right) \right) \twonorm{\vt^\parallel} 
+C\delta  \twonorm{\utplushalf^\perp - \utplushalfaux^\perp} 
\end{split},
\end{equation}
where $C>0$ is an absolute constant chosen large enough.\\

\noindent \textbf{Proof of inequality \eqref{ineq:closeness3}}:
Recall that $\utplushalf$ satisfies
\begin{align*}
\utplushalf - \innerproduct{\vt,\vstar } \ustar  &= [(\mathrm{Id} - \mathcal{A}^* \mathcal{A}) (\utplushalf\vt^\top  - \ustar  \vstar^\top )] \vt.
\end{align*}
Then it follows that 
\begin{align*}
\utplushalf - \innerproduct{\vt,\vstar } \ustar  &= [(\mathrm{Id} - \mathcal{A}^* \mathcal{A}) (\utplushalf\vt^\top  - \ustar  \vstar^\top )] \vt,
\end{align*}
which is equivalently rewritten as 
\begin{equation}\label{eq:utplushalf_para}
\utplushalf^{\parallel} - \innerproduct{\vt,\vstar } \ustar =\innerproduct{\ustar  \vt^\top , (\mathrm{Id} - \mathcal{A}^* \mathcal{A}) (\utplushalf\vt^\top  - \ustar  \vstar^\top )} \ustar.
\end{equation}
Similarly $\utplushalfaux$ also satisfies
\begin{equation}\label{eq:utplushalfaux_para}
\utplushalfaux^{\parallel} -  \innerproduct{\vtaux,\vstar }  \ustar = \innerproduct{ \ustar  \vtaux^\top ,  (\mathrm{Id} - \tilde{\mathcal{A}}^* \tilde{\mathcal{A}}) (\utplushalfaux \vtaux^\top  - \ustar  \vstar^\top )} \ustar.
\end{equation}
We obtain from \eqref{eq:utplushalf_para} and \eqref{eq:utplushalfaux_para} that 
\begin{align*}
	\utplushalf^{\parallel} - 	\utplushalfaux^{\parallel}  =& \innerproduct{\vt- \vtaux,\vstar } \ustar   +\innerproduct{\ustar  \vt^\top , (\mathrm{Id} - \mathcal{A}^* \mathcal{A}) (\utplushalf\vt^\top - \ustar  \vstar^\top )} \ustar \\
	 &- \froinnerproduct{ \ustar  \vtaux^\top ,  (\mathrm{Id} - \tilde{\mathcal{A}}^* \tilde{\mathcal{A}}) (\utplushalfaux \vtaux^\top  - \ustar  \vstar^\top )} \ustar \\
=& \innerproduct{\vt- \vtaux,\vstar } \ustar   +  \froinnerproduct{\ustar   \left( \vt - \vtaux  \right)^\top, (\mathrm{Id} - \mathcal{A}^* \mathcal{A}) (\utplushalf\vt^\top  - \ustar  \vstar^\top )} \ustar \\
& +  \innerproduct{\ustar  \vtaux^\top , (\mathrm{Id} - \mathcal{A}^* \mathcal{A}) (\utplushalf\vt^\top  - \ustar  \vstar^\top ) - (\mathrm{Id} - \tilde{\mathcal{A}}^* \tilde{\mathcal{A}}) (\utplushalfaux  \vtaux^\top  - \ustar  \vstar^\top )} \ustar \\
=& \innerproduct{\vt- \vtaux,\vstar } \ustar   +  \froinnerproduct{\ustar   \left( \vt - \vtaux  \right)^\top , (\mathrm{Id} - \mathcal{A}^* \mathcal{A}) (\utplushalf\vt^\top  - \ustar  \vstar^\top )} \ustar \\
& + \froinnerproduct{\ustar  \vtaux^\top, (\mathrm{Id} - \mathcal{A}^* \mathcal{A}) (\utplushalf\vt^\top ) - (\mathrm{Id} - \tilde{\mathcal{A}}^* \tilde{\mathcal{A}}) (\utplushalfaux  \vtaux^\top )} \ustar \\
&+ \froinnerproduct{  \ustar  \vtaux^\top , \left(   \tilde{\mathcal{A}}^* \tilde{\mathcal{A}}-  \mathcal{A}^* \mathcal{A}  \right) \left(  \ustar  \vstar^\top  \right) } \ustar \\
=& \innerproduct{\vt- \vtaux,\vstar } \ustar   +  \froinnerproduct{\ustar   \left( \vt - \vtaux  \right)^\top , (\mathrm{Id} - \mathcal{A}^* \mathcal{A}) (\utplushalf\vt^\top  - \ustar  \vstar^\top )} \ustar \\
&+ \froinnerproduct{\ustar  \vtaux^\top , (\mathrm{Id} - \mathcal{A}^* \mathcal{A})  \left( \utplushalf  \vt^\top   - \utplushalfaux  \vtaux^\top   \right)  } \ustar \\
&+ \froinnerproduct{  \ustar  \vtaux^\top , \left(   \tilde{\mathcal{A}}^* \tilde{\mathcal{A}}-  \mathcal{A}^* \mathcal{A}  \right) \left(  \utplushalfaux  \vtaux^\top   \right) } \ustar  \\
& + \froinnerproduct{  \ustar  \vtaux^\top , \left(   \tilde{\mathcal{A}}^* \tilde{\mathcal{A}}-  \mathcal{A}^* \mathcal{A}  \right) \left(  \ustar  \vstar^\top  \right) } \ustar. 
\end{align*}	
It follows from the triangle inequality, the restricted isometry property, and the Cauchy-Schwarz inequality that
\begin{align*}
\twonorm{  	\utplushalf^{\parallel} - 	\utplushalfaux^{\parallel}} \le& \twonorm{ \vt^{\parallel} - \vtaux^{\parallel}    } + \delta  \twonorm{\vt -\vtaux} \cdot \fronorm{\utplushalf\vt^\top - \ustar  \vstar ^\top }  \\
&+ \delta \twonorm{\vtaux} \cdot \fronorm{  \utplushalf  \vt^\top  - \utplushalfaux  \vtaux^\top   } +  \Big\Vert \left[  \left(   \tilde{\mathcal{A}}^* \tilde{\mathcal{A}}-  \mathcal{A}^* \mathcal{A}  \right) \left(  \utplushalfaux  \vtaux^\top  \right) \right]  \vtaux \Big\Vert \\
&+ \twonorm{ \left[  \left(    \tilde{\mathcal{A}}^* \tilde{\mathcal{A}}-  \mathcal{A}^* \mathcal{A}  \right) \left(  \ustar  \vstar ^\top \right)  \right] \vtaux   }\\
\le& \twonorm{ \vt^{\parallel} - \vtaux^{\parallel}    } + \delta  \twonorm{\vt -\vtaux} \left( \twonorm{\utplushalf} \cdot \twonorm{\vt} + \twonorm{\ustar } \cdot \twonorm{\vstar }  \right)  \\
&+ \delta \twonorm{\vtaux} \left(   \twonorm{\utplushalf - \utplushalfaux} \cdot \twonorm{\vt} + \twonorm{\utplushalfaux} \cdot \twonorm{ \vt - \vtaux }       \right)  \\
&+  \Big\Vert \left[  \left(   \tilde{\mathcal{A}}^* \tilde{\mathcal{A}}-  \mathcal{A}^* \mathcal{A}  \right) \left(  \utplushalfaux  \vtaux^\top  \right) \right]  \vtaux \Big\Vert \\
&+ \Big\Vert \left[  \left(    \tilde{\mathcal{A}}^* \tilde{\mathcal{A}}-  \mathcal{A}^* \mathcal{A}  \right) \left(  \ustar  \vstar ^\top \right)  \right] \vtaux \Big\Vert \\
\le & \twonorm{ \vt^{\parallel} - \vtaux^{\parallel}    }  + 5\delta \twonorm{\vt -\vtaux} + 2 \delta    \twonorm{\utplushalf - \utplushalfaux} \\
&+  \Big\Vert \left[  \left(   \tilde{\mathcal{A}}^* \tilde{\mathcal{A}}-  \mathcal{A}^* \mathcal{A}  \right) \left(  \utplushalfaux  \vtaux^\top  \right) \right]  \vtaux \Big\Vert \\
&+ \Big\Vert \left[  \left(    \tilde{\mathcal{A}}^* \tilde{\mathcal{A}}-  \mathcal{A}^* \mathcal{A}  \right) \left(  \ustar  \vstar ^\top \right)  \right] \vtaux \Big\Vert,
\end{align*}
where in the last inequality we have used the assumptions $\twonorm{\utplushalf}  \le 2$ and $\twonorm{\utplushalfaux}  \le 2$.
Recall from Lemma \ref{lemma:help1} that
\begin{equation}\label{ineq:aux4}
\begin{split}
&\Big\Vert \left[  \left(    \mathcal{A}^* \mathcal{A}  - \tilde{ \mathcal{A}}^* \tilde{\mathcal{A}}   \right) \left(  \utplushalfaux \vtaux^\top   \right)   \right] \vtaux  \Big\Vert\\
\lesssim &  \sqrt{\frac{\log T}{m}}  + \delta \Vert  \vt^{\parallel} \Vert  + \left( \delta + \sqrt{\frac{n_1}{m}} \right) \Vert  \tilde{\vt}^{\parallel} \Vert  + \delta \Vert \vtaux -  \vt  \Vert  + \delta  \Vert \utplushalfaux - \utplushalf  \Vert. 
\end{split}
\end{equation}	
From Lemma \ref{lemma:stochasticbound} it follows that
\begin{equation}\label{ineq:auxiliarybound102}
\Big\Vert \left[  \left(    \tilde{\mathcal{A}}^* \tilde{\mathcal{A}}-  \mathcal{A}^* \mathcal{A}  \right) \left(  \ustar  \vstar^\top  \right)  \right] \vtaux \Big\Vert \le C \left(  \sqrt{\frac{\log T}{m}}+  \sqrt{  \frac{n_1}{m}}  \Vert \vtaux^{\parallel} \Vert \right) + \delta \Vert \vt -\vtaux \Vert.
\end{equation}
This implies that there is an absolute constant $\tilde{C}_1>0$ such that 
\begin{align*}
& \twonorm{  	\utplushalf^{\parallel} - 	\utplushalfaux^{\parallel}}\\
 \le & \twonorm{ \vt^{\parallel} - \vtaux^{\parallel}    }  + \tilde{C}_1 \left(   \sqrt{\frac{\log T}{m}}+ \left(  \sqrt{  \frac{n_1}{m}} + \delta  \right)  \Vert \vtaux^{\parallel} \Vert + \delta \twonorm{ \vt^\parallel } + \delta \Vert \vt -\vtaux \Vert  + \delta    \twonorm{\utplushalf - \utplushalfaux}   \right).
\end{align*}
By using Assumption \eqref{ineq:induction1} and Condition ii) of Proposition~\ref{prop:stage1}, we obtain that
\begin{align*}
	&\twonorm{  	\utplushalf^{\parallel} - 	\utplushalfaux^{\parallel}}\\
	\le & \twonorm{ \vt^{\parallel} - \vtaux^{\parallel}    }  
	+ \tilde{C}_2 \left(   \delta \Vert \vtaux^{\parallel} \Vert +   \delta \Vert \vt^{\parallel} \Vert  + \delta \Vert \vt -\vtaux \Vert  + \delta    \twonorm{\utplushalf - \utplushalfaux}   \right)
\end{align*}
with an absolute constant $\tilde{C}_2 >0$ chosen large enough.
By using the triangle inequality and Assumption \eqref{ineq:induction2} we obtain that
\begin{align*}
	&\twonorm{  	\utplushalf^{\parallel} - 	\utplushalfaux^{\parallel}} 
	\le \twonorm{ \vt^{\parallel} - \vtaux^{\parallel}    }  
	+\tilde{C}_3\delta (1+\ct) \twonorm{\vt^\parallel} 
	+\tilde{C}_3\delta  \twonorm{\utplushalf - \utplushalfaux},
\end{align*}
where $\tilde{C}_3>0$ is an absolute constant chosen large enough.
By using the triangle inequality, by rearranging terms, and using the elementary inequality $ 1/(1-x) \le 1+2x $ for $ 0 < x <1/2 $ it follows that
\begin{align*}
\twonorm{  	\utplushalf^{\parallel} - 	\utplushalfaux^{\parallel}} 
\le &  \left(1+\tilde{C}_4\delta\right) \twonorm{ \vt^{\parallel} - \vtaux^{\parallel}    }  
+\tilde{C}_4\delta (1+\ct)  \twonorm{\vt^\parallel} 
+\tilde{C}_4\delta  \twonorm{\utplushalf^\perp - \utplushalfaux^\perp},
\end{align*}
where $\tilde{C}_4>0$ is an absolute constant chosen large enough and we have used that $\delta>0$ is chosen small enough. Using Assumption \eqref{ineq:induction2} we obtain that
\begin{align*}
	\twonorm{  	\utplushalf^{\parallel} - 	\utplushalfaux^{\parallel}}
	&\le \ct \twonorm{ \vt^{\parallel} }  
	+\tilde{C}_4\delta (1+\ct)  \twonorm{\vt^\parallel} 
	+\tilde{C}_4\delta  \twonorm{\utplushalf^\perp - \utplushalfaux^\perp} \\
	&\le \left( \left(1+ 2\tilde{C}_4\delta \right) \ct + \tilde{C}_4 \delta \right) \twonorm{\vt^\parallel}
	+\tilde{C}_4\delta  \twonorm{\utplushalf^\perp - \utplushalfaux^\perp} 
\end{align*}
This shows the auxiliary inequality \eqref{ineq:closeness3}.\\

\noindent \textbf{Proof of inequality \eqref{ineq:closeness5}:}
Having established the auxiliary inequality \eqref{ineq:closeness3}, we can in the next step prove inequality \eqref{ineq:closeness5}. 
It follows from the normal equations that
	\begin{align*}
	\utplushalf^\perp  &= P_{\ustar ^\perp}  [(\mathrm{Id} - \mathcal{A}^* \mathcal{A}) (\utplushalf\vt^\top  - \ustar  \vstar^\top )] \vt, \\
	\utplushalfaux^\perp&=  P_{\ustar ^\perp}  [(\mathrm{Id} - \tilde{\mathcal{A}}^* \tilde{\mathcal{A}}) (\utplushalfaux \vtaux^\top  - \ustar  \vstar^\top )] \vtaux.
\end{align*}
Hence, we obtain that
\begin{align*}
&\twonorm{	\utplushalf^\perp - 	\utplushalfaux^\perp   }\\
 \le & \twonorm{[(\mathrm{Id} - \mathcal{A}^* \mathcal{A}) (\utplushalf\vt^\top  - \ustar  \vstar^\top )] \vt -[(\mathrm{Id} - \tilde{\mathcal{A}}^* \tilde{\mathcal{A}}) (\utplushalfaux \vtaux^\top - \ustar  \vstar^\top )] \vtaux }\\
\le  &  \bracing{=:(I)}{ \twonorm{[(\mathrm{Id} - \mathcal{A}^* \mathcal{A}) (\utplushalf\vt^\top  )] \vt -[(\mathrm{Id} - \tilde{\mathcal{A}}^* \tilde{\mathcal{A}}) (\utplushalfaux \vtaux^\top  )] \vtaux } } \\
&+ \bracing{=:(II)}{ \twonorm{[(\mathrm{Id} - \mathcal{A}^* \mathcal{A}) ( \ustar  \vstar^\top )] \vt -[(\mathrm{Id} - \tilde{\mathcal{A}}^* \tilde{\mathcal{A}}) (  \ustar  \vstar^\top )] \vtaux }}.
\end{align*}
We estimate the first term by
\begin{align*}
\twonorm{(I)}  \overleq{(a)} & \twonorm{  (\mathrm{Id} - \mathcal{A}^* \mathcal{A}) (\utplushalf\vt^\top  - \utplushalfaux \vtaux^\top   ) \vt    } + \Big\Vert \left[ (\mathrm{Id} - \mathcal{A}^* \mathcal{A}) (\utplushalfaux \vtaux^\top  ) \right] \left( \vt - \vtaux \right) \Big\Vert\\
&+  \Big\Vert \left[  \left(   \tilde{\mathcal{A}}^* \tilde{\mathcal{A}}-  \mathcal{A}^* \mathcal{A}  \right) \left(  \utplushalfaux  \vtaux^\top   \right) \right]  \vtaux  \Big\Vert\\
\overleq{(b)} & \delta \fronorm{\utplushalf\vt^\top  - \utplushalfaux \vtaux^\top    } + \delta  \fronorm{ \utplushalfaux \vtaux^\top  }   \twonorm{  \vt - \vtaux } \\
& + \Big\Vert \left[  \left(   \tilde{\mathcal{A}}^* \tilde{\mathcal{A}}-  \mathcal{A}^* \mathcal{A}  \right) \left(  \utplushalfaux  \vtaux^\top   \right) \right]  \vtaux \Big\Vert \\
\overleq{(c)} &  \delta  \twonorm{ \utplushalf - \utplushalfaux  }      + 3\delta  \twonorm{  \vt - \vtaux } + \Big\Vert \left[  \left(   \tilde{\mathcal{A}}^* \tilde{\mathcal{A}}-  \mathcal{A}^* \mathcal{A}  \right) \left(  \utplushalfaux  \vtaux^\top   \right) \right]  \vtaux \Big\Vert \\
\overset{(d)}{\lesssim} &  \sqrt{\frac{\log T}{m}}  + \delta \Vert  \vt^{\parallel} \Vert  + \left( \delta + \sqrt{\frac{n_1}{m}} \right) \Vert  \tilde{\vt}^{\parallel} \Vert  + \delta \Vert \vtaux -  \vt  \Vert  + \delta  \Vert \utplushalfaux - \utplushalf  \Vert .
\end{align*}
In inequality (a) we used the triangle inequality and in inequality (b) we used the Restricted Isometry Property. In inequality (c) we used the triangle inequality as well as $\twonorm{\utplushalf} \le 2$. Inequality (d) follows from inserting inequality \eqref{ineq:aux4}.
In the next step, we are going to estimate summand $(II)$. For that, we observe
\begin{align*}
\twonorm{(II)} \le & \twonorm{ [(\mathrm{Id} - \mathcal{A}^* \mathcal{A}) ( \ustar  \vstar^\top )] \left( \vt -\vtaux \right)  } + \Big\Vert \left[  \left(    \tilde{\mathcal{A}}^* \tilde{\mathcal{A}}-  \mathcal{A}^* \mathcal{A}  \right) \left(  \ustar  \vstar^\top  \right)  \right] \vtaux \Big\Vert \\
\le & 2 \delta \twonorm{ \vt -\vtaux } + C \left(  \sqrt{\frac{\log T}{m}}+  \sqrt{  \frac{n_1}{m}}  \Vert \vtaux^{\parallel} \Vert \right),
\end{align*}
where in the second inequality we have used inequality \eqref{ineq:auxiliarybound102} and that $\mathcal{A}$ satisfies
the Restricted Isometry Property.	
Hence, we have shown that
\begin{align*}
\twonorm{	\utplushalf^\perp - 	\utplushalfaux^\perp   } \le & \Vert (I) \Vert + \Vert (II) \Vert \\
 \lesssim & \sqrt{\frac{\log T}{m}}  + \delta \Vert  \vt^{\parallel} \Vert  + \left( \delta + \sqrt{\frac{n_1}{m}} \right) \Vert  \tilde{\vt}^{\parallel} \Vert  + \delta \Vert \vtaux -  \vt  \Vert  + \delta  \Vert \utplushalfaux - \utplushalf  \Vert \\
 \le &  \sqrt{\frac{\log T}{m}}  + \delta \Vert  \vt^{\parallel} \Vert  + \left( \delta + \sqrt{\frac{n_1}{m}} \right) \Vert  \tilde{\vt}^{\parallel} \Vert  + \delta \Vert \vtaux -  \vt  \Vert  + \delta  \Vert \utplushalfaux^\perp - \utplushalf^\perp  \Vert  \\
 &  + \delta  \Vert \utplushalfaux^\parallel - \utplushalf^\parallel  \Vert,
\end{align*}
where for the last line we used the triangle inequality.
Next, we obtain that
\begin{align*}
&\twonorm{	\utplushalf^\perp - \utplushalfaux^\perp} \\
\lesssim  & \delta \Vert  \vt^{\parallel} \Vert  
+ \delta \Vert  \tilde{\vt}^{\parallel} \Vert  + \delta \Vert \vtaux -  \vt  \Vert  
+ \delta  \Vert \utplushalfaux^\perp - \utplushalf^\perp  \Vert 
  + \delta  \Vert \utplushalfaux^\parallel - \utplushalf^\parallel  \Vert,
\end{align*}
where we have used Assumption \eqref{ineq:induction1} and Condition ii) of Proposition~\ref{prop:stage1}. By rearranging terms and using our assumption $\delta < 1/2$ we obtain 
that
\begin{align*}
	&\twonorm{	\utplushalf^\perp - \utplushalfaux^\perp} \lesssim   \delta \Vert  \vt^{\parallel} \Vert  
	+ \delta \Vert  \tilde{\vt}^{\parallel} \Vert  + \delta \Vert \vtaux -  \vt  \Vert  
	  + \delta  \Vert \utplushalfaux^\parallel - \utplushalf^\parallel  \Vert .	
\end{align*}
By using the triangle inequality and Assumption~\eqref{ineq:induction2} we obtain that
\begin{align*}
	&\twonorm{	\utplushalf^\perp - \utplushalfaux^\perp} \lesssim  
	\delta (1+\ct) \Vert \vt^\parallel  \Vert   + \delta  \Vert \utplushalfaux^\parallel - \utplushalf^\parallel  \Vert .		
\end{align*}
By inserting the auxiliary inequality \eqref{ineq:closeness3} we obtain that
\begin{align*}
	\twonorm{	\utplushalf^\perp - \utplushalfaux^\perp} \lesssim  
	&\delta \left(1+\ct\right) \Vert \vt^\parallel  \Vert 
	  +\delta^2 \twonorm{ \utplushalf^\perp -\utplushalfaux^\perp}.
\end{align*}
By rearranging terms we obtain that
\begin{align*}
	\twonorm{	\utplushalf^\perp - \utplushalfaux^\perp} \lesssim  \delta \left(1+\ct\right) \Vert \vt^\parallel  \Vert. 	
\end{align*}
This shows the claimed inequality \eqref{ineq:closeness5}.

In order to finish the proof, it remains to prove inequality \eqref{ineq:closeness4}.
For that, it suffices to note that this inequality follows from inserting inequality \eqref{ineq:closeness5},
which we have just shown,
into the auxiliary inequality \eqref{ineq:closeness3}.

\subsection{Proof of Lemma~\ref{lemma:uparallelbound}}
\label{app:uparallelbound}
For convenience, we set $ \lambda_t= \innerproduct{v_t,\vstar }$.
We compute that
\begin{align*}
\Vert \utplushalf^{\parallel} \Vert &= \vert \innerproduct{\utplushalf, \ustar } \vert \\
 &= \vert  \innerproduct{ \utplushalf-\lambda_t \ustar , \ustar  } + \lambda_t \innerproduct{\ustar ,\ustar }  \vert\\
 &= \vert  \innerproduct{ \utplushalf-\lambda_t \ustar , \ustar  } + \innerproduct{\vt, \vstar }  \vert.
\end{align*}
It follows from the triangle inequality and $\vert \innerproduct{\vt, \vstar }  \vert = \twonorm{\vt^\parallel}$  that
\begin{equation}\label{ineq:intern2}
	\twonorm{\utplushalf^\parallel} - \vert \innerproduct{ \utplushalf-\lambda_t \ustar , \ustar  } \vert  \le  \Vert \vt^{\parallel} \Vert  \le \twonorm{\utplushalf^\parallel} +\vert \innerproduct{ \utplushalf-\lambda_t \ustar , \ustar  } \vert.
\end{equation}
Hence, we need to bound $ \vert \innerproduct{ \utplushalf-\lambda_t \ustar , \ustar  } \vert $ from above. For that purpose we compute that
\begin{align*}
&\utplushalf-\lambda_t \ustar \\
=& \left[ \left(\Id - \mathcal{A}^* \mathcal{A} \right) \left( \utplushalf v_t^\top  - \ustar  \vstar^\top     \right)   \right] v_t\\
=& \lambda_t  \left[ \left(\Id - \mathcal{A}^* \mathcal{A} \right) \left( \utplushalf v_t^\top - \ustar  \vstar^\top     \right)   \right] \vstar  + \left[ \left(\Id - \mathcal{A}^* \mathcal{A} \right) \left( \utplushalf v_t^\top - \ustar  \vstar^\top     \right)   \right] \vt^\perp\\
=& \lambda_t  \left[ \left(\Id - \mathcal{A}^* \mathcal{A} \right) \left( \utplushalf v_t^\top - \ustar  \vstar^\top     \right)   \right] \vstar  + \left[ \left(\Id - \mathcal{A}^* \mathcal{A} \right) \left( \utplushalf v_t^\top   \right)   \right] \vt^\perp - \left[ \left(\Id - \mathcal{A}^* \mathcal{A} \right) \left(  \ustar  \vstar^\top    \right)   \right] \vt^\perp \\
=&  \lambda_t  \left[ \left(\Id - \mathcal{A}^* \mathcal{A} \right) \left( \utplushalf v_t^\top - \ustar  \vstar^\top    \right)   \right] \vstar  + \left[ \left(\Id - \mathcal{A}^* \mathcal{A} \right) \left( \utplushalf v_t^\top   \right)   \right] \vt^\perp + \left[ \left( \mathcal{A}^* \mathcal{A} \right) \left(  \ustar  \vstar^\top     \right)   \right] \vt^\perp \\
=&\lambda_t  \left[ \left(\Id - \mathcal{A}^* \mathcal{A} \right) \left( \utplushalf v_t^\top - \ustar  \vstar^\top     \right)   \right] \vstar  + \lambda_t  \left[ \left(\Id - \mathcal{A}^* \mathcal{A} \right) \left( \utplushalf \vstar^\top    \right)   \right] \vt^\perp \\
&+ \left[ \left(\Id - \mathcal{A}^* \mathcal{A} \right) \left( \utplushalf \left( v^{\perp}_t \right)^\top    \right)   \right] \vt^\perp + \left[ \left( \mathcal{A}^* \mathcal{A} \right) \left(  \ustar  \vstar^\top     \right)   \right] \vt^\perp \\
 =&\lambda_t  \left[ \left(\Id - \mathcal{A}^* \mathcal{A} \right) \left( \utplushalf v_t^\top- \ustar  \vstar^\top     \right)   \right] \vstar  + \lambda_t  \left[ \left(\Id - \mathcal{A}^* \mathcal{A} \right) \left( \utplushalf \vstar^\top     \right)   \right] \vt^\perp \\ 
 &+  \innerproduct{\utplushalf, \ustar } \left[ \left(\Id - \mathcal{A}^* \mathcal{A} \right) \left( \ustar  \left( v^{\perp}_t \right)^\top    \right)   \right] \vt^\perp + \left[ \left(\Id - \mathcal{A}^* \mathcal{A} \right) \left( \utplushalf^{\perp} \left( v^{\perp}_t \right)^\top    \right)   \right] \vt^\perp \\
 &+ \left[ \left( \mathcal{A}^* \mathcal{A} \right) \left(  \ustar  \vstar^\top     \right)   \right] \vt^\perp.
\end{align*}
It follows that
\begin{align*}
& \vert \innerproduct{\utplushalf-\lambda_t \ustar ,\ustar } \vert \\
\le 
& \vert \lambda_t \vert \cdot \Big\Vert  \left[ \left(\Id - \mathcal{A}^* \mathcal{A} \right) \left( \utplushalf v_t^\top - \ustar  \vstar^\top   \right)   \right] \vstar   \Big\Vert
+ \vert \lambda_t \vert \cdot \Big\Vert \left[ \left(\Id - \mathcal{A}^* \mathcal{A} \right) \left( \utplushalf \vstar^\top    \right)   \right] \vt^\perp \Big\Vert\\
& + \vert \innerproduct{\utplushalf, \ustar } \vert \cdot \Big\Vert \left[ \left(\Id - \mathcal{A}^* \mathcal{A} \right) \left( \ustar  \left( v^{\perp}_t \right)^\top   \right)   \right] \vt^\perp \Big\Vert
+ \Big\vert \innerproduct{\ustar ,   \left[ \left(\Id - \mathcal{A}^* \mathcal{A} \right) \left( \utplushalf^{\perp} \left( v^{\perp}_t \right)^\top   \right)   \right] \vt^\perp } \Big\vert \\
&+ \Big\vert \innerproduct{  \left[ \left( \mathcal{A}^* \mathcal{A} \right) \left(  \ustar  \vstar^\top    \right)   \right] \vt^\perp , \ustar  } \Big\vert \\
= &  \twonorm{ \vt^\parallel }  \cdot \Big\Vert  \left[ \left(\Id - \mathcal{A}^* \mathcal{A} \right) \left( \utplushalf v_t^\top - \ustar  \vstar^\top    \right)   \right] \vstar   \Big\Vert
+ \twonorm{ \vt^\parallel } \cdot \Big\Vert \left[ \left(\Id - \mathcal{A}^* \mathcal{A} \right) \left( \utplushalf \vstar^\top    \right)   \right] \vt^\perp \Big\Vert\\
& +  \twonorm{\utplushalf^\parallel} \cdot \Big\Vert \left[ \left(\Id - \mathcal{A}^* \mathcal{A} \right) \left( \ustar  \left( v^{\perp}_t \right)^\top   \right)   \right] \vt^\perp \Big\Vert
+ \Big\vert \innerproduct{  \mathcal{A} \left( \utplushalf^{\perp} \left( v^{\perp}_t \right)^\top \right), \mathcal{A} \left(  \ustar  (\vt^\perp)^\top \right) } \Big\vert \\
&+ \Big\vert \innerproduct{  \mathcal{A} \left(  \ustar  \vstar^\top   \right)    ,  \mathcal{A} \left( \ustar  \left(\vt^\perp\right)^\top \right)} \Big\vert .
\end{align*}
By the RIP of $\mathcal{A}$ and the assumption $ \twonorm{\utplushalf} \le 2$ we obtain that
\begin{align*}
\Big\Vert  \left[ \left(\Id - \mathcal{A}^* \mathcal{A} \right) \left( \utplushalf v_t^\top - \ustar  \vstar^\top    \right)   \right] \vstar    \Big\Vert &\le \delta \Vert    \utplushalf v_t^\top - \ustar  \vstar^\top   \Vert_F\\
&\le \delta \left( \Vert \utplushalf \Vert \cdot \Vert v_t \Vert  + \Vert \ustar  \vstar^\top  \Vert_F \right)  \\
&= \delta  \left( \Vert \utplushalf \Vert + 1    \right)  \\ 
&\le 3\delta.
\end{align*} 
Furthermore, it follows from the Restricted Isometry Property and the assumption $ \twonorm{\utplushalf} \le 2$ that
\begin{align*}
\Big\Vert  \left[ \left(\Id - \mathcal{A}^* \mathcal{A} \right) \left( \utplushalf \vstar^\top    \right)   \right] \vt^\perp \Big\Vert 
&\le \delta \Vert \utplushalf \Vert \cdot \Vert \vstar \Vert \cdot \Vert \vt^\perp \Vert \le 2 \delta
\end{align*}
and
\begin{align*}
\Big\Vert \left[ \left(\Id - \mathcal{A}^* \mathcal{A} \right) \left( \ustar  \left( v^{\perp}_t \right)^\top   \right)   \right] \vt^\perp  \Big\Vert &\le \delta  \Vert \ustar  \Vert \cdot \Vert \vt^{\perp} \Vert^2 \le \delta.
\end{align*}
We obtain that
\begin{align*}
& \vert \innerproduct{\utplushalf-\lambda_t \ustar ,\ustar } \vert \\
& \le 5\delta	\twonorm{\vt^\parallel } + \delta \twonorm{ \utplushalf^\parallel }
 + \Big\vert \innerproduct{  \mathcal{A} \left( \utplushalf^{\perp} \left( v^{\perp}_t \right)^\top \right), \mathcal{A} \left(  \ustar  (\vt^\perp)^\top \right) } \Big\vert 
+ \Big\vert \innerproduct{  \mathcal{A} \left(  \ustar  \vstar^\top    \right)    ,  \mathcal{A} \left( \ustar  \left(\vt^\perp\right)^\top \right)} \Big\vert .
\end{align*}
Recall from Lemma \ref{lemma:nearindependencebounds} that
\begin{align*}
	\Big\vert \innerproduct{ \mathcal{A}  \left(  \ustar  \left( v^{\perp}_t  \right)^\top \right), \mathcal{A} \left(  \ustar  \vstar^\top    \right)     } \Big\vert &\le \delta \Vert  \vt^\perp -\vtaux^\perp \Vert + C \sqrt{\frac{\log T}{m}}
\end{align*}
and
\begin{align*}
   \Big\vert \innerproduct{\mathcal{A} \left( \ustar  \left( v^{\perp}_t \right)^\top \right)  , \mathcal{A} \left(   \utplushalf^{\perp} \left( v^{\perp}_t \right)^\top  \right) } \Big\vert 
   &\le \delta \Vert \utplushalf^\perp - \utplushalfaux^\perp \Vert + 2\delta \Vert \vt^\perp -\vtaux^\perp \Vert +C \sqrt{\frac{\log T}{m}}.
\end{align*}
Inserting these estimates into the above inequality we obtain that
\begin{align*}
&\vert \innerproduct{\utplushalf-\lambda_t \ustar ,\ustar } \vert	\\
& \le 5\delta	\twonorm{\vt^\parallel } 
+ \delta \twonorm{ \utplushalf^\parallel } 
+  \delta \Vert \utplushalf^\perp - \utplushalfaux^\perp \Vert 
+ 3\delta \Vert \vt^\perp -\vtaux^\perp \Vert + 2C \sqrt{\frac{\log T}{m}} \\
& \lesssim \delta  \left(1+\ct\right)	\twonorm{\vt^\parallel } 
+ \delta \twonorm{ \utplushalf^\parallel },
\end{align*}
where in the last line we used Assumptions \eqref{ineq:induction1}, \eqref{ineq:induction2}, Condition ii) of Proposition~\ref{prop:stage1}, and \eqref{ineq:closeness5}.
By inserting this estimate into \eqref{ineq:intern2} and by rearranging terms we obtain inequality \eqref{ineq:uparallelbound}.
This finishes the proof.

\subsection{Proof of Lemma~\ref{lemma:closeness4}}
\label{app:proof_normalization}
\textbf{Part 1 (Estimating $\twonorm{ \utplus^\parallel - \utplusaux^\parallel }$): }
First, we are going to estimate $\twonorm{ \utplus^\parallel - \utplusaux^\parallel }$. We compute that
\begin{align*}
\twonorm{ \utplus^\parallel - \utplusaux^\parallel }  &=\twonorm{ \frac{\utplushalf^\parallel}{\twonorm{\utplushalf}} - \frac{\utplushalfaux^\parallel }{\twonorm{\utplushalfaux}}  }\\
&= \frac{ \Big\Vert  \twonorm{\utplushalfaux } \utplushalf^\parallel - \twonorm{\utplushalf } \utplushalfaux^\parallel      \Big\Vert}{ \twonorm{\utplushalf} \cdot \twonorm{\utplushalfaux } }\\
&\le \frac{\twonorm{\utplushalf^\parallel  - \utplushalfaux^\parallel   }}{\twonorm{\utplushalf}} + \frac{\Big\vert \twonorm{\utplushalfaux} - \twonorm{\utplushalf } \Big\vert}{\twonorm{\utplushalf}} \cdot \frac{\twonorm{\utplushalfaux^\parallel }}{  \twonorm{\utplushalfaux}} \\
&= \frac{\twonorm{\utplushalf^\parallel  - \utplushalfaux^\parallel   }}{\twonorm{\utplushalf}} + \frac{\Big\vert \twonorm{\utplushalfaux} - \twonorm{\utplushalf } \Big\vert}{\twonorm{\utplushalf}} \cdot \twonorm{\utplusaux^\parallel }\\
&\le \bracing{=:(\S)}{ \frac{\twonorm{\utplushalf^\parallel  - \utplushalfaux^\parallel   }}{\twonorm{\utplushalf}} } + \bracing{=:(\S\S)} { \frac{ \twonorm{\utplushalfaux - \utplushalf }}{\twonorm{\utplushalf}} \cdot \twonorm{\utplusaux^\parallel } }.
\end{align*}	
We estimate the two summands separately.\\

\noindent \textbf{Estimation of $(\S)$:} We obtain that
\begin{equation}\label{ineq:aux35}
\begin{split}
\frac{\twonorm{\utplushalf^\parallel  - \utplushalfaux^\parallel   }}{\twonorm{\utplushalf}} 
\overleq{(a)}& \left( \ct + C_1\delta(1+\ct) \right) \frac{ \twonorm{ \vt^\parallel }}{\twonorm{\utplushalf}} \\
\overleq{(b)}& \left( \ct + C_1\delta(1+\ct) \right) \left( 1 + C_2\delta(1+\ct) \right) \frac{ \twonorm{ \utplushalf^\parallel }}{\twonorm{\utplushalf}} \\
=& \left( \ct + C_1\delta(1+\ct) \right) \left( 1 + C_2\delta(1+\ct) \right) \twonorm{ \utplus^\parallel }
\end{split}
\end{equation}
where in inequality $(a)$ we have used Assumption \eqref{ineq:closeness4} and in inequality $(b)$ we have used Assumption \eqref{ineq:uparallelbound}.\\

\noindent \textbf{Estimation of $(\S\S)$:} 
By the triangle inequality we have
\begin{equation}\label{ineq:auxiliary31}
\begin{split}
\frac{ \twonorm{\utplushalfaux - \utplushalf }}{\twonorm{\utplushalf}} \cdot \twonorm{\utplusaux^\parallel }
\le 
\frac{ \twonorm{\utplushalfaux^\parallel - \utplushalf^\parallel }}{\twonorm{\utplushalf}} \cdot \twonorm{\utplusaux^\parallel } +  \frac{ \twonorm{\utplushalfaux^\perp - \utplushalf^\perp }}{\twonorm{\utplushalf}} \cdot \twonorm{\utplusaux^\parallel }.
\end{split}
\end{equation}
Then we estimate the two summands in the right-hand side of \eqref{ineq:auxiliary31} individually. 
It follows from \eqref{ineq:aux35} that the first summand is upper-bounded by
\begin{align*}
\frac{ \twonorm{\utplushalfaux^\parallel - \utplushalf^\parallel }}{\twonorm{\utplushalf}} \cdot \twonorm{\utplusaux^\parallel }
\le & \left(\ct + C_1 \delta (1 + \ct) \right) \left(1+ C_2 \delta (1+\ct) \right)   \twonorm{\utplus^\parallel} \cdot \twonorm{\utplusaux^\parallel }. 
\end{align*}
Moreover by Assumptions \eqref{ineq:closeness5} and \eqref{ineq:uparallelbound} the second summand is upper-bounded by
\begin{align*}
\frac{ \twonorm{\utplushalfaux^\perp - \utplushalf^\perp }}{\twonorm{\utplushalf}} \cdot \twonorm{\utplusaux^\parallel } &\overleq{(a)} C_1 \delta \left(1+\ct\right)  \frac{  \Vert \vt^{\parallel} \Vert }{\twonorm{\utplushalf}}   \cdot \twonorm{\utplusaux^\parallel } \\  
&\overleq{(b)} C_1 \delta \left(1+\ct\right) \left(1+ C_2 \delta (1+\ct) \right) \frac{  \Vert \utplushalf^{\parallel} \Vert }{\twonorm{\utplushalf}} \cdot \twonorm{\utplusaux^\parallel }  \\ 
&= C_1 \delta \left( 1+\ct \right) \left(1+ C_2 \delta (1+\ct) \right) \twonorm{\utplus^\parallel}  \cdot \twonorm{\utplusaux^\parallel }.
\end{align*}
\noindent By combining the two estimates and inserting them into \eqref{ineq:auxiliary31}, we obtain that
\begin{equation*}
\frac{ \twonorm{\utplushalfaux - \utplushalf }}{\twonorm{\utplushalf}} \cdot \twonorm{\utplusaux^\parallel } \le 
\left(\ct + 2 C_1 \delta (1 + \ct) \right) \left(1+ C_2 \delta (1+\ct) \right)   \twonorm{\utplus^\parallel}
\cdot \twonorm{\utplusaux^\parallel }.
\end{equation*}

\noindent \textbf{Combining the estimates: } By combining the estimates for $(\S)$ and $(\S\S)$ it follows that
\begin{align*}
&\twonorm{ \utplus^\parallel - \utplusaux^\parallel }\\
& \le (\S) + (\S\S) \\
& \le \left( \ct + C_1\delta(1+\ct) \right) \left( 1 + C_2\delta(1+\ct) \right) \twonorm{ \utplus^\parallel } \\
& \quad + \left(\ct + 2 C_1 \delta (1 + \ct) \right) \left(1+ C_2 \delta (1+\ct) \right)   \twonorm{\utplus^\parallel} \cdot \twonorm{\utplusaux^\parallel } \\ 
& \le \left( \ct + 2 C_1\delta(1+\ct) \right) \left( 1 + C_2\delta(1+\ct) \right) \twonorm{ \utplus^\parallel } \\
& \quad + \left(\ct + 2 C_1 \delta (1 + \ct) \right) \left(1+ C_2 \delta (1+\ct) \right)   \twonorm{\utplus^\parallel} \left( \twonorm{\utplusaux^\parallel } + \twonorm{\utplusaux^\parallel - \utplus^\parallel  } \right),
\end{align*}
which is rearranged as
\begin{align}
& \left( 1 - \left(\ct + 2 C_1 \delta (1 + \ct) \right) \left(1+ C_2 \delta (1+\ct) \right)   \twonorm{\utplus^\parallel} \right) \twonorm{ \utplus^\parallel - \utplusaux^\parallel } \nonumber \\
& \le
\left( \ct + 2 C_1\delta(1+\ct) \right) \left( 1 + C_2\delta(1+\ct) \right) \twonorm{ \utplus^\parallel } \left(1 + \twonorm{ \utplus^\parallel } \right). \label{eq:ctbound_eq1mx}
\end{align}
Due to Lemma~\ref{lemma:ctbound} we have $\ct \lesssim 1$. Therefore one can choose $c$ in \eqref{ineq:end_stage1} as a small absolute constant so that $\delta = \frac{c}{4 \log n_2}$ satisfies
\begin{equation}
\label{eq:ctbound_equbx}
\left(\ct + 2 C_1 \delta (1 + \ct) \right) \left(1+ C_2 \delta (1+\ct) \right) \twonorm{\utplus^\parallel} < \frac{1}{2}.
\end{equation}
Then, since $\frac{1}{1-x} \le 1+2x $ for $0<x<1/2$, it follows from \eqref{eq:ctbound_eq1mx} and \eqref{eq:ctbound_equbx} that 
\begin{align}
& \twonorm{ \utplus^\parallel - \utplusaux^\parallel } \nonumber \\
& \le 
\underbrace{ \left( 1 + 2 \left(\ct + 2 C_1 \delta (1 + \ct) \right) \left(1+ C_2 \delta (1+\ct) \right)   \twonorm{\utplus^\parallel} \right)}_{(i)} \nonumber \\
& \quad \cdot \underbrace{\left( \ct + 2 C_1\delta(1+\ct) \right)}_{(ii)} \underbrace{\left( 1 + C_2\delta(1+\ct) \right)}_{(iii)} \twonorm{ \utplus^\parallel } \underbrace{\left(1 + \twonorm{ \utplus^\parallel } \right)}_{(iv)} \nonumber \\
& \le \left( 1 + \frac{C_3 c}{\log n_2} \right)^3 \left( \ct + \frac{C_3 c}{\log n_2} \right) \twonorm{\utplus^\parallel} 
\label{ineq:uparallel_aux_dist}
\end{align}
for some absolute constant $C_3$, where the second inequality follows from the assumptions $\twonorm{\utplus^\parallel} < \frac{c}{\log n_2}$ and $\delta = \frac{c}{4 \log n_2}$, and the fact that $\ct \leq C_0$ due to Lemma~\ref{lemma:ctbound}. 
Indeed, the above conditions imply
\begin{align*}
(i) &= 1 + 2 \left(\ct + 2 C_1 \delta (1 + \ct) \right) \left(1+ C_2 \delta (1+\ct) \right)   \twonorm{\utplus^\parallel} \\
&\leq 1 + \left(C_0 + \frac{2C_1 (C_0+1) c}{4 \log n_2}\right) \cdot \left(1 + \frac{C_2 (C_0+1) c}{4 \log n_2} \right) \cdot \frac{c}{\log n_2}.
\end{align*}
Then we need to choose $C_3$ so that
\[
\left(C_0 + \frac{2C_1 (C_0+1) c}{4 \log n_2}\right) \cdot \left(1 + \frac{C_2 (C_0+1) c}{4 \log n_2} \right) \leq C_3.
\]
The constant $C_3$ also needs to satisfy
\begin{align*}
(ii) &= \ct + 2C_1 \delta (1+\ct) 
\leq \ct + \frac{2C_1(C_0+1)c}{\log n_2} 
\leq \ct + \frac{C_3 c}{\log n_2}, \\
(iii) &= 1 + C_2 \delta(1+\ct) 
\leq 1 + \frac{C_2(C_0+1)c}{\log n_2} 
\leq 1 + \frac{C_3 c}{\log n_2},
\intertext{and}
(iv) & = 1 + \twonorm{\utplus^\parallel} 
\leq 1 + \frac{c}{4 \log n_2} 
\leq 1 + \frac{C_3 c}{\log n_2}.
\end{align*}
This is implied by
\[
\max\left\{ 2C_1(C_0+1), C_2(C_0+1), \frac{1}{4} \right\} \leq C_3.
\]
Thus, there exists an absolute constant $C_3>0$ that satisfies the above conditions. Then one can choose an absolute constant $c>0$ small enough so that the upper bound in \eqref{ineq:uparallel_aux_dist} reduces to 
\begin{equation}\label{ineq:parallelauxcloseness1}
	\twonorm{ \utplus^\parallel - \utplusaux^\parallel } 
	 \le  \bracing{=:c_{2t+1}}{\left[  \left(1+ \frac{1}{\log n_2}\right) \ct +  \frac{1}{\log n_2}    \right]}  \twonorm{\utplus^\parallel}.
\end{equation}
Thus we have shown the claimed bound for $  \twonorm{ \utplus^\parallel - \utplusaux^\parallel }  $.\\

\noindent\textbf{Part 2 (Estimating $\twonorm{ \utplus^\perp - \utplusaux^\perp }  $):}
Analogous as in the beginning of the proof, where we provided an estimate for $ \twonorm{ \utplus^\parallel - \utplusaux^\parallel } $, 
we can show that
\begin{equation*}
\twonorm{ \utplus^\perp - \utplusaux^\perp } 
\le  \frac{\twonorm{\utplushalf^\perp  - \utplushalfaux^\perp   }}{\twonorm{\utplushalf}}  +\frac{ \twonorm{\utplushalfaux - \utplushalf }}{\twonorm{\utplushalf}} \cdot \twonorm{\utplusaux^\perp }.
\end{equation*}	
By using the triangle inequality and $\Vert \utplusaux^\perp \Vert \le 1$ it follows that
\begin{equation}\label{ineq:aux36}
\twonorm{ \utplus^\perp - \utplusaux^\perp } \le  \frac{2 \twonorm{\utplushalf^\perp  - \utplushalfaux^\perp   }}{\twonorm{\utplushalf}}  +  \frac{ \twonorm{\utplushalfaux^\parallel - \utplushalf^\parallel }}{\twonorm{\utplushalf}}.
\end{equation}	
We are going to estimate the two summands individually. 
By Assumptions \eqref{ineq:closeness5} and \eqref{ineq:uparallelbound}, the first summand is upper-bounded by
\begin{align*}
\frac{2 \twonorm{\utplushalf^\perp  - \utplushalfaux^\perp   }}{\twonorm{\utplushalf}} 
&\overleq{(a)} \frac{ 2 C_1 \delta (1+\ct) \Vert \vt^{\parallel} \Vert}{\twonorm{\utplushalf}} \\
&\overleq{(b)} 2 C_1 \delta (1+\ct) \left( 1+ C_2 \delta (1+\ct)  \right) \twonorm{  \utplus  }.
\end{align*}
Moreover, we use the estimate from the inequality chain \eqref{ineq:aux35} to obtain that
\begin{equation*}
\frac{ \twonorm{\utplushalfaux^\parallel - \utplushalf^\parallel }}{\twonorm{\utplushalf}} 
\le \left( \ct + C_1 \delta (1 + \ct) \right)\left(1+ C_2 (1+\ct)\delta\right) \twonorm{ \utplus^\parallel }.
\end{equation*}	
Hence, by inserting these estimates into \eqref{ineq:aux36}, we obtain that
\begin{align*}
\twonorm{ \utplus^\perp - \utplusaux^\perp } 
&\le
\left( \ct + 3 C_1 \delta (1 + \ct) \right)\left(1+ C_2 (1+\ct)\delta\right) \twonorm{ \utplus^\parallel } \\
&\le \left(\ct + \frac{C_4 c}{\log n_2} \right) \left(1 + \frac{C_4 c}{\log n_2} \right) \twonorm{ \utplus^\parallel } \\
&= \left[ \left(1 + \frac{C_4 c}{\log n_2} \right) \ct + \frac{C_4 c}{\log n_2} \left(1 + \frac{C_4 c}{\log n_2} \right) \right] \twonorm{ \utplus^\parallel } 
\end{align*}
for some absolute constant $C_4$, where the second inequality is dervied similarly to that of \eqref{ineq:uparallel_aux_dist}. 
Since $\ct \lesssim 1$, by choosing $c$ as a small enough absolute constant so that 
\begin{equation}\label{ineq:parallelauxcloseness2}
\twonorm{ \utplus^\perp - \utplusaux^\perp } 
\le \left[  \left(1+\frac{1}{\log n_2}\right) c_{2t} + \frac{1}{\log n_2} \right] \twonorm{\utplus^\parallel} 
= c_{2t+1} \twonorm{\utplus^\parallel},
\end{equation}
Then combining \eqref{ineq:parallelauxcloseness1} and \eqref{ineq:parallelauxcloseness2} provides \eqref{ineq:parallelauxcloseness}. This finishes the proof.

\subsection{Proof of Lemma~\ref{lemma:convergence}}
\label{app:convergence}
We observe that
\begin{align*}
\Vert \utplus^{\parallel} \Vert^2 &\overeq{(a)} \frac{\Vert \utplushalf^{\parallel} \Vert^2}{\Vert \utplushalf \Vert^2}	\\
&= \frac{\Vert \utplushalf^{\parallel} \Vert^2}{\Vert \utplushalf^{\parallel} \Vert^2 + \Vert \utplushalf^{\perp} \Vert^2}	\\
&\overset{(b)}{\ge} \frac{ \alpha \Vert \vt^{\parallel} \Vert^2}{ \beta \Vert \vt^{\perp} \Vert^2 + \alpha \Vert \vt^{\parallel} \Vert^2}\\
&\overeq{(c)} \frac{ \alpha \Vert \vt^{\parallel} \Vert^2}{ \beta + \left(\alpha - \beta \right) \Vert \vt^{\parallel} \Vert^2}.
\end{align*}	
In equality $(a)$ we used the definition of $ \utplushalf$. 
Inequality $(b)$ from the inequalities \eqref{ineq:intern99_1} and \eqref{ineq:intern99_2}.
Equality $(c)$ is due to $ \Vert \vt \Vert = 1 $.
This shows the first inequality in \eqref{ineq:convergence1}, from which the second inequality can be deduced immediately.
In a similar manner we obtain that 
\begin{align*}
\Vert \utplus^{\perp} \Vert^2&=  \frac{\Vert \utplushalf^{\perp} \Vert^2}{\Vert \utplushalf \Vert^2}  \\
&= \frac{\Vert \utplushalf^{\perp} \Vert^2}{\Vert \utplushalf^{\perp} \Vert^2 + \Vert \utplushalf^{\parallel} \Vert^2}\\ 
& \le   \frac{ \beta \Vert \vt^{\perp} \Vert^2}{\beta \Vert \vt^{\perp} \Vert^2 + \alpha \Vert \vt^{\parallel} \Vert^2}\\ 
& \le \frac{\beta}{\alpha \twonorm{\vt^\parallel}^2} \cdot  \twonorm{\vt^\perp}^2,
\end{align*}
which finishes the proof.

\end{document}